\titlespacing*{\section}{0pt}{*0.3}{*0.3}
\titlespacing*{\subsection}{0pt}{*0.3}{*0.3}
\titlespacing*{\subsubsection}{0pt}{*0.1}{*0.1}
\tikzset{
    state/.style={circle, draw, minimum size=0.75cm, inner sep=1pt, font=\small},
    reward/.style={state, fill=green!30, thick},
    absorbing/.style={state, fill=gray!20},
    every edge/.style={draw, -{Latex[length=1.5mm,width=1.5mm]}}
}
\renewcommand{\boxed}[1]{\text{\fboxsep=.2em\fbox{\m@th$\displaystyle#1$}}}
\newcommand{\VaR}{\mathrm{VaR}}
\newcommand{\HH}{\mathbb{H}}
\newcommand{\BB}{\mathbb{B}}
\renewcommand{\paragraph}[1]{\noindent\textbf{#1}}
\title{Near-Optimal Second-Order Guarantees for Model-Based Adversarial Imitation Learning}
\author{Shangzhe Li \\
UNC Chapel Hill\\
Chapel Hill, NC, USA\\
\texttt{shangzhe@unc.edu} \\
\And
Dongruo Zhou \\
Indiana University Bloomington\\
Bloomington, IN, USA\\
\texttt{dz13@iu.edu} \\
\And
Weitong Zhang \\
UNC Chapel Hill \\
Chapel Hill, NC, USA\\
\texttt{weitongz@unc.edu}
}
\begin{document}
% \doparttoc % Tell to minitoc to generate a toc for the parts
% \faketableofcontents % Run a fake tableofcontents command for the partocs

\maketitle

\begin{abstract}
We study online adversarial imitation learning (AIL), where an agent learns from offline expert demonstrations and interacts with the environment online without access to rewards. Despite strong empirical results, the benefits of online interaction and the impact of stochasticity remain poorly understood. We address these gaps by introducing a model-based AIL algorithm (\texttt{MB-AIL}) and establish its horizon-free, second-order sample-complexity guarantees under general function approximations for both expert data and reward-free interactions. These second-order bounds provide an instance-dependent result that can scale with the variance of returns under the relevant policies and therefore tighten as the system approaches determinism. Together with second-order, information-theoretic lower bounds on a newly constructed hard-instance family, we show that \texttt{MB-AIL} attains minimax-optimal sample complexity for online interaction (up to logarithmic factors) with limited expert demonstrations and matches the lower bound for expert demonstrations in terms of the dependence on horizon $H$, precision $\epsilon$ and the policy variance $\sigma^2$. Experiments further validate our theoretical findings and demonstrate that a practical implementation of \texttt{MB-AIL} matches or surpasses the sample efficiency of existing methods.
\end{abstract}

\section{Introduction}
Imitation learning (IL) focuses on learning a policy that mimics expert behavior for sequential decision-making using demonstrations. Unlike reinforcement learning (RL), IL learns from expert trajectories without the reward signals. This advantage has drawn substantial attention to IL, with demonstrated successes across diverse real-world applications, including robot learning~\citep{chi2024diffusionpolicy,shi2023waypoint} and autonomous driving~\citep{couto2021generative,cheng2024pluto} where the reward signal is hard to design or implicit to learn.

Among the many variants of imitation learning, methods largely fall into two families: behavioral cloning (BC) and adversarial imitation learning (AIL). In particular, BC applies supervised learning to fit the expert policy directly from demonstrations~\citep{florence2022implicit,chi2024diffusionpolicy}, while AIL employs an adversarial framework to align the state-action distributions of expert and learner policies~\citep{ho2016generative,garg2021iq}. Importantly, AIL can also leverage online, reward-free interactions during training. This online interaction will allow the agent to collect the environment dynamics information in order to better align with the behavioral policy. To explain empirical findings that AIL often outperforms BC in low expert demonstration regimes~\citep{ho2016generative,garg2021iq},~\citet{Xu2022UnderstandingAI} analyzed this phenomenon under a specific MDP structure, relating it to the expert sample complexity results of~\citet{rajaraman2020toward}. Naturally, we have the following question: 
\begin{center}
\emph{What is the tight characterization of the benefits of online interaction in imitation learning?}
\end{center}

In order to obtain a tight characterization, two aspects need to be addressed. First, a line of research has extended the \emph{model-based framework} in RL~\citep{sun2019model, wang2024model} to AIL, both theoretically~\citep{xu2023provably} and empirically~\citep{baram2016model,kolev2024efficient,yin2022planning,li2025reward}, partially demonstrating that model-based approaches can yield superior sample efficiency. However, existing theoretical analyses for model-based IL have not yet clarified why such approaches are beneficial to AIL under the general setting. Second, it has been observed that the \emph{stochasticity} of the experts plays an important role in achieving tight sample complexity. Existing works analyze BC and AIL under deterministic versus stochastic experts and have established a gap: for deterministic experts,~\citet{rajaraman2020toward,foster2024behavior} obtain an $\tilde \cO(\epsilon^{-1})$ expert sample complexity, whereas for stochastic experts, existing results typically yield $\tilde \cO(\epsilon^{-2})$~\citep{foster2024behavior,xu2024provably}. Nevertheless, a complete understanding of how stochasticity affects the sample complexity of AIL remains open.

In this paper, we answer this question affirmingly by presenting a model-based AIL algorithm (\texttt{MB-AIL}) with a corresponding horizon-free, second-order analysis in general function approximation to understand the benefits of online interaction and the impact of stochasticity to the sample complexity to both expert demonstration and online interaction. A second-order bound scales with the variance of the total return and can be small as the system approaches determinism, providing a powerful tool for us to demonstrate the effect of stochasticity. Combining with a lower bound on the minimax sample complexity for both expert and online interaction, we understand the benefit and barrier of the online interaction statically. In particular, our contributions can be listed as follows:
\begin{itemize}[leftmargin=*]
\item We present a model-based adversarial imitation learning algorithm (\texttt{MB-AIL}) inspired by the fact that the policy class $\Pi$ can be split into the reward class $\cR$ and the model class $\cP$. \texttt{MB-AIL} decomposes this reward learning and model learning procedure by estimating the reward adversarially using expert demonstration using a no-regret algorithm and learn the model using a simple MLE estimator for the model and an optimistic exploration strategy for online interaction.
\item We present a second-order, horizon-free bound for \texttt{MB-AIL} under general function approximation on sample complexity for both of the expert demonstration and online interactions. This result reveals the benefit of the online interaction and the impact of the variance to the sample complexity.  At the core of our analysis is to decompose the regret into the error of reward and the error of the model estimation. Then two fine-grained variance-depend analysis is conducted in both of the part for a horizon-free second-order regret analysis in model-based RL. 
\item We present a information-theoretic lower bound for both of the expert demonstration and online interaction on a newly designed hard instance for imitation learning. The lower bound reveals that the expert demonstration can help with estimating the reward while the online interaction is tied with learning the transition kernel. This lower bound also suggests that the \texttt{MB-AIL} is minimax-optimal in terms of the online interaction up to log factors when given limited expert demonstrations. While \texttt{MB-AIL} leaves only a $\log |\cR|$ gap in terms of the expert demonstrations. 
\end{itemize}
Besides the theoretical advancements in understanding the online interaction in imitation learning, we also present a practical implementation of the proposed \texttt{MB-AIL} algorithm to validate our theoretical findings. \texttt{MB-AIL} is also reported with better sample complexity in a various of environment.

\paragraph{Notation.}
In this paper, we use plain letters such as $x$ to denote scalars, lowercase bold letters such as $\xb$ to denote vectors, and uppercase bold letters such as $\Xb$ to denote matrices. Functions are denoted by bold symbols such as $\bbf$. Sets and classes are denoted by the calligraphic font such as $\cF$. For a vector $\xb$, $\|\xb\|_2$ denotes its norm $\ell_2$. For a positive integer $N$, we use $[N]$ to denote $\{1,2,\ldots,N\}$. We use standard asymptotic notations, including $O(\cdot), \Omega(\cdot), \Theta(\cdot)$, and $\tilde O(\cdot),\tilde\Omega(\cdot), \tilde\Theta(\cdot)$ which will hide logarithmic factors. For two distribution $p$ and $q$ ,we define the Hellinger distance as  $\HH^2(p,q)=\int\left(\sqrt{{\text{d}p}/{\text{d}w}}-\sqrt{{\text{d}q} / {\text{d}w}}\right)^2\text{d}w$. We denote the $d$-dimensional binary set by $\BB^d = \{\pm 1\}^d$ that is, the set of all $d$-dimensional vectors whose entries are either $+1$ or $-1$.

\section{Related Works}
Below we provide a comprehensive review of the related work on theoretical understanding of imitation learning. We defer a more detailed discussion on the empirical implementation on imitation learning, theoretical background of reinforcement learning into Appendix~\ref{sec:addition-related-works}.

\paragraph{Theoretical Understanding in Imitation Learning.} Imitation learning can be interpreted as a variant of RL in which the agent learns from expert demonstrations instead of reward signals. Early works~\citep{10.1145/1015330.1015430,NIPS2007_ca3ec598,sun2019provably,rajaraman2020toward,chencomputation} assumes either known transition dynamics or access to an exploratory data distribution. More recent studies consider the unknown transitions and provide theoretical guarantees on the sample complexity of expert data under various structural assumptions. { For example,~\citet{shani2022online} provide an $\cO(\sqrt{K})$ regret upper bound for on-policy AIL, while~\citet{chen2024provably} provide a convergence guarantee for off-policy AIL in tabular MDPs.  \citet{moulin2025optimistically} investigates the state-only imitation learning setting under discounted linear MDPs and provides an expert sampling complexity upper bound $\tilde\cO(W^2_{\max}(1-\gamma)^{-2}\epsilon^{-2})$ and an interaction complexity upper bound $\tilde\cO(d^3(1-\gamma)^{-4.5}\epsilon^{-2})$.} \citet{xu2023provably} study the tabular MDP setting with a deterministic expert policy, obtaining an expert sample complexity of $\tilde \cO(H^{3/2}|\cS|/\epsilon)$ and interaction complexity of $\tilde \cO(H^{3}|\cS|^2|\cA|/\epsilon)$. 
\citet{vianoimitation} obtain $\tilde \cO(H^{2}d/\epsilon^2)$ expert sample complexity with the same $\tilde \cO(H^{4}d^3/\epsilon^2)$ interaction complexity for linear MDPs.
\citet{liu2021provably} establish $\tilde \cO(H^{3}d^2/\epsilon^2)$ expert sample complexity and $\tilde \cO(H^{4}d^3/\epsilon^2)$ interaction complexity for linear mixture MDPs. For general function approximation,~\citet{xu2024provably} provide bounds of $\tilde \cO(H^{2}\log(\cN(\cR))/\epsilon^2)$ on expert sampling complexity and $\tilde \cO((H^{4}d_{\text{GEC}}\log(\cN(\cR)\cN(\cQ))+H^2)/\epsilon^2)$ on interaction complexity. \looseness=-1

\paragraph{Information-Theoretic Lower Bounds for Imitation Learning.} Several information-theoretic minimax lower bounds have been established for the number of expert policies. In the tabular MDP with a deterministic expert,~\citet{rajaraman2020toward} prove a lower bound of $\tilde\Omega(|\cS|H^2/\epsilon)$. \citet{foster2024behavior} further derive lower bounds of $\tilde\Omega(H/\epsilon)$ for deterministic experts and $\tilde\Omega(\epsilon^{-2})$ for stochastic experts. Similar lower bounds has been studied in  \citet{Xu2022UnderstandingAI} with a minimax-optimal algorithm under a special class of MDPs. However, none of these information-theoretic lower bound targets for the sample complexity of online interaction. {\citet{moulin2025optimistically} provides a lower bound for expert sampling complexity $\Omega(W^2_{\max}(1-\gamma)^{-2}\epsilon^{-2})$ and a lower bound for interaction complexity $\Omega(d(1-\gamma)^{-2}\epsilon^{-2})$ for all imitation learning algorithms under discounted linear MDPs.} We record part of these existing results in Table~\ref{tab:results}.
\begin{table}[t]
\centering
\caption{\small \textbf{Summary of Existing Results.} We summarize existing upper and lower bound results on imitation learning and compare them with our results. \emph{Stoch. Expt?} indicates that if the presented analysis can deal with the stochastic expert or is for deterministic expert only. \emph{Demonstration} indicates the number of expert demonstration trajectories required for achieving $\epsilon$-optimal policy. \emph{Interaction} indicates the rounds of online interaction which is not required for the algorithms marked with `Offline BC'. Algorithms that may require an online interaction without specifying a bound is left with `--'. The sample complexity is evaluated under the bounded cumulative reward $\sum^H_{h=1} r_h \le 1$ and the results with bounded reward assumption $r_h \le 1$ are translated by letting $\epsilon \leftarrow H\epsilon$ in the original results and marked with ($^*$). Dimension $d_E$ and model class $\log |\Pi|$, $\log |\cR|$, $\log |\cQ|$, $\log |\cP|$ in general function approximations inherits their original definitions in the paper, and it usually corresponds to the dimension of linear function when reduced to linear function approximations. { Variance $\sigma^2$ refers to the variance of the collected return, as defined in detail in the paper, and is always bounded by $H^2$. In contrast, $\sigma^2_E$ denotes the variance of the collected return under the expert policy.} Our results are highlights in boldface and in cyan background. 
} 
\resizebox{\linewidth}{!}{
\begin{tabular}{c|c|ccc}
\toprule
\textbf{Setting} & \textbf{Algorithm} & 
\textbf{Stoch. Expt?} & \textbf{Demonstration} & \textbf{Interaction}\\
\midrule
&  \citet{rajaraman2020toward} & $\times$ & $\tilde\cO({H|\cS|}{\epsilon^{-1}})^*$ & Offline BC \\
& TV-AIL~\citep{Xu2022UnderstandingAI} \tablefootnote{TV-AIL considers a special class of tabular MDP referred to RBAS-MDP.} & $\checkmark$ & $\tilde\cO({|\cS|}{\epsilon^{-2}})^*$ & -- \\
Tabular MDPs & MB-TAIL~\citep{xu2023provably} & $\times$ & $\tilde\cO({H^{1/2}|\cS|} {\epsilon^{-1}})^*$ & $\tilde\cO({H|\cS|^2|\cA|}{\epsilon^{-2}})^*$\\ 
& Lower Bound~\citep{rajaraman2020toward} & $\checkmark$ & $\tilde\Omega({H|\cS|}{\epsilon^{-1}})^*$ & Offline BC\\
& Lower Bound~\citep{Xu2022UnderstandingAI} \tablefootnote{The lower bound in~\citet{Xu2022UnderstandingAI} is only for the TV-AIL algorithm thus not minimax lower bound.} & $\checkmark$ & $\tilde\Omega({|\cS|}{\epsilon^{-2}})^*$ & -- \\
\midrule
Linear MDPs & BRIG~\citep{vianoimitation} & $\checkmark$ & $\tilde\cO({d}{\epsilon^{-2}})^*$ & $\tilde{\cO}({H^2d^3}{\epsilon^{-2}})^*$\\
Linear Mix. MDPs & OGAIL~\citep{liu2021provably} & $\checkmark$ & $\tilde\cO({Hd^2}{\epsilon^{-2}})^*$ & $\tilde{\cO}({H^2d^3}{\epsilon^{-2}})^*$\\ \midrule
& \citet{foster2024behavior} & $\checkmark$ &  $\tilde\cO({\sigma_E^2\log|\Pi|}{\epsilon^{-2}})$ & Offline BC \\
General  & \citet{foster2024behavior} & $\times$ &  $\tilde\cO({\log|\Pi|}{\epsilon^{-1}})$ & Offline BC \\
& OPT-AIL~\citep{xu2024provably} & $\checkmark$ & $\tilde\cO({\log|\cR|}{\epsilon^{-2}})^*$ & $\tilde \cO({H^{2}d_E\log(|\cR| |\cQ|)}{\epsilon^{-2}})^*$\\
Function & \cellcolor{cyan!20}\textbf{MB-AIL (Ours)} & \cellcolor{cyan!20}$\checkmark$ &\cellcolor{cyan!20} $\tilde\cO({\sigma_E^2\log(|\cR|)}{\epsilon^{-2}})$ & \cellcolor{cyan!20}$ \tilde\cO({\sigma^2d_E\log(|\cP|)}{\epsilon^{-2}})$\\ 
& Lower Bound~\citep{foster2024behavior} & $\times$ & $\tilde\Omega(\epsilon^{-1})^*$ & Offline BC \\
Approximation & Lower Bound~\citep{foster2024behavior} & $\checkmark$ & $\tilde\Omega(\sigma_E^2\epsilon^{-2})$ & Offline BC \\ 
& \cellcolor{cyan!20}\textbf{Lower Bound (Ours)} & \cellcolor{cyan!20}$\checkmark$ & \cellcolor{cyan!20}$\tilde\Omega(\sigma_E^2\epsilon^{-2})$ & \cellcolor{cyan!20}$\tilde\Omega(\sigma^2(\log|\cP|)^2\exp(-\!N)\epsilon^{-2} )$\tablefootnote{{ $N$ is the number of expert demonstrations and can be small.}}\\ \bottomrule
\end{tabular}
}
\vspace{-1.5em}
\label{tab:results}
\end{table}
\section{Preliminaries}
\label{sec:prelim}
\paragraph{Time-homogeneous Episodic MDPs.}
We consider the time-homogeneous episodic Markov decision processes (MDPs) denoted by $\cM = (\cS,\cA, H, P^\star, r)$ by convention. Here, $\cS$ and $\cA$ are state and action spaces, $H$ is the length of each episode, $P^\star: \cS \times \cA \to \Delta(\cS)$ is the transition probability from state $s$ to $s'$ with action $a$. $\cR \ni r: \cS \times \cA \to [0, 1]$ is the reward function. For any policy $\pi$ and reward $r$, the state-action value function $Q^{\pi}_{h;P^\star;r}(s,a)$ on stage $h$ is defined as:
\begin{align}
Q^{\pi}_{h;P^\star;r}(s,a) &= \textstyle{\EE\left[\sum_{t=h}^H r(s_{t},  a_{t})\middle|s_h=s,a_h=a, s_{t+1}\sim P^\star(\cdot|s_{t},a_{t}),a_{t+1} \sim \pi(s_{t+1})\right]}, \notag
\end{align}
and the value function is $V_{h;P^\star;r}^{\pi}(s) = Q^{\pi}_{h;P^\star;r}(s,\pi_h(s))$. The optimal values are defined by:
\begin{align}
\textstyle{V_{h;P^\star;r}^{\star}(s) = \max_{\pi}V_{h;P^\star;r}^{\pi}(s); Q^{\star}_{h;P^\star;r}(s,a) = \max_{\pi}Q^{\pi}_{h;P^\star;r}(s,a).} \notag 
\end{align} 
For any value function $V : \cS \to \RR$ and stage $h \in [H]$, we define the first-order Bellman operator associated with policy $\pi$, reward function $r$, and transition model $P^\star$ as  
\begin{align*}  
    (\cT_h V_{h+1; P^\star;r}^\pi)(s)   
    &= r(s, a) + \EE_{s' \sim P^\star(\cdot \mid s,a)} \big[ V_{h+1;P^\star;r}^\pi(s')\big] := r(s, a) + [P^\star V_{h+1; P^\star;r}^\pi](s,a),  
\end{align*}  
The conditional variance of the value function under state-action pair $(s,a)$ is defined as  
\begin{align}  
    [\VV_{P^\star} V](s,a)   
    &= [P^\star V^2](s,a) - \big( [P^\star V](s,a) \big)^2.\notag
\end{align}  
The variance of the cumulative reward collected by policy $\pi$ under reward function $r$ is defined by:  
\begin{align} \textstyle{
\VaR_{r}^{\pi} = \EE\left[\Big(\sum_{h=1}^H r(s_h, a_h)\Big)^2\right] -\Big(V_{1;P^\star;r}^\pi(s_1)\Big)^2, \notag  
}\end{align}  
the expectation is taken with respect to the trajectory distribution induced by the policy $\pi$ and transition kernel $P^\star$, i.e., $a_h \sim \pi_h(\cdot \mid s_h)$ and $s_{h+1} \sim P^\star(\cdot \mid s_h, a_h)$. In this paper, we assume the initial state is fixed at $s_1$. This can be generalized to a distribution of the initial state w.l.o.g..  

\paragraph{Online Adversarial Imitation Learning.} In this paper we consider the online adversarial imitation learning with offline demonstration. To be specific, the agent is given a \emph{offline dataset} $\cD_E = \{s_h^i, a_h^i\}_{h \in [H]}^{i \in [N]}$ containing $N$ trajectories rolled out by the \emph{expert policy} $\pi^E$. The agent is allowed to \emph{online interact} with the environment. Throughout the offline dataset and online interactions, the agent is performing in a \emph{reward-free} paradigm such that it has no access to the collected reward in either offline demonstrations and online explorations. 

The goal of the adversarial imitation learning (AIL) is to output a policy $\pi$ such that the policy $\pi$ and $\pi^E$ have similar behavior over the reward functions, usually defined by the typical AIL objective by
\begin{align}
\label{eq:ail-obj}
\min_{\pi} \max_{r \in \cR}~\EE_{s} [V_{1;P^\star;r}^{\pi^E}(s) - V_{1;P^\star;r}^{\pi}(s)],
\end{align}
where the expectation is taken over the initial state distribution. In the online setting, we further define the regret of this AIL objective over $K$ rounds by
\begin{align}
\label{eqn:regret}
\text{Regret}(K) = \max_{r \in \cR}\textstyle{\sum_{k=1}^K}\EE_{s} \left[V_{1;P^\star;r}^{\pi^E}(s) - V_{1;P^\star;r}^{\pi^k}(s)\right].
\end{align}

\paragraph{Model-based reinforcement learning with general function approximation.} We consider the model-based RL with general function approximation. In particular, we assume the function class of the transition kernel as $ P\in \cP$ and define the $\ell_p$ Eluder dimension of a function class%\todo{need citation}
\begin{definition}[$\ell_p$ Eluder dimension, \citealt{10.5555/2999792.2999864}]
$\text{DE}_p(\cG, \cX, \epsilon)$ is the Eluder dimension for $\cX$ with function class $\Phi$, when the longest $epsilon$-independent sequence $x_1, \cdots, x_L \subset \cX$ enjoys the length $L$ less than $\text{DE}_p(\cG, \cX, \epsilon)$. In other words, there exists function $g \in \cG$ such that for all $t \le \text{DE}_p(\cG, \cX, \epsilon)$ we have $\sum_{l=1}^{t-1}|g(x_l)|^p \le \epsilon^p$ and $|g(x_t)|^p \ge \epsilon^p$.
Through out this paper, we work with the $\ell_1$ Eluder dimension defined by $d_E = \text{DE}_1(\cG, \cS \times \cA, \epsilon)$ defined over the Hellinger distance class $\cG: \{(s, a) \to \HH^2(P^\star(s, a) \parallel P(s, a)): P\in \cP\}$ and $\epsilon = 1 / (KH)$. %\todo{need to define the Hellinger distance (check notation, sec 1)}
\label{def:eluder-dim}
\end{definition}
For the analysis under general function approximation, we leverage the following definitions on covering and bracketing numbers
\begin{definition}[$\epsilon$-covering number]
\label{def:covering-num}
For a function class $\mathcal{F} \subseteq (\mathcal{X} \to \mathbb{R})$, the $\epsilon$-covering number of $\mathcal{F}$, denoted $\mathcal{N}_\epsilon(\mathcal{F})$, is the smallest integer $n \in \mathbb{N}$ such that there exists a subset $\mathcal{F}' \subseteq \mathcal{F}$ of size $|\mathcal{F}'| = n$ such that for all $f \in \mathcal{F}$, there exists $f' \in \mathcal{F}'$ with $\sup_{x \in \mathcal{X}} |f(x) - f'(x)| \leq \epsilon.$
\end{definition}
\begin{definition}[Bracketing number, {\citealt{geer2000empirical}}]
\label{def:bracketing-num}
Let $\mathcal{G}$ be a function class $\mathcal{X} \to \mathbb{R}$. Given two functions $l, u$ such that $l(x) \le u(x)$ for all $x \in \mathcal{X}$, the bracket $[l, u]$ is defined as the set of functions $g \in \mathcal{G}$ satisfying $l(x) \le g(x) \le u(x)$ for all $x \in \mathcal{X}$. We call $[l, u]$ an $\epsilon$-bracket if $\|u - l\| \le \epsilon$.
The $\epsilon$-bracketing number of $\mathcal{G}$ with respect to a norm $\|\cdot\|$, denoted by $\mathcal{N}_{[]}(\epsilon, \mathcal{G}, \|\cdot\|)$, is the minimum number of $\epsilon$-brackets needed to cover $\mathcal{G}$.
\end{definition}

We build our analysis based on the bounded total reward assumption.
\begin{assumption}[Bounded cumulative rewards]\label{assumption:bounded_reward}
For any reward function $r \in \cR$, the cumulative reward collected by any possible trajectory $\{(s_h, a_h)\}_h$ is bounded by $0 \le \sum_h r(s_h, a_h) \le 1$.
\end{assumption}
Up to rescaling a factor $H$, Assumption~\ref{assumption:bounded_reward} generalizes the standard reward scale assumption where $r_{h} \in [0,1]$ for all $h\in [H]$. This assumption also ensures that the value function $V_{h;P^\star;r}^{\pi}(s)$ and action-value function $Q^{\pi}_{h;P^\star;r}(s,a)$ belong to the interval $[0,1]$. 

We also make the realizability assumption for the reward, policy and the transition kernel.
\begin{assumption}  
\label{assumption:realizability}
The ground-truth reward function, transition dynamics, and optimal policy are \emph{realizable}, i.e., $r^\star \in \cR,  P^\star \in \cP, \pi^\star \in \Pi$. 
\end{assumption}

These assumptions are common in the literature on model-based reinforcement learning and imitation learning, as they ensure that the learning problem is well-specified.

\section{Model-based Adversarial Imitation Learning}

\begin{algorithm}[t]
\caption{Model-based Adversarial Imitation Learning (\texttt{MB-AIL})}
\label{alg:mbail}
\begin{algorithmic}[1]
\REQUIRE Number of iterations $K$,  confidence radius $\beta$, expert dataset $\cD_E$, $|\cD_E| = N$
\REQUIRE Policy class $\Pi$, reward class $\mathcal{R}$, model class $\mathcal{P}$
\STATE Let $\pi^0(\cdot|s)=\text{Unif.}(\cA)$, $r^0(s,a)=0$ for all $(s, a) \in \cS \times \cA$, $\cD_0 = \emptyset$
\FOR{$k = 1, 2, \dots, K$}
\STATE Collect $\tau_{k-1} = \{s_h^{k-1}, a_h^{k-1}\}_h$ using policy $\pi_{k-1}$, update $\cD_k = \cD_{k-1} \cup \{\tau_{k-1}\}$ \label{ln:3}
\STATE Compose the value difference loss $\cL_{k-1}(r) = \sum_{h} r(s_h^{k-1}, a_h^{k-1}) - \tfrac{1}{N} \sum_{n, h} r(s_h^n, a_h^n)$\label{ln:r}
\STATE Obtain reward function $r^k$ by running a no-regret algorithm to solve the online optimization problem with observed loss $\{\cL_i(r)\}_{i=0}^{k-1}$ up to an optimization error tolerance $\epsilon_{\text{opt}}^r$ \label{ln:opt}
\STATE Construct a version space $\mathcal{\widehat P}^k$:
$$\textstyle{\mathcal{\widehat P}^k=\left\{P\in\mathcal{ P}:\sum_{(s,a,s')\in\mathcal{D}^k} \log P(s'|s,a)\ge \max_{\tilde P\in\mathcal{P}}\sum_{(s,a,s')\in\mathcal{D}^k} \log \tilde P(s'|s,a) -\beta\right\}}$$ \label{ln:set}
\STATE Set $(\pi^k, P^k)\leftarrow\text{argmax}_{\pi\in\Pi,P\in\mathcal{\widehat P}^k} V^\pi_{1;P;r^k}(s_1)$. \label{ln:rl}
\ENDFOR
\ENSURE $\bar \pi = \text{Unif} (\{\pi^k\}_{k=1}^K)$
\end{algorithmic}
\end{algorithm}

In this section, we introduce our proposed algorithm Model Based Adversarial Imitation Learning (\texttt{MB-AIL}). The full procedure is presented in Algorithm~\ref{alg:mbail}. The algorithm takes as input an expert dataset $\mathcal{D}_E$ consisting of multiple expert trajectories; three function classes $\Pi$, $\mathcal{R}$, and $\mathcal{P}$ for policy, reward, and transition model function approximations, respectively; and various hyperparameters, including the confidence radius for the version space, and the number of iterations. The algorithm begins by initializing the policy with a uniform distribution and setting the reward function to zero. In each iteration, it first collects a new trajectory $\tau_{k-1}$ using the current policy $\pi_{k-1}$ described in Line~\ref{ln:3}. Based on our hypothesis that the learning the expert policy $\pi^E$ within the policy class $\Pi$ can be decomposed into two procedure in learning the reward $r \in \cR$ and learning the model $P \in \cP$ separately, which is described as follows. 

\paragraph{Procedure A. Adversarial Reward Learning for $r \in \cR$.} According to the adversarial online imitation learning objective in Eq.~\ref{eq:ail-obj}, the reward function must be optimized at each iteration $k$ to maximize the value gap between the expert policy $\pi^E$ and the behavioral policies $\pi^i$ generated in previous iterations $i < k$. Therefore, the reward optimization aims to maximize the adversarial objective $\max_r V_{1, P^\star, r}^{\pi^E} - V_{1, P^\star, r}^{\pi^k}$ by minimizing the following empirical loss $\cL_{k-1}(r)$ with respect to the reward function $r \in \cR$
\begin{align}
\cL_{k-1}(r) = \sum_{h} r(s_h^{k-1}, a_h^{k-1}) - \tfrac{1}{N} \sum_{n, h} r(s_h^n, a_h^n) := \hat V_{1, P^\star, r}^{\pi^{k-1}}(s_1) - \hat V_{1, P^\star, r}^{\pi^E}(s_1),
\end{align}
where we denote $\hat V(\cdot)$ as the empirical estimation of $V(\cdot)$. Since the series of loss function $\{\cL_i\}_{i=0}^{k-1}$ is collected by an adversarial policy $\{\pi_i\}_{i=0}^{k-1}$ that is trying to maximize $V_{1, P^\star, r_k}^{\pi^k}(s_1)$, an no regret online optimization must be called in Line~\ref{ln:opt} to output the reward $r_k$. Over the $K$ rounds, the optimal solution for this reward learning is denoted by $\max_{r \in \cR} \sum_{k} \left(\hat V_{1, P^\star, r}^{\pi^{k}}(s_1) - \hat V_{1, P^\star, r}^{\pi^E}(s_1)\right)$ and thus the optimization error is denoted by 
\begin{align}
\epsilon_{\text{opt}}^r := \tfrac{1}{K} \max_{r \in \cR} \sum_{k} \left(\hat V_{1, P^\star, r_k}^{\pi^{k}}(s_1) - \hat V_{1, P^\star, r_k}^{\pi^E}(s_1)\right) - \left(\hat V_{1, P^\star, r}^{\pi^{k}}(s_1) - \hat V_{1, P^\star, r}^{\pi^E}(s_1)\right).
\label{eq:eps-opt}
\end{align}
We note that Follow-the-Regularized-Leader~\citep{shalev2007convex} can be used as a practical example of the no-regret online optimization algorithm and can obtain the optimization error $\epsilon_{\text{opt}}^r = \cO(\sqrt{K})$ optimization error in this adversarial reward learning procedure. Such an optimization-based reward learning method is also leveraged in~\citet{xu2024provably} where they assume the optimization is based on the individual reward function in each step $\cR_h$. Ours by contrast assumes the reward function is time-homogeneous and therefore the optimization is based on all reward function $r \in \cR$. Despite this difference, we note that the major difference of our approach is that \texttt{MB-AIL} is a model-based approach and is provably more efficient to capture the online interactions with carefully designed learning and analysis, as we will describe in detail below.

\paragraph{Procedure B. Model and Policy Learning for $P \in \cP$.} In parallel with learning the reward, in Line~\ref{ln:set}, we rely on an simple maximum likelihood estimator to learn the transition kernel information $P$ within a version space controlled by parameter $\beta$ described by 
$$\textstyle{\mathcal{\widehat P}^k=\left\{P\in\mathcal{ P}:\sum_{(s,a,s')\in\mathcal{D}^k} \log P(s'|s,a)\ge \max_{\tilde P\in\mathcal{P}}\sum_{(s,a,s')\in\mathcal{D}^k} \log \tilde P(s'|s,a) -\beta\right\}},$$
where the candidate transition kernel is of at most $\beta$ smaller than the `optimal' version $\tilde P$ in terms of their log likelihood. In practice, this procedure described in Line~\ref{ln:set} can be mimic by training a series of world models by maximizing their likelihood.

Finally, with the obtained reward $r_k \in \cR$ and the model $\hat \cP^k \subset \cP$, the updated policy is obtained by the policy that obtains the maximum cumulative reward with the optimistic estimation over $P \in \hat \cP^k$, as described in Line~\ref{ln:rl} in Algorithm~\ref{alg:mbail}. We note that similar optimistic-based MLE approach has been widely applied in the theoretical literature on model-based RL~\cite{Liu2023OptimisticMA,zhan2022pac,wang2024model}
% \todo{Not only in Wang et al, Qinghua Liu also have many versions of these OMLEs. Check Wang for more comments} 
but with a fixed reward function $r$. However, in \texttt{MB-AIL}, the ground truth reward is unknown so that the policy optimization should be based on the current reward $r_k$ estimated in Line~\ref{ln:opt}. We also note that this procedure can be efficiently implemented by ensembling the estimation of several models in practice~\cite{NEURIPS2023_71b52a5b,zhang2024uncertainty,janner2019trust}. \looseness=-1
% \todo{Chenlu Ye have a corruption robust offline RL with this ensemble, my uncertainty-aware reward free RL. please list it there} 

\section{Theoretical Analysis}\label{sec:theoretical-analysis}
We present a detailed theoretical analysis of Algorithm~\ref{alg:mbail}. In particular, we establish a horizon-free, second order bound under general function approximation for both expert demonstrations and the online interactions. 
We further present a information-theoretic minimax lower bound to statistically understand the boundary of the benefit for online interactions and the expert dataset, Based on the intuition that the policy class $\Pi$ can be decomposed into the reward $\cR$ and the model $\cP$. 

\subsection{Upper Bounds for Model-based Adversarial Imitation Learning (\texttt{MB-AIL})}
In this subsection, we establish an upper bound on the average regret of our proposed algorithm. We first start with the regret analysis for the online adversarial imitation learning. % \todo{We only need the final infinite version of the result, move old theorem 5.1 with finite P and R into appendix.}
\begin{theorem}
For any $\delta \in (0,1)$, let $\beta = 7 \log \left( K\cN_{\cP}/\delta \right)$.
Under Assumptions~\ref{assumption:bounded_reward} and~\ref{assumption:realizability}, if FTRL~\citep{shalev2007convex} is employed as the no-regret algorithm in Line~\ref{ln:opt} in Algorithm~\ref{alg:mbail}, the averaged adversarial imitation learning regret defined in~\eqref{eqn:regret} for \texttt{MB-AIL} satisfies:
\begin{align*}
\frac{\text{Regret}(K)}{K} 
&= \frac1K \max_{r \in \cR} \sum_{k=1}^K 
\big[ V_{1;P^\star;r}^{\pi^E}(s) - V_{1;P^\star;r}^{\pi^k}(s) \big] \\
&\le \tilde \cO\left(\frac1K\sqrt{\sum_{k=1}^{K} \VaR_{k} d_E \log\frac{\cN_{\cP}}{\delta}}\!+\!\frac1K\sqrt{\sum_{k=1}^K \VaR_{k}\log\frac{\cN_\cR}{\delta}} + \frac{d_E \log \tfrac{\cN_{\cP}}{\delta}\! +\! \log \tfrac{\cN_{\cR}}{\delta}}{K}\right)\\
&\quad+ \tilde \cO\left(\sqrt{\frac{1}{N}\VaR_E \log \frac{\cN_{\cR}}{\delta}} + \frac{\log \tfrac{\cN_{\cR}}{\delta}}{N} + \frac{1}{\sqrt{K}}\right),
\end{align*}
where $\tilde \cO(\cdot)$ hides the polynominal logarithmic factors on $K, H, N$. $d_E$ is the eluder dimension defined based on Hellinger distance as described in Definition~\ref{def:eluder-dim}. We denote $\VaR_k = \max_{r \in \cR} \VaR_{r}^{\pi^k}$ on the maximum variance of the cumulative return for policy $\pi^k$, $\VaR_E = \max_{r \in \cR} \VaR_{r}^{\pi^k}$ is the short hand for the maximum variance collected by the expert policy $\pi^E$.
$\cN_{\cP}$ is the bracketing number for the model class defined in Definition~\ref{def:bracketing-num} while $\cN_{\cR}$ is the covering number for the reward class in Definition~\ref{def:covering-num}
% \todo{Merge the definition of covering number and bracketing number, move a concise version to Section 3 and use unique notation}.
\label{thm:ub}
\end{theorem}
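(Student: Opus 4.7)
The plan is to bound, for each $r \in \cR$ and each $k$, the one-step gap $V^{\pi^E}_{1;P^\star;r}(s_1) - V^{\pi^k}_{1;P^\star;r}(s_1)$ by inserting the algorithm's reward $r^k$ and model $P^k$ so that it splits into a \emph{reward-learning} part and a \emph{model-learning} part:
\begin{align*}
V^{\pi^E}_{1;P^\star;r} - V^{\pi^k}_{1;P^\star;r}
&= \underbrace{\big[V^{\pi^E}_{1;P^\star;r}\! -\! V^{\pi^E}_{1;P^\star;r^k}\big] + \big[V^{\pi^k}_{1;P^\star;r^k}\! -\! V^{\pi^k}_{1;P^\star;r}\big]}_{\text{reward error}} + \underbrace{\big[V^{\pi^E}_{1;P^\star;r^k} - V^{\pi^k}_{1;P^\star;r^k}\big]}_{\text{optimism + model error}}.
\end{align*}
A high-probability MLE argument using $\beta = \Theta(\log(K\cN_\cP/\delta))$ gives $P^\star \in \widehat{\cP}^k$ for all $k$, so by the optimistic choice in Line~\ref{ln:rl} the last bracket is upper-bounded by $V^{\pi^k}_{1;P^k;r^k}(s_1) - V^{\pi^k}_{1;P^\star;r^k}(s_1)$, i.e.\ a pure model-mismatch gap along the on-policy rollout of $\pi^k$.

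\textbf{Reward side.} For the reward error, I would first invoke the FTRL guarantee on the observed losses $\cL_{i}(r) = \hat V^{\pi^i}_{1;P^\star;r}(s_1) - \hat V^{\pi^E}_{1;P^\star;r}(s_1)$ to pay an optimization error of order $\epsilon^r_{\mathrm{opt}} = O(\sqrt{K})$, which after division by $K$ yields the $1/\sqrt{K}$ term. I then move from empirical to true value functions by a Bernstein/Freedman inequality applied uniformly over an $\epsilon$-cover of $\cR$. For each $k$, the per-trajectory martingale difference $\hat V^{\pi^k}_{1;P^\star;r}(s_1) - V^{\pi^k}_{1;P^\star;r}(s_1)$ has conditional variance at most $\VaR_r^{\pi^k} \le \VaR_k$ under Assumption~\ref{assumption:bounded_reward}, giving $\sqrt{\sum_k \VaR_k \log(\cN_\cR/\delta)}$ plus a lower-order $\log(\cN_\cR/\delta)$ term; the analogous concentration over the $N$ i.i.d.\ expert trajectories produces $\sqrt{\VaR_E \log(\cN_\cR/\delta)/N}$ and the $\log(\cN_\cR/\delta)/N$ tail.

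\textbf{Model side.} For the model-error term I would combine three ingredients. First, the standard in-sample MLE inequality gives, with high probability, $\sum_{i<k} \EE_{(s,a)\sim d^{\pi^i}_{P^\star}} \HH^2(P^\star(\cdot\mid s,a),P(\cdot\mid s,a)) \lesssim \log(K\cN_\cP/\delta)$ uniformly over $P \in \widehat{\cP}^k$. Second, the $\ell_1$ Eluder-dimension pigeonhole of Definition~\ref{def:eluder-dim}, applied to the Hellinger-distance class $\cG$, converts this in-sample control into an out-of-sample sum $\sum_{k,h}\EE_{\pi^k,P^\star} \HH^2(P^\star,P^k) \lesssim d_E \log(\cN_\cP/\delta)$. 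The crucial, variance-aware third step is a horizon-free simulation lemma that expresses $V^{\pi^k}_{1;P^k;r^k}(s_1) - V^{\pi^k}_{1;P^\star;r^k}(s_1)$ as a sum of per-step Hellinger-type terms paired with the variance of the return-to-go under $r^k$, following the total-reward-variance recursion of Wang et al.\ 2024; a Cauchy–Schwarz between $\sum_k d_E\log(\cN_\cP/\delta)$ and $\sum_k \VaR_k$ then produces the advertised $\sqrt{\sum_k \VaR_k\, d_E\log(\cN_\cP/\delta)}$ main term and the $d_E\log(\cN_\cP/\delta)$ burn-in.

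\textbf{Main obstacle.} Combining the two sides and dividing by $K$ recovers the stated bound. The main technical hurdle I anticipate is the horizon-free, variance-aware simulation lemma in the model step: a naive telescoping of $V^{\pi^k}_{P^k}-V^{\pi^k}_{P^\star}$ pays an extra $H$ per level, and removing it requires exploiting the bounded cumulative reward (Assumption~\ref{assumption:bounded_reward}) together with a Freedman-type recursion on the total-return variance so that the per-step Hellinger weights are paid against $\sigma$, not $H$. A secondary subtlety is that FTRL operates on empirical losses while the final bound must be stated in true value functions, so the concentration event on the $K$ online trajectories and the event on the $N$ expert trajectories must be handled simultaneously with sharp variance proxies $\VaR_k$ and $\VaR_E$ via a careful union bound over the reward cover $\cN_\cR$ and the model bracketing net $\cN_\cP$.
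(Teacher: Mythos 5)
Your proposal matches the paper's own proof in both structure and all key ingredients: the same split into a reward-error term (handled by the FTRL optimization error plus Bernstein/Azuma–Bernstein concentration over a cover of $\cR$ with variance proxies $\VaR_k$ and $\VaR_E$) and a policy-error term (handled by MLE confidence sets with $P^\star\in\widehat{\cP}^k$, optimism, the $\ell_1$ Eluder pigeonhole on the Hellinger class, and the variance-conversion/horizon-free simulation machinery of Wang et al.\ 2024 followed by Cauchy--Schwarz). You also correctly identify the genuine technical crux, namely the horizon-free variance-aware treatment of $V^{\pi^k}_{1;P^k;r^k}-V^{\pi^k}_{1;P^\star;r^k}$, so no substantive differences or gaps to report.
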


The regret result in Theorem~\ref{thm:ub} can be immediate translated to the sample complexity bound for a mixed policy as stated in the following corollary. 
\begin{corollary}\label{co:sc}
Let $\sigma^{2} \coloneqq \max_{\pi \in \Pi,\, r \in \cR} \Var_{r}^{\pi}$, where $\Pi$ is the policy class induced by the planning oracle $(\pi,P) = \arg\max_{\pi,P} V^{\pi}_{1,P,r}(s_1)$ used in Line~\ref{ln:rl} of Algorithm~\ref{alg:mbail}. For the mixed policy $\bar{\pi} = \mathrm{Unif}\big(\{\pi^{k}\}_{k=1}^{K}\big)$ output by Algorithm~\ref{alg:mbail}, for any $\delta \in [0,1]$, with probability at least $1-\delta$, Algorithm~\ref{alg:mbail} returns an $\epsilon-$optimal imitator, i.e., $\max_{r \in \cR}\!\left[V^{\pi^{E}}_{1,P^\star,r}(s_1) - V^{\bar{\pi}}_{1,P^\star,r}(s_1) \right] \le \epsilon$ with expert and interaction sample complexity as:
\begin{align}
K\!\!=\!\!\tilde \cO\!\!\left(\!\frac{1\!+\!d_E\sigma^2\log \tfrac{\cN_{\cP}}{\delta}\!+\! \sigma^2 \log \tfrac{\cN_{\cR}}{\delta}}{\epsilon^2}\!+\!\frac{d_E \log \tfrac{\cN_{\cP}}{\delta}\!+\! \log \tfrac{\cN_{\cR}}{\delta}}{\epsilon}\!\right)\!, N\!\!=\!\!\tilde \cO\!\!\left(\!\frac{\VaR_E\log \tfrac{\cN_{\cR}}{\delta}}{\epsilon^2}\!+\!\frac{\log \tfrac{\cN_{\cR}}{\delta}}{\epsilon}\!\right) \notag.
\end{align}
\end{corollary}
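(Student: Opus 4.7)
The plan is to derive Corollary~\ref{co:sc} from Theorem~\ref{thm:ub} by three elementary steps: (i) translating the averaged regret into the optimality gap of the mixture policy $\bar\pi$, (ii) uniformly bounding each $\VaR_k$ by $\sigma^2$, and (iii) inverting each summand of the resulting per-iteration bound in $\epsilon$ to read off the required $K$ and $N$. No new probabilistic reasoning beyond Theorem~\ref{thm:ub} is needed; the whole argument is an arithmetic manipulation.

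For step (i), I would exploit the fact that for any fixed reward $r$ the value $V_{1;P^\star;r}^{\pi}(s_1)$ is linear in the policy mixing distribution, so the uniform mixture $\bar\pi=\mathrm{Unif}(\{\pi^k\}_{k=1}^{K})$ satisfies $V_{1;P^\star;r}^{\bar\pi}(s_1)=\tfrac{1}{K}\sum_{k=1}^{K} V_{1;P^\star;r}^{\pi^k}(s_1)$. Pushing $\max_{r\in\cR}$ outside the sum then gives
$$\max_{r\in\cR}\!\bigl[V_{1;P^\star;r}^{\pi^E}(s_1)-V_{1;P^\star;r}^{\bar\pi}(s_1)\bigr]=\frac{1}{K}\max_{r\in\cR}\sum_{k=1}^{K}\bigl[V_{1;P^\star;r}^{\pi^E}(s_1)-V_{1;P^\star;r}^{\pi^k}(s_1)\bigr]=\frac{\text{Regret}(K)}{K},$$
so the optimality gap of $\bar\pi$ coincides exactly with the averaged regret bounded in Theorem~\ref{thm:ub}.

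For step (ii), every iterate $\pi^k$ is returned by the planning oracle in Line~\ref{ln:rl} and therefore lies in $\Pi$, hence $\VaR_k=\max_{r\in\cR}\VaR_r^{\pi^k}\le\sigma^2$ by definition of $\sigma^2$. Plugging $\sum_{k=1}^{K}\VaR_k\le K\sigma^2$ into the two variance-weighted square-root terms in Theorem~\ref{thm:ub} collapses them to $\sqrt{\sigma^2 d_E\log(\cN_\cP/\delta)/K}$ and $\sqrt{\sigma^2\log(\cN_\cR/\delta)/K}$, respectively.

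For step (iii), I would require each of the resulting summands to be at most a constant fraction of $\epsilon$ and solve for $K$ and $N$. The two dominant $\epsilon^{-2}$ contributions to $K$ arise from the variance-weighted square-root terms (plus the $1/\sqrt{K}$ contribution induced by the FTRL optimization error $\epsilon_{\text{opt}}^r$ in Line~\ref{ln:opt}), while the additive $\epsilon^{-1}$ burn-in terms come from the higher-order $1/K$ tails; inverting the expert-related terms $\sqrt{\VaR_E\log(\cN_\cR/\delta)/N}$ and $\log(\cN_\cR/\delta)/N$ in the same way gives the stated $N$. The only mildly non-trivial point, and the one I would flag as the ``obstacle'', is verifying that $\VaR_k\le\sigma^2$ is a legitimate uniform bound: this is what forces the definition of $\sigma^2$ in the corollary to take the supremum over the planning-oracle-induced class $\Pi$ rather than over a smaller set, so that $\pi^k\in\Pi$ for every $k$ by construction; once this is in place, the rest is purely algebraic.
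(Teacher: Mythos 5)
Your proposal is correct and follows essentially the same route as the paper, which states Corollary~\ref{co:sc} as an ``immediate translation'' of Theorem~\ref{thm:ub} without spelling out the steps: linearity of the value in the mixture gives $\max_{r}\bigl[V^{\pi^E}_{1;P^\star;r}-V^{\bar\pi}_{1;P^\star;r}\bigr]=\text{Regret}(K)/K$, the uniform bound $\VaR_k\le\sigma^2$ holds because each $\pi^k$ lies in the planning-oracle-induced class $\Pi$, and inverting each summand (including the FTRL-induced $1/\sqrt{K}$ term, which produces the leading $1/\epsilon^2$ in $K$) yields exactly the stated $K$ and $N$. Your flagged ``obstacle'' about why $\sigma^2$ must be defined over the oracle-induced class is precisely the point the corollary's hypothesis is designed to handle, so nothing is missing.
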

\begin{remark}[Expert Sample Complexity]
Corollary~\ref{co:sc} reveals an $\tilde\cO\left(\VaR_{E}\log(\cN_{\cR})\epsilon^{-2}\right)$ sample complexity for expert demonstrations when $\VaR_E > 0$. When the reward is a $d_R$-linear as assumed in~\cite{vianoimitation}, this matches the~\cite{vianoimitation, xu2024provably} as $\tilde \cO(d_R\epsilon^{-2})$ with the \emph{second-order} information $\VaR_{E}\le 1$. We also improve the $Hd$ factor compared with~\cite{liu2021provably} results for model-based AIL on linear mixture MDPs and makes this \emph{horizon-free}.
\end{remark}

\begin{remark}[Online Interaction Complexity]
When $\sigma > 0$, Corollary~\ref{co:sc} suggests that \texttt{MB-AIL} enjoys an $\tilde\cO\left(\sigma^2\left(d_E\log \cN_{\cP}+\log(\cN_{\cR})\right)\epsilon^{-2}\right)$ interaction complexity. When reducing to linear mixture MDPs with $\cN_{\cP} = \tilde \cO(d)$ and $\cN_{\cR} = \tilde \cO(d)$, this $\tilde \cO(d^2\sigma^2\epsilon^{-2})$ sample complexity improves~\cite{liu2021provably} by an $H^2d$ factor and obtains the \emph{horizon-free} result with \emph{second order} information. We note that the additional $1 / \epsilon^2$ sample complexity comes from the adaption of the no-regret algorithm (FTRL) with $\epsilon_{\text{opt}}^r = \tilde \cO(1 / \sqrt{K})$ and is dominated when $\cN_{\cP}$ or $\cN_{\cR}$ is large. 
\end{remark}

Besides directly compare our work with existing results, the second-order analysis reveals some interesting results regarding the deterministic expert or deterministic model.

\begin{remark}[Deterministic Model and Policy]\label{rm:det}
When both the model class $\cP$ and the policy class $\Pi$ only contain deterministic transition kernel and deterministic policies, it's easy to verify that $\sigma^2 = 0$ and $\VaR_E = 0$. In such a case, the online interaction sample complexity is significantly reduced to $\tilde \cO(\epsilon^{-2} + (d_E \log \cN_P + \cN_{\cR} )\epsilon^{-1})$. If we relax the AIL gap to be $\max_{r \in \cR}\!\left[V^{\pi^{E}}_{1,P^\star,r}(s_1) - V^{\bar{\pi}}_{1,P^\star,r}(s_1) \right] \le \epsilon + \epsilon_{\text{opt}}^r$ to accommodate the optimization error $\epsilon_{\text{opt}}^r$, an $\tfrac1{\epsilon}$-order sample complexity is built, which suggests that a deterministic model will benefit online interaction. Similarly, an $\cO(\tfrac{\log \cN_{\cR}}{\epsilon})$ sample complexity is obtained.  
\end{remark}

Remark~\ref{rm:det} indicates an $\epsilon^{-1}$ sample complexity in deterministic systems (disregarding the $\epsilon_{\mathrm{opt}}^{r}$ term) and an $\epsilon^{-2}$ rate in stochastic systems, consistent with \citet{foster2024behavior}. Crucially, however, the meaning of ``stochasticity'' and the mechanism behind the $\epsilon^{-2}$ dependence differ \emph{fundamentally} between behavioral cloning and adversarial imitation learning as we would like to elaborate more.
% so that the learner’s occupancy measure matches the expert’s; the policy is then obtained by planning against the learned $(r,P)$. Consequently, when additional structure constrains $\cP$ or $\cR$, the effective hypothesis space for AIL can be much smaller than $\Pi$, making AIL statistically easier whenever $|\cP|\,|\cR| \ll |\Pi|$. An extreme example is that if there exists only one possible reward ($|\cR|=1$), no direct estimation of $\pi^E$ is required since the AIL becomes regular online RL. Similarly, if the reward class is very small (e.g., $|\cR|=2$), only minimal expert data are needed to identify the correct reward. 
\begin{remark}[Behavioral Cloning (BC) v.s. Adversarial Imitation Learning (AIL)]\label{rm:bcail1}
One fundamental difference between (offline) behavioral cloning (BC) and (online) adversarial imitation learning (AIL) is the target of estimation. BC fits the expert policy $\pi^E \in \Pi$ directly from demonstrations, whereas \texttt{MB-AIL} searches over rewards $r \in \cR$ and learns the transition kernel $P \in \cP$ before planning. Consequently, when additional structure constrains $\cP$ or $\cR$, AIL’s effective hypothesis space can be much smaller than $\Pi$, making AIL statistically easier than BC. This distinction is reflected in the rates: for stochastic policies, our result depends on a sample complexity of $\tilde{\cO}\!\big(\sigma^2 \tfrac{\log \cN_{\cR}}{\epsilon^{2}}\big)$, whereas \citet{foster2024behavior} obtain $\tilde{\cO}\!\big(\sigma^2 \tfrac{\log |\Pi|}{\epsilon^{2}}\big)$. 
{ Moreover, \citet{moulin2025inverse} provide an upper bound that depends only on the function class for $Q$-functions in the offline imitation setting, obtaining 
$\mathcal{O}\!\left(d(1-\gamma)^{-4}\epsilon^{-2}\right)$ 
for linear $Q$-functions and 
$\mathcal{O}\!\left(\log\!\mathcal{N}_{Q}\,(1-\gamma)^{-8}\epsilon^{-4}\right)$ 
for general function approximation under discounted MDPs. 
However, in general $\mathcal{N}_{Q} > \mathcal{N}_{\mathcal{R}}$, since multiple distinct $Q$-functions can correspond to the same reward function.} Therefore, in particular, when $\cN_{\cR}$ is small, our analysis indicates that AIL needs fewer expert demonstrations with the help of online interaction. \looseness=-1
\end{remark}

%% very weird template %%
\begin{minipage}{0.38\textwidth}
\begin{remark}[Stochasticity in BC vs.\ AIL]\label{rm:bcail2}
% \paragraph{Remark 5.7}\,(Stochasticity in BC vs.\ AIL)
As summarized in Table~\ref{tab:bcail}, BC and AIL react differently to stochasticity—especially when the expert policy $\pi^E$ is deterministic while the dynamics $P$ are stochastic (highlighted in red in Table~\ref{tab:bcail}). In this regime, \citet{foster2024behavior} obtain a % \looseness=-1
\end{remark}
\end{minipage}
\begin{minipage}{0.6\textwidth}
\begin{table}[H]
\vspace{-2.2em}
\caption{\small Expert sample complexity under different kinds of stochasticity between BC~\citep{foster2024behavior} and \texttt{MB-AIL} (\textbf{ours}).}
\resizebox{\linewidth}{!}{
\begin{tabular}{|c|p{8em}|p{8em}|}
\toprule
\backslashbox{Model $P$}{Policy $\pi^E$} &  Stochastic. & Deterministic. \\ \midrule
Stochastic. &  BC:\hfill $\sigma^2 \log |\Pi| \epsilon^{-2}$ AIL:\hfill $\sigma^2 \log \cN_{\cR} \epsilon^{-2}$& \cellcolor{red!25} BC:\hfill $\log |\Pi| \epsilon^{-1}$ AIL:\hfill $\sigma^2 \log \cN_{\cR} \epsilon^{-2}$\\ \hline
Deterministic. & BC:\hfill $\sigma^2\log |\Pi| \epsilon^{-2}$ AIL:\hfill $\sigma^2\log \cN_{\cR} \epsilon^{-2}$ & BC:\hfill $\log |\Pi| \epsilon^{-1}$ AIL:\hfill $\log \cN_{\cR} \epsilon^{-1}$\\
\bottomrule
\end{tabular}
}
\label{tab:bcail}
\end{table}
\end{minipage}
\noindent sample complexity of $\tilde{\cO}\!\big(\!\log|\Pi|\epsilon^{-1}\!\big)$ for BC with deterministic experts, whereas \texttt{MB-AIL} yields $\tilde{\cO}\!\big(\!\log \cN_{\cR}\epsilon^{-2}\big)$. We interpret this as a fundamental gap between BC and AIL: BC only needs to match actions and is largely insensitive to transition stochasticity, whereas AIL must match the expert’s occupancy measure, whose estimation necessarily reflects the stochasticity of $P$. Additionally, \citet{rajaraman2020toward} report an $\tilde{\cO}(1/\epsilon)$ expert-sample complexity in tabular MDPs as a tabular-specific artifact noted by \citet{foster2024behavior}. Outside this specific tabular result, we found that the $\epsilon^{-1}$ (deterministic) vs.\ $\epsilon^{-2}$ (stochastic) rates remain consistent across imitation-learning analyses.

Taken together, Remark~\ref{rm:bcail1} and Remark~\ref{rm:bcail2} described a clear theoretical separation between BC and AIL in terms of expert sample complexity: AIL is preferable when the reward class $\cR$ is highly structured (Remark~\ref{rm:bcail1}); in the extreme case with $|\cR|=1$, AIL reduces to standard RL and no expert data are needed. Conversely, BC is preferable when the expert policy class $\Pi$ is simple and deterministic while the transition dynamics are highly stochastic and complex (Remark~\ref{rm:bcail2}).
\subsection{Minimax Lower Bounds for Imitation Learning}
\label{sec:minimax-lower}
In this subsection we present minimax lower bound on the sample complexity of the offline demonstrations $N = |\cD_E|$ and the online interactions $K$. We start from constructing a hard instance as described below. 

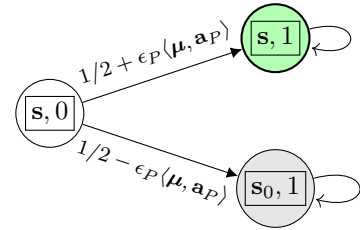
\begin{wrapfigure}[11]{r}{0.35\textwidth}
\vspace{-2em}
\centering
\begin{tikzpicture}[node distance=1.2cm and 1.5cm]
\node[state] (s1) {$\boxed{\sbb,0}$};
\node[reward, shift={(3cm,1cm)}] at (s1) (s2) {$\boxed{\sbb,1}$};
\node[absorbing, shift={(3cm,-1cm)}] at (s1) (s01) {$\boxed{\sbb_0,1}$};
\path (s1) edge node[pos=0.5, sloped, above, yshift=1pt, font=\scriptsize] {$1/2+\epsilon_P\langle\bmu,\ab_P\rangle$} (s2);
\path (s1) edge node[pos=0.5, sloped, below, yshift=-1pt, font=\scriptsize] 
{$1/2-\epsilon_P\langle\bmu,\ab_P\rangle$} (s01);
\path (s01) edge[loop right] (s01);
\path (s2) edge[loop right] (s2);
\end{tikzpicture}
\vspace{-0.8em}
\caption{\textbf{Structure of the Hard Instance.} The reward can only be observed in {\color{green}Green} states. The absorbing fail state is shown in {\color{gray}gray}.} \label{fig:lb}
\vspace{-10pt}
\end{wrapfigure}
\paragraph{Hard instance.}
As illustrated in Figure~\ref{fig:lb}, we consider an $(H+1)$-step time-homogeneous MDP. For a reward dimension $d_R$, kernel dimension $d_P$, and horizon $H$, The state has two components $\cS = \tilde \cS \times \{0, 1\}$, where the first component $\tilde \cS = \BB^{d_R} \cup \{\sbb_0\}$ and the second component is $0$ if and only if the state is the initial state, we write the state as $\boxed{\sbb, 0}$ for these two components. The action space $\cA = \BB^{d_P} \times \{-1, +1\}$ concentration concatenation of two action where $\ab_p \in \BB^{d_P}$ is the first part and $a_R \in \{\pm 1\}$ as the second part. The agent always starts from a state $\boxed{\sbb, 0}$ where $\sbb \sim \text{Unif.}(\BB^d_R)$. For the first step $h = 1$, the agent will either proceed to the same but absorbing state $\boxed{\sbb, 1}$ or fall into the special state $\boxed{\sbb_0, 1}$. For the rest of the steps $h \in (1, H]$, the agent will stay in their state $\boxed{\cdot, 1}$. The reward is only accessed at these regular absorbing state $\boxed{\sbb, 1}$. The MDP is parameterized with a parameter $\btheta \in \MM^{d_R} \subseteq \BB^{d_R}$ for reward and $\bmu \in \BB^{d_P}$ for model, where $\MM^{d_R}$ is a special \emph{almost orthogonal} class of binary vectors defined in Appendix~\ref{sec:lb-lemmas}.For a detailed formulation of the transition dynamics $P(\sbb'|\sbb,\ab;\bmu)$, the reward function $r(\sbb,\ab;\btheta)$, and the policy class $\pi(\ab|\sbb;\hat\bmu,\hat\btheta)$, we refer the reader to Appendix~\ref{sec:mdp-structure}.

% We begin with Lemma~\ref{lm:gap}, which formalizes the hypothesis that accurately estimating both the reward $\btheta$ and the model $\bmu$ is necessary for learning a near-optimal imitator policy. The lemma shows that any deviation from the true parameters $\bmu^*$ or $\btheta^*$ leads to a quantifiable performance gap: a one-dimensional error in $\bmu$ induces a suboptimality of $\Omega(\epsilon_P \epsilon_\pi d_R)$,
% while an error in $\btheta$ contributes $\Omega(\epsilon_\pi)$.

Based on this construction, we are ready to present the lower bound for sample complexity. 

\begin{theorem}
\label{thm:lb}
Given the hard instance described in Figure~\ref{fig:lb}, for all $0 \!<\! \epsilon \!<\! \min\{\tfrac{1}{4d_P},\tfrac{1}{4d_R}\}$, $d_R \!> \! 48$, and $\sigma^2\!:=\! \max_{\pi \in \Pi, r \in \cR} \Var_{\pi,r} \!<\! 1$, 
% \todo{check the range of this $\epsilon$, I don't think it can be arbitrary large}
{any (online or offline) imitation learning algorithm that guarantees producing an $\epsilon$-suboptimal policy with probability at least $\tfrac{1}{2}$ under the averaged form of the regret defined in Eq.~\ref{eqn:regret}} by using $N$ expert demonstrations and $K$ iterations requires at least:
\begin{align*}
&N=\Omega\left(\max\left\{\sigma^2/\epsilon^2,\sigma^2\log^2|\cR|\right\}\right) \\
&K=\Omega\left(\max\left\{\sigma^2\log^2|\cP|\exp(-N)/\epsilon^2,\sigma^2\epsilon^2 \log^2|\cR|\log^4|\cP|\exp(-N/\log^2|\cP|), {1}\right\}\right),\notag
\end{align*}
{ with variance of the expert policy $\sigma_E^2 = \max_{r \in \cR} \Var_{\pi_E, r}$ is bounded by $\tfrac{1}{2}\sigma^2 \le \sigma^2_E \le \sigma^2$.
This suggests that $N = \Omega(\sigma^2_E \epsilon^{-2})$ and $K = \Omega(\sigma^2 \log^2 |\cP|\exp(-N) \epsilon^{-2})$. }
\end{theorem}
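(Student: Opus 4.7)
The plan is to prove Theorem~\ref{thm:lb} by an information-theoretic reduction that combines Lemma~\ref{lm:gap} with Le Cam's two-point method and Fano/Assouad-type inequalities applied to the parameterized family $\{(\bmu,\btheta)\}$ of MDPs depicted in Figure~\ref{fig:lb}. The key starting point is that Lemma~\ref{lm:gap} guarantees that any $\epsilon$-suboptimal policy must both exactly recover $\btheta^\star$ (since the indicator contribution alone is $\epsilon_\pi\tau/2$) and approximately recover $\bmu^\star$ (to within $\ell_1$-error less than $2\epsilon/(\tau\epsilon_P\epsilon_\pi d_R)$). Hence it suffices to lower-bound the sample complexity of jointly identifying $(\btheta^\star,\bmu^\star)$ from $N$ expert trajectories plus $K$ online rollouts, where the only randomness in the system sits in the first-step Bernoulli transition of bias $\epsilon_P\langle\bmu^\star,\ab_P\rangle$ and in the expert's $\epsilon_\pi$-biased action draws; these will be calibrated so the return variance under the expert and under any candidate policy matches the stated $\sigma^2$.

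First I would establish the expert lower bound in two pieces. For the $\Omega(\sigma^2/\epsilon^2)$ piece, I would apply Le Cam's two-point method to two reward parameters $\btheta_{\pm}\in\MM^{d_R}$ that induce an $\Omega(\epsilon)$ AIL gap but whose induced expert policies differ in action probability only by $O(\epsilon_\pi)\asymp O(\epsilon)$; each expert action being Bernoulli with variance $\Theta(\sigma^2)$ yields per-sample KL on the order of $\epsilon^2/\sigma^2$, and Bretagnolle--Huber then forces $N=\Omega(\sigma^2/\epsilon^2)$. For the $\Omega(\sigma^2\log^2|\cR|)$ piece I would instead use a Fano packing over the almost-orthogonal family $\MM^{d_R}\subseteq\BB^{d_R}$, using $\log|\MM^{d_R}|\asymp d_R\asymp\log|\cR|$ together with the fact that $\epsilon_\pi$ must be chosen $\asymp 1/d_R$ to preserve the bounded-return Assumption~\ref{assumption:bounded_reward}; the squared logarithm then arises because the per-sample KL shrinks as $\epsilon_\pi^2/\sigma^2\asymp 1/(\sigma^2\log^2|\cR|)$, while the number of packed hypotheses demands $\log|\cR|$ bits of information.

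Next, for the online lower bound I would freeze $\btheta^\star$ and let $\bmu$ range over a packing of $\BB^{d_P}$, then apply Assouad's lemma coordinate by coordinate. Each online rollout gives a Bernoulli observation of bias $\epsilon_P\langle\bmu,\ab_P\rangle$, contributing per-sample KL $O(\epsilon_P^2/\sigma^2)$ about one coordinate of $\bmu$; summed over the $d_P\asymp\log|\cP|$ coordinates this produces the baseline rate $K\gtrsim\sigma^2\log^2|\cP|/\epsilon^2$. The $\exp(-N)$ discount appears because the $N$ expert trajectories already leak partial information about $\bmu^\star$ (through their realized first-step transitions), so the Bayesian posterior on $\bmu^\star$ given the expert dataset is supported on a residual subset of $\BB^{d_P}$ of size at least $|\cP|\cdot\exp(-cN)$; I would formalize this via the chain-rule decomposition $I(\bmu^\star;\mathrm{expert},\mathrm{online})\le I(\bmu^\star;\mathrm{expert})+I(\bmu^\star;\mathrm{online}\mid\mathrm{expert})$ and a matching multi-hypothesis testing lower bound applied to the residual set. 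The secondary $\sigma^2\epsilon^2\log^2|\cR|\log^4|\cP|\exp(-N/\log^2|\cP|)$ term would be obtained by running the same argument on a joint $(\btheta,\bmu)$ packing, where the learner must simultaneously resolve both parameters and the extra logarithmic and $\epsilon$ factors stem from the cross-information terms of the two-dimensional Bayesian prior.

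The main obstacle is quantifying the $\exp(-N)$ and $\exp(-N/\log^2|\cP|)$ couplings precisely, because naively treating the expert and online samples as independent Bernoulli observations does not capture the conditional information structure that the expert's actions already depend on $\bmu^\star$. My plan is to carry out a careful change-of-measure argument between adjacent instances $(\btheta^\star,\bmu)$ and $(\btheta^\star,\bmu')$ differing in a single coordinate of $\bmu$, bound the total variation between their joint expert-plus-online trajectory distributions, and then convert this via Bretagnolle--Huber (or a coordinate-wise Fano inequality) into a testing lower bound averaged over the packing. A secondary difficulty is ensuring that the second-order variance $\sigma^2$ appears sharply rather than through a loose $H$ factor; this requires calibrating $(\epsilon_\pi,\epsilon_P,\tau)$ so that both the expert return variance and $\max_{\pi,r}\VaR_r^\pi$ equal $\sigma^2$ exactly, and then tracking $\sigma^2$ carefully through the KL computations in both the Le Cam and Assouad steps.
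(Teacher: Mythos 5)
Your proposal follows essentially the same route as the paper: reduce $\epsilon$-suboptimality to identification of $(\btheta^\star,\bmu^\star)$ via Lemma~\ref{lm:gap}, lower-bound reward identification by Fano over the almost-orthogonal packing $\MM^{d_R}$, lower-bound model identification by coordinate-wise Bretagnolle--Huber testing over $\BB^{d_P}$ with a chain-rule KL decomposition in which the expert data's information about $\bmu^\star$ supplies the $\exp(-N)$ discount, calibrate $\tau$ against $\sigma^2$ through the return variance, and take the maximum over the two instances $(\epsilon_P,\epsilon_\pi)\in\{(\epsilon,1/(4d_R)),(1/(4d_P),\epsilon)\}$. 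The one bookkeeping quibble is your claimed per-sample KL of order $\epsilon_\pi^2$ for the $\sigma^2\log^2|\cR|$ piece: the packing geometry actually makes it of order $d_R\epsilon_\pi^2$, which is precisely what cancels the $\log|\MM^{d_R}|\asymp d_R$ denominator in Fano and yields $\log^2|\cR|$ rather than $\log^3|\cR|$.
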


The detailed proof of Theorem~\ref{thm:lb} is deferred into Appendix~\ref{sec:lb-proof}.

\begin{remark}
% For deterministic policies, \citet{foster2024behavior} showed an $\Omega(|\!\log \Pi|\epsilon^{-1})$ for expert demonstrations. 
{In Theorem~\ref{thm:lb}, $K$ indexes the policies, which indicates that the number of online interactions is $K - 1$ since $\pi_K$ is learned from the offline dataset together with the previous $K - 1$ rounds of interaction. Specially, the purely offline setting corresponds to $K = 1$. More discussions about this interaction complexity in the offline setting is deferred to Appendix~\ref{sec:discussion-lb}.} For stochastic policies, \citet{foster2024behavior} suggests an $\Omega({\sigma_E^2}\epsilon^{-2})$ expert demonstration. In contrast, Theorem~\ref{thm:lb} presents a fine-grained analysis on the sample complexity for both expert demonstration and online interaction: When online interaction is allowed, our results matches~\citet{foster2024behavior} with an $\Omega({\sigma_E^2}\epsilon^{-2})$ expert sample complicity with an additional $\Omega(\log^2 |\cP|\sigma^2\exp(-N)\epsilon^{-2})$ lower bound for online interaction. This result implies that in the practical AIL case where the expert demonstration is limited and is much smaller than the model class with $N \ll \log^2|\cP|$, the online interaction conducted in \texttt{MB-AIL} can effectively assist the policy learning as $K=\Omega(\log^2 |\cP|\sigma^2\epsilon^{-2})$ with an minimax optimal sample complexity.
% When online interaction is \emph{disallowed} ($K = 0$), our results improves~\citet{foster2024behavior} with an $\Omega(\log |\cP|\sigma^2\epsilon^{-2})$ expert sample complexity by a $\log |\cP|$ factor, which can be as large as policy class $\log |\Pi|$.
\end{remark}
\begin{remark}
Together with Theorem~\ref{thm:ub}, Theorem~\ref{thm:lb} shows that \texttt{MB-AIL} is minimax-optimal w.r.t. online interaction $K$ with only a logarithmic gap when given a small number of expert demonstrations $N$, and is within a $\log|\cR|$ factor in its dependence on $N$. We hypothesize that removing this $\log|\cR|$ gap may be intrinsically difficult as similar gaps ($\tilde{\cO}(\log|\Pi|/\epsilon^{2})$ vs.\ $\tilde{\Omega}(\epsilon^{-2})$ persist in~\citet{foster2024behavior}. Specifically, \citet{Xu2022UnderstandingAI} establish an $\Omega(|\cS|\epsilon^{-2})$ lower bound for their TV-AIL algorithm in tabular MDPs, yet a \emph{minimax} lower bound that explicitly exhibits a $\log|\Pi|$ or $\log |\cR|$ dependence remains open even in the tabular setting.
\end{remark}

\begin{table}[t]
\centering
\caption{\textbf{MuJoCo Results on Cumulative Rewards.}
We conduct experiments on the Hopper, Walker2d, and Humanoid environments, evaluating cumulative rewards and comparing our approach with BC, GAIL, and OPT-AIL baselines. Our method achieves comparable performance on Hopper and Walker2d, while significantly outperforming all baselines on Humanoid. Results are averaged over three random seeds.}
\begin{tabular}{c|c|cccc}
\toprule
Environment & Expert & BC & GAIL & OPT-AIL & MB-AIL (Ours)  \\
\midrule
Hopper & 3609.2 & 2856.7$\pm$42.5 & 3211.9$\pm$38.3 & 3438.6 $\pm$ 21.1 & \textbf{3451.3 $\pm$ 15.5} \\
Walker2d & 4636.5 & 2697.3 $\pm$ 97.2 & 3776.5 $\pm$ 64.3 & \textbf{4238.4$\pm$ 39.2} & 4169.7 $\pm$ 48.3 \\
Humanoid & 5884.6 & 342.5 $\pm$ 23.8 & 1614.4 $\pm$ 118.1 & 2014.4 $\pm$ 342.2 & \textbf{5816.4 $\pm$ 15.2} \\
\bottomrule
\end{tabular}
\label{tab:cumulative-rewards}
\end{table}

{\section{Empirical Results}}
We implement \texttt{MB-AIL} with deep neural networks and evaluate its performance on three standard MuJoCo benchmarks \citep{brockman2016openai}, covering three environments (Hopper, Walker2d and Humanoid). We compare its performance against existing offline and online imitation learning baselines in terms of episode rewards and sample efficiency, showing that our practical algorithm matches or even surpasses existing approaches while highlighting the superior sample efficiency of our model-based approach. Details of the practical implementation are provided in Appendix~\ref{sec:practical}, the MuJoCo experimental results are presented in Section~\ref{sec:mujoco}, and the ablation studies on network sizes are discussed in Section~\ref{sec:network}.

\subsection{MuJoCo Experiments}
\label{sec:mujoco}
\begin{wraptable}[11]{r}{0.48\textwidth}
\centering
\vspace{-2em}
\caption{\textbf{MuJoCo Results on Interaction Complexity.} We compare our interaction complexity with OPT-AIL and show that our method requires significantly fewer steps to reach optimal performance.}
\label{tab:sampling-complexity}
\setlength{\tabcolsep}{4pt} % tighter column spacing
\renewcommand{\arraystretch}{1.05} % tighter row spacing

\begin{tabular}{c|cc}
\toprule
Environments & OPT-AIL & MB-AIL (Ours) \\
\midrule
Hopper   & $\sim$210K & $\sim$60K \\
Walker2d & $\sim$320K & $\sim$120K \\
Humanoid & $\sim$220K & $\sim$90K \\
\bottomrule
\end{tabular}
% \vspace{-3em}
\end{wraptable}
We evaluate the proposed practical algorithm on several MuJoCo environments~\citep{brockman2016openai}, including Hopper, Walker2d, and Humanoid. For each task, we use 64 expert trajectories as demonstrations for training. We compare our method against Behavioral Cloning (BC), GAIL~\citep{ho2016generative}, and OPT-AIL~\citep{xu2024provably}. The cumulative reward results are reported in Table~\ref{tab:cumulative-rewards}. Our method achieves performance comparable to the best baselines on Hopper and Walker2d, while significantly outperforming all compared methods on the more challenging Humanoid task, demonstrating its stronger capability in complex, high-dimensional control settings. All results are averaged over three random seeds. In addition, we compare the interaction complexity of our method with the model-free baseline OPT-AIL~\citep{xu2024provably} in Table~\ref{tab:sampling-complexity}, which highlights the improved interaction efficiency of our model-based approach.

\subsection{Ablation Studies on the Network Sizes}
\label{sec:network}
\begin{wraptable}[17]{R}{0.5\textwidth}
\centering
\vspace{-1.5em}
\caption{\textbf{Network architecture ablation.} in changing the network architecture for the policy network and reward network.}\label{table:network}

\vspace{1em}
(a). Changing the reward network
\begin{tabular}{@{}lc@{}}
\toprule
\textbf{Reward Network} & \textbf{Episode Rewards} \\
\midrule
2-layer, 256 hidden units & $4169.7 \pm 48.3$ \\
2-layer, 128 hidden units & $4109.5 \pm 39.6$ \\
1-layer, 256 hidden units & $4125.1 \pm 28.2$ \\
\bottomrule
\end{tabular}

\vspace{1em}
(b). Changing the policy network
\begin{tabular}{@{}lc@{}}
\toprule
\textbf{Policy Network} & \textbf{Episode Rewards} \\
\midrule
2-layer, 1024 hidden units & $4169.7 \pm 48.3$ \\
2-layer, 256 hidden units  & $704.9 \pm 112.5$ \\
\bottomrule
\end{tabular}
\end{wraptable}
In this section, we conduct an ablation study on network capacity to empirically examine the claim in Remark~\ref{rm:bcail1} that $\log|\mathcal{R}| < \log|\Pi|$. Specifically, we vary both the depth and the hidden-layer dimensions of the reward and policy networks while keeping all other components fixed. In particular, we fix the policy network to a 2-layer MLP with 1024 hidden units and vary the reward network in Table~\ref{table:network} (a) and fix the reward network to a 2-layer MLP with 256 hidden units and vary the policy network in Table~\ref{table:network} (b). The results indicate that the reward function can be learned effectively with a relatively small network, whereas reducing the capacity of the policy network leads to a substantial degradation in performance. These findings provide empirical support for the claim that, in typical settings, the complexity of the reward class is significantly smaller than that of the policy class. The detailed ablation results are summarized in Table~\ref{table:network}.

% \begin{table}[h]
% \centering
% \caption{\textbf{Network architecture ablation.} (Left) Fix the policy network to a 2-layer MLP with 1024 hidden units and vary the reward network. (Right) Fix the reward network to a 2-layer MLP with 256 hidden units and vary the policy network.}
% \setlength{\tabcolsep}{4pt} % smaller horizontal padding

% \begin{minipage}[t]{0.48\linewidth}
%     \centering
%     \begin{tabular}{@{}lc@{}}
%     \toprule
%     \textbf{Reward Network} & \textbf{Episode Rewards} \\
%     \midrule
%     2-layer, 256 hidden units & $4169.7 \pm 48.3$ \\
%     2-layer, 128 hidden units & $4109.5 \pm 39.6$ \\
%     1-layer, 256 hidden units & $4125.1 \pm 28.2$ \\
%     \bottomrule
%     \end{tabular}
% \end{minipage}\hfill
% \begin{minipage}[t]{0.48\linewidth}
%     \centering
%     \begin{tabular}{@{}lc@{}}
%     \toprule
%     \textbf{Policy Network} & \textbf{Episode Rewards} \\
%     \midrule
%     2-layer, 1024 hidden units & $4169.7 \pm 48.3$ \\
%     2-layer, 256 hidden units  & $704.9 \pm 112.5$ \\
%     \bottomrule
%     \end{tabular}
% \end{minipage}
% \label{tab:network}
% \end{table}

\subsection{Additional Experiments}

In addition to the experiments on MuJoCo, we conduct empirical analyses in GridWorld environments to examine the effects of different reward classes $\cR$, environment stochasticity, impact of the model misspecification, and the benefits of online exploration. The corresponding results are presented in Appendix~\ref{sec:grid-world}, showing that a smaller reward class can lead to improved performance for online imitation learning algorithm, thereby supporting our theoretical analysis in Theorem~\ref{thm:ub}.

\section{Conclusion}
We introduced \texttt{MB-AIL}, a model-based adversarial imitation learning algorithm, and established sharp statistical guarantees. We proved horizon-free, second-order upper bounds under general function approximation and complementary information-theoretic lower bounds, showing that \texttt{MB-AIL} is minimax-optimal in its use of online interaction when the expert demo is limited. We also show that \texttt{MB-AIL} is optimal within a $\log|\cR|$ factor in its dependence on expert demonstrations. Experiments have been conducted to validate our theoretical claims and demonstrate that the \texttt{MB-AIL} matches or exceeds the sample efficiency of strong baselines across diverse environments. Our analysis clarifies how online interaction and system stochasticity impact the sample complexity of AIL and delineates regimes where AIL can outperform behavioral cloning.

% \clearpage

\subsection*{Reproducibility Statement}
We have made significant efforts to ensure the reproducibility of our results. The experiment details are documented in Appendix~\ref{sec:details-mujoco},~\ref{sec:mujoco} and~\ref{sec:grid-world}. Our source code is provided to facilitate faithful reproduction of our experiments in the supplementary materials.
\subsection*{Ethics Statement}
We have carefully reviewed the Code of Ethics and find that our work does not raise any significant ethical concerns. Our research does not involve human subjects, sensitive data, or potentially harmful applications. We believe our methodology and contributions align with principles of fairness, transparency, and research integrity.

\subsubsection*{Acknowledgments}
We thank the anonymous reviewers for their helpful comments. This research was supported by WZ's startup funding provided by the School of Data Science and Society at UNC Chapel Hill. 

% Use unnumbered third level headings for the acknowledgments. All
% acknowledgments, including those to funding agencies, go at the end of the paper.

\bibliography{references}
\bibliographystyle{iclr2026_conference}
\clearpage
\appendix 

\startcontents[appendices]
% \section{Appendix}
% \addcontentsline{toc}{section}{Appendix} % Add the appendix text to the document TOC
% \part{Appendix} % Start the appendix part
% \parttoc % Insert the appendix TOC
% \newpage
\section*{Contents of Appendices}
\printcontents[appendices]{}{1}{\setcounter{tocdepth}{3}}
\newpage
\section{Additional Related Works}
\label{sec:addition-related-works}

\paragraph{Variance-aware Second-order Analysis in Reinforcement Learning.}
Variance-dependent (or second-order) bounds are instance-dependent guarantees that scale with the variance of the return. They have been studied in various settings, including tabular MDPs~\citep{zanette2019tighter,zhou2023sharp,zhang2024settling,pmlr-v83-talebi18a}, linear mixture MDPs~\citep{zhao2023variance}, low-rank MDPs~\citep{wang2023benefits}, and general function approximation~\citep{wang2024model}. In imitation learning,~\citet{foster2024behavior} also established variance-dependent upper and lower bounds for stochastic experts. 

\paragraph{RL with General Function Approximation}
A large body of theoretical work has investigated reinforcement learning and imitation learning under general function approximation. To characterize the theoretical limits and better understand practical RL/IL algorithms, researchers have proposed a variety of statistical complexity measures, including Bellman Rank~\citep{jiang2017contextual}, Witness Rank~\citep{sun2019model}, Eluder Dimension~\citep{10.5555/2999792.2999864}, Bellman Eluder Dimension~\citep{jin2021bellman}, Decision Estimation Coefficient (DEC)~\citep{foster2021statistical}, Admissible Bellman Characterization~\citep{chen2022general}, Generalized Eluder dimension~\citep{agarwal2023vo}, and Generalized Eluder Coefficient (GEC)~\citep{zhong2022gec}, among others. For imitation learning, related works have considered general function approximation in the context of behavior cloning~\citep{foster2024behavior} and optimization-based adversarial imitation learning~\citep{xu2024provably}.

\paragraph{Model-based Reinforcement Learning} Many works have conducted theoretical analyses of model-based reinforcement learning. Prior studies have examined model-based frameworks with rich function approximation in both online RL~\citep{sun2019model,foster2021statistical,song2021pc,zhan2022pac,wang2024model} and offline RL~\citep{wang2024model,ueharapessimistic}. In particular,~\citet{wang2024model} derived a nearly horizon-free second-order bound for model-based RL in both online and offline settings, where the upper bound depends only logarithmically on $H$. In addition to theoretical analysis, numerous studies have investigated practical algorithms for model-based RL~\citep{janner2019trust,yu2020mopo,feinberg2018model,zhang2024model}.

{
\section{Additional Discussions for Theorem~\ref{thm:lb}}
\label{sec:discussion-lb}
\paragraph{Extension to Purely Offline Behavioral Cloning} Our lower bound is originally established for the online setting. When extending it to the purely offline seeting, this corresponds to the regret with $K = 1$ since 
$$
\text{SubOpt}(\pi^1) = \text{Regret}(K=1) = \max_{r \in \mathcal{R}}\mathbb{E}_s [V_{1;P^\star;r}^{\pi^E}(s) - V_{1;P^\star;r}^{\pi^1}(s)],
$$
since $\pi^1$ only depends on the offline demonstrations. As a results, the online interactions $K$ by definition suffers from a \textbf{trivial lower bound} $K = \Omega(1)$ for purely offline setting. The full version of the lower bound for online interaction presented in Theorem~\ref{thm:lb} is therefore $\Omega(\max\{1, \sigma^2\cdot\log^2|\mathcal{P}|\exp(-N)\epsilon^{-2}\})$. 

As a result, when the offline interaction $N$ is significantly large (which corresponds to the behavior cloning setting, precisely $N = \Omega(\log \tfrac{\sigma^2\log^2 |\mathcal P|}{\epsilon^2})=\Omega(\log \tfrac{\sigma\log |\mathcal P|}{\epsilon})$), the lower bound becomes trivial since
$$K = \Omega(\max\{1, \sigma^2\cdot\log^2|\mathcal{P}|\exp(-N)\epsilon^{-2}\}) = \Omega(1).$$

We can further highlight that when the reward class is limited, Theorem~\ref{thm:lb} together with Theorem~\ref{thm:ub} suggests that it would be possible to obtain a smaller $N$ and complete the estimation on model $P$ through online interactions, which goes beyond \citet{foster2024behavior}.

\paragraph{Expert Policy Variance vs. Maximum Policy Variance} \citet{foster2024behavior} derive a variance-aware lower bound that depends only on the expert policy’s variance. In contrast, our lower bound generally depends on the \emph{maximum} variance over all policies in the policy class. In the construction used for Theorem~\ref{thm:lb}, this implies that the expert variance $\sigma^2_E$ must be of the same order as this maximum variance, i.e., a lower bound on the expert variance is required, as stated in the following lemma:
\begin{lemma}
    By the construction of the hard instance for Theorem~\ref{thm:lb}, the variance of the expert policy $\sigma_E^2=\max_{r\in\cR}\text{VaR}_{\pi^E,r}$ is lower bounded by the MDP parameter $\tau$:
    \begin{align*}
        \sigma^2_E\geq\tfrac{1}{2}\tau^2.
    \end{align*}
\end{lemma}
\begin{proof}
    Given the definition of the expert policy variance:
\begin{align*}
    \sigma^2_E=\max_{r\in\cR}~\text{VaR}_{\pi^E,r}=\max_r~\mathbb{E}\!\left[\Big(\sum_{h=1}^H r(\sbb_h, \ab_h)\Big)^2\right] - \Big(\mathbb{E}[\,V_{1;P^\star;r}^{\pi^E}(\sbb)\,]\Big)^2,
\end{align*}
by Eq.~\ref{eq:value-lb} and the definition of the reward in the constructed hard instance in Appendix~\ref{sec:mdp-structure}, let the parameterization of the reward be $\tilde\btheta$ and the expert policy be $\pi(\ab|\sbb;\bmu^*,\btheta^*)$, we can have:
\begin{align*}
    \mathbb{E}\!\left[\Big(\sum_{h=1}^H r(\sbb_h, \ab_h)\Big)^2\right]&=\left(\tfrac{1}{2}+\epsilon_\pi\epsilon_P d_R\langle\bmu^*,\bmu^*\rangle\right)\tau^2\mathbb{E}\left(\tfrac{1}{4}+\tfrac{a_R}{2d_R}\langle\tilde\btheta,\sbb\rangle+\tfrac{1}{4d^2_R}\langle\tilde\btheta,\sbb\rangle^2\right),\\
\Big(\mathbb{E}[\,V_{1;P^\star;r}^{\pi^E}(\sbb)\,]\Big)^2&=\Big[\left(\tfrac{1}{2} + {\epsilon_\pi\epsilon_Pd_R\langle \bmu^*, \bmu^*\rangle}\right) \cdot \left(\tfrac{1}{2}\tau + \tfrac{\epsilon_\pi\tau}{d_R}\langle \tilde \btheta, \sbb \rangle \langle \btheta^*, \sbb \rangle\right)\Big]^2.
\end{align*}
By simple algebra, since we have $0\leq\epsilon_p\leq 1/(2d_P)$ and $0\leq\epsilon_\pi\leq 1/d_R$, we can derive the lower bound for $\sigma^2_E$:
\begin{align*}
    \sigma^2_E&=\max_{r\in\cR}~\mathbb{E}\!\left[\Big(\sum_{h=1}^H r(\sbb_h, \ab_h)\Big)^2\right] - \Big(\mathbb{E}[\,V_{1;P^\star;r}^{\pi^E}(\sbb)\,]\Big)^2\\
    &\geq (\tfrac{1}{2}+\epsilon_\pi\epsilon_P d_R\langle\bmu^*,\bmu^*\rangle)\tau^2\\
    &\geq\tfrac{1}{2}\tau^2,
\end{align*}
where the first inequality is given by:
\begin{align*}
    \max_r~-\Big(\mathbb{E}[\,V_{1;P^\star;r}^\pi(\sbb)\,]\Big)^2&=-\min_{r\in\cR}\Big(\mathbb{E}[\,V_{1;P^\star;r}^\pi(\sbb)\,]\Big)^2\\
    &=-\min_{\tilde\btheta}\Big[\left(\tfrac{1}{2} + {\epsilon_\pi\epsilon_Pd_R\langle \bmu^*, \bmu^*\rangle}\right) \cdot \left(\tfrac{1}{2}\tau + \tfrac{\epsilon_\pi\tau}{d_R}\langle \tilde \btheta, \sbb \rangle \langle \btheta^*, \sbb \rangle\right)\Big]^2\\
    &\geq-\min_{\tilde\btheta}\Big[\left(\tfrac{1}{2} + {\tfrac{1}{2d_Pd_R}d_R\langle \bmu^*, \bmu^*\rangle}\right) \cdot \left(\tfrac{1}{2}\tau + \tfrac{\tau}{d^2_R}\langle \tilde \btheta, \sbb \rangle \langle \btheta^*, \sbb \rangle\right)\Big]^2\\
    &=0,
\end{align*}
and
\begin{align*}
    \max_r~\mathbb{E}\!\left[\Big(\sum_{h=1}^H r(\sbb_h, \ab_h)\Big)^2\right]&=\max_{\tilde\btheta}\left(\tfrac{1}{2}+\epsilon_\pi\epsilon_P d_R\langle\bmu^*,\bmu^*\rangle\right)\tau^2\mathbb{E}\left(\tfrac{1}{4}+\tfrac{a_R}{2d_R}\langle\tilde\btheta,\sbb\rangle+\tfrac{1}{4d^2_R}\langle\tilde\btheta,\sbb\rangle^2\right)\\
    &=\left(\tfrac{1}{2}+\epsilon_\pi\epsilon_P d_R\langle\bmu^*,\bmu^*\rangle\right)\tau^2,
\end{align*}
which concludes the proof.
\end{proof}
Since we show in the proof of Theorem~\ref{thm:lb} that $\Var_{\pi,r} \le \tau^2$
for all $\pi \in \Pi$ and $r \in \cR$, the expert variance $\sigma_E^2$ cannot be significantly smaller than
$\sigma^2 := \max_{\pi \in \Pi,\, r \in \cR} \Var_{\pi,r}.$
In particular, $\sigma_E^2$ must be of the same order as $\sigma^2$. This ensures that our construction is consistent with the result in \citet{foster2024behavior}.
}

\section{Proof of the Upper Bounds}
\subsection{Upper Bound with Finite Function Class}
We first aim to upper bound the average regret of our proposed algorithm under finite function classes, i.e., when both $\cP$ and $\cR$ are finite. Formally, this upper bound can be stated in the following theorem:
\begin{theorem}
For any $\delta \in (0,1)$, let $\beta = 4 \log \left( K|\mathcal{P}|/\delta \right)$.
Under Assumptions~\ref{assumption:bounded_reward} and~\ref{assumption:realizability}, if FTRL~\citep{shalev2007convex} is employed as the no-regret algorithm, and $\mathcal{R}$ and $\mathcal{P}$ are finite function classes, the average adversarial imitation learning regret defined in~\ref{eq:ail-obj} satisfies:
\begin{align*}
&(1/K)\text{Regret}(K) 
= \frac{1}{K}\max_{r \in \cR} \sum_{k=1}^K 
\big[ V_{1;P^\star;r}^{\pi^E}(s) - V_{1;P^\star;r}^{\pi^k}(s) \big] \\
&\leq \tilde\cO\Bigg(
    \frac{1}{K}\sqrt{\sum_{k=1}^{K} \text{VaR}_{\pi,k}\, d_E \log\!\frac{|\cP|}{\delta}}
    + \frac{1}{K}d_E \log\!\frac{|\cP|}{\delta}+ \frac{1}{\sqrt{K}} + \frac{\log(|\cR|/\delta)}{K}\Bigg)\\
& + \tilde\cO\Bigg(\frac{\log(|\cR|/\delta)}{N} + \sqrt{\frac{1}{K} \sum_{k=1}^K \text{VaR}_{\pi,k}\, \log(|\cR|/\delta)} 
    + \sqrt{\frac{1}{N}\text{VaR}_{\pi^E}\, \log(|\cR|/\delta)} 
\Bigg),
\end{align*}
where $\tilde \cO(\cdot)$ hides the polynominal logarithmic factors on $K, H, N$. $d_E$ is the eluder dimension defined based on Hellinger distance as described in Definition~\ref{def:eluder-dim}. We denote $\VaR_k = \max_{r \in \cR} \VaR_{r}^{\pi^k}$ on the maximum variance of the cumulative return for policy $\pi^k$, $\VaR_E = \max_{r \in \cR} \VaR_{r}^{\pi^k}$ is the short hand for the maximum variance collected by the expert policy $\pi^E$.
\label{thm:upper-bound-finite}
\end{theorem}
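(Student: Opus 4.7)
With shorthand $V^{\pi}_r := V^{\pi}_{1;P^\star;r}(s_1)$ and $\hat V^{\pi}_r$ its empirical counterpart, my plan is to fix the regret-maximizing $r^\dagger \in \cR$ and decompose
\begin{align*}
\sum_{k=1}^K\bigl[V^{\pi^E}_{r^\dagger}-V^{\pi^k}_{r^\dagger}\bigr]
&= \sum_{k}\bigl[V^{\pi^E}_{r^\dagger}-\hat V^{\pi^E}_{r^\dagger}\bigr]
+ \sum_{k}\bigl[\hat V^{\pi^E}_{r^\dagger}-\hat V^{\pi^k}_{r^\dagger}\bigr]
+ \sum_{k}\bigl[\hat V^{\pi^k}_{r^\dagger}-V^{\pi^k}_{r^\dagger}\bigr],
\end{align*}
apply the FTRL no-regret bound on the middle term to upper bound it by $\sum_k\bigl[\hat V^{\pi^E}_{r^k}-\hat V^{\pi^k}_{r^k}\bigr]+K\epsilon^r_{\mathrm{opt}}$ with $\epsilon^r_{\mathrm{opt}}=\tilde O(1/\sqrt K)$, and then pass back from $\hat V$ to $V$ via Bernstein concentration to reach the residual $\sum_k[V^{\pi^E}_{r^k}-V^{\pi^k}_{r^k}]$ that the model-based optimism will control.

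\textbf{Variance-aware concentration.} The next step is to handle every empirical–population gap above with Bernstein-type bounds and a union bound over the finite classes. For the expert dataset of $N$ i.i.d.\ trajectories with per-trajectory variance $\VaR^{\pi^E}_r$, one gets $\max_{r\in\cR}|V^{\pi^E}_r-\hat V^{\pi^E}_r| \le \tilde O(\sqrt{\VaR^{\pi^E}_r\log(|\cR|/\delta)/N}+\log(|\cR|/\delta)/N)$, contributing the $N$-dependent terms. For the behavioral side, the centered cumulative returns $\{\sum_h r(s_h^k,a_h^k)-V^{\pi^k}_{r}\}_k$ form a martingale difference sequence for each fixed $r$, and a Freedman inequality uniform over $\cR$ yields $\tilde O(\sqrt{\sum_k\VaR_k\log(|\cR|/\delta)}+\log(|\cR|/\delta))$. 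After dividing by $K$ these produce the reward-side $\log|\cR|$ terms in the theorem, while the $1/\sqrt{K}$ term traces back to the FTRL sub-optimality.

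\textbf{Optimism and the second-order simulation lemma.} The remaining on-policy gap $\sum_k[V^{\pi^E}_{1;P^\star;r^k}-V^{\pi^k}_{1;P^\star;r^k}]$ is controlled by model-based optimism. With $\beta=4\log(K|\cP|/\delta)$, a standard MLE argument ensures that $P^\star\in\hat\cP^k$ for every $k$ with probability $\ge 1-\delta$, so the optimistic planning step (Line~\ref{ln:rl}) gives $V^{\pi^E}_{1;P^\star;r^k} \le V^{\pi^k}_{1;P^k;r^k}$, reducing the task to bounding the simulation-lemma gap $V^{\pi^k}_{1;P^k;r^k}-V^{\pi^k}_{1;P^\star;r^k}=\EE_{\pi^k,P^\star}\!\bigl[\sum_h[(P^k{-}P^\star)V^{\pi^k}_{h+1;P^k;r^k}](s_h,a_h)\bigr]$. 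To achieve horizon-freeness and second-order scaling, I will invoke the one-step inequality
\[\bigl|[(P^k-P^\star)V](s,a)\bigr|\lesssim \HH(P^k,P^\star)(s,a)\sqrt{\VV_{P^\star}[V](s,a)}+\HH^2(P^k,P^\star)(s,a),\]
apply Cauchy–Schwarz across $(k,h)$, and bound (i) the Hellinger factor $\sum_{k,h}\EE[\HH^2(P^k,P^\star)]\le\tilde O(d_E\beta)$ by coupling the version-space control $\sum_{(s,a)\in\cD^k}\HH^2\lesssim\beta$ with the $\ell_1$-Eluder-dimension pigeonhole of Definition~\ref{def:eluder-dim}, and (ii) the variance factor $\sum_{k,h}\EE[\VV_{P^\star}[V^{\pi^k}_{h+1;P^\star;r^k}]]\le\sum_k\VaR^{\pi^k}_{r^k}\le\sum_k\VaR_k$ by the law of total variance (horizon-free thanks to Assumption~\ref{assumption:bounded_reward}). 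Combining yields the leading $\frac1K\sqrt{\sum_k\VaR_k\,d_E\log(|\cP|/\delta)}$ plus the lower-order $d_E\log(|\cP|/\delta)/K$.

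\textbf{Main obstacle.} The chief difficulty is the horizon-free, variance-aware model analysis: a naive $\|P^k-P^\star\|_1\|V\|_\infty$ bound would force an extra $H$ and destroy the $\VaR_k$ dependence. Establishing the $\HH\sqrt{\VV}+\HH^2$ one-step inequality for bounded-cumulative-reward Bellman perturbations and chaining it with the law of total variance (in the spirit of \citet{wang2024model}) is therefore the technical crux. A secondary subtlety is that the reward $r^k$ itself drifts adversarially across $k$, so the value-function class entering the Eluder argument and Bernstein union bound must be controlled uniformly via the product $(P,r)\in\cP\times\cR$ — this is what ultimately causes $\log|\cP|$ and $\log|\cR|$ to appear separately (rather than as a combined $\log|\Pi|$) in the bound.
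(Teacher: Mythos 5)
Your proposal follows essentially the same route as the paper's proof: the identical reward-error/policy-error decomposition (FTRL optimization error plus Bernstein and Azuma--Bernstein concentration with a union bound over the finite class $\cR$) for the reward side, and the same optimism, Hellinger--Eluder pigeonhole, and law-of-total-variance machinery of \citet{wang2024model} for the model side. The only step you elide is that the simulation lemma naturally yields the variance $\VV_{P^\star}[V^{\pi^k}_{h+1;P^k;r^k}]$ of the value function under the \emph{learned} model $P^k$, and converting this to $\VV_{P^\star}[V^{\pi^k}_{h+1;P^\star;r^k}]$ (so that the law of total variance gives $\VaR_k$) requires the paper's dedicated variance-conversion argument (Lemma~\ref{lem:variance-conversion}), a standard but non-trivial recursive moment bound.
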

\subsection{Proof of Theorem~\ref{thm:upper-bound-finite}}
\label{sec:proof-sketch-upper}
According to the definition of average regret in Eq.~\ref{eqn:regret}, we can decompose it into two components—reward learning and policy learning—and analyze them separately:
\begin{align*}
    \frac{1}{K}\text{Regret}(K)&=\max_{\tilde{r}\in\mathcal{R}}\frac{1}{K}\sum_{k=1}^{K}\Big(V^{\pi^E}_{1;P^\star;\tilde{r}}- V^{\pi^{k}}_{1;P^\star;\tilde{r}}\Big)\\
    &= \max_{\tilde{r}\in\mathcal{R}}\underbrace{\frac{1}{K}\sum_{k=1}^{K}\Big(V^{\pi^E}_{1;P^\star;\tilde{r}} - V^{\pi^{k}}_{1;P^\star;\tilde{r}}-(V^{\pi^E}_{1;P^\star;r^k} - V^{\pi^{k}}_{1;P^\star;r^k})\Big)}_{\text{T1: reward error}} \\
    &\quad+ \underbrace{\frac{1}{K}\sum_{k=1}^{K}\Big(V^{\pi^E}_{1;P^\star;r^k} - V^{\pi^{k}}_{1;P^\star;r^k}\Big)}_{\text{T2: policy error}}.
\end{align*}
This regret decomposition is standard in AIL analysis. Since the learned reward is not required to exactly match the ground-truth reward function $r^\star$, it is natural to define the reward error as the difference in value gaps between the expert and behavioral distributions. We provide upper bounds for the reward and policy errors under finite function classes in the following lemmas:
\begin{lemma}[Reward Error Upper Bound]
    For any $\delta\in(0,1)$, using FTRL~\citep{shalev2007convex} as the no-regret algorithm, given a finite reward function class $\cR$, with probability $1-\delta$, the average regret for reward optimization is bounded:
    \begin{align*}
        &\max_{\tilde r\in\mathcal{R}}~\frac{1}{K}\sum_{k=1}^{K}V^{\pi^E}_{1;P^\star;\tilde{r}} - V^{\pi^k}_{1;P^\star;\tilde{r}}-(V^{\pi^E}_{1;P^\star;r^k} - V^{\pi^k}_{1;P^\star;r^k})\\
        &\lesssim \cO\Bigg(\frac{1}{\sqrt K}+\sqrt{\frac{1}{N}\Big(\max_{r\in\mathcal{R}}~\text{VaR}_{\pi^E; r}\Big)\log(\vert\mathcal{R}\vert/\delta)}+\frac{\log(\vert\mathcal{R}\vert/\delta)}{N}\\
    &+\sqrt{\frac{1}{K}\sum_{k=1}^K\max_{r\in\mathcal{R}}~\text{VaR}_{\pi^k;{r}}\log(\vert\mathcal{R}\vert/\delta)}+\frac{\log(\vert\mathcal{R}\vert/\delta)}{K}\Bigg),
    \end{align*}
    where $\text{VaR}_{\pi;{r}}$ denotes the variance of accumulative rewards over a horizon $H$.
    \label{lm:reward-upper}
\end{lemma}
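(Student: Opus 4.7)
The proof would proceed by a three-way decomposition: an on-sample FTRL regret piece, plus two empirical-to-true value deviations. With $\hat V^{\pi^E}_r=\tfrac1N\sum_{n,h}r(s^n_h,a^n_h)$ denoting the expert Monte-Carlo estimate and $\hat V^{\pi^k}_r=\sum_h r(s^k_h,a^k_h)$ the single on-policy rollout estimate, I would add and subtract $\hat V$ to split, for every $\tilde r\in\cR$,
\begin{align*}
&\tfrac1K\!\sum_{k=1}^K\!\big[(V^{\pi^E}_{\tilde r}\!-\!V^{\pi^k}_{\tilde r})\!-\!(V^{\pi^E}_{r^k}\!-\!V^{\pi^k}_{r^k})\big]
= \tfrac1K\!\sum_{k=1}^K\!\big[(\hat V^{\pi^E}_{\tilde r}\!-\!\hat V^{\pi^k}_{\tilde r})\!-\!(\hat V^{\pi^E}_{r^k}\!-\!\hat V^{\pi^k}_{r^k})\big]\\
&\quad+ \big(V^{\pi^E}_{\tilde r}-\hat V^{\pi^E}_{\tilde r}\big) - \tfrac1K\!\sum_{k=1}^K\!\big(V^{\pi^E}_{r^k}-\hat V^{\pi^E}_{r^k}\big)+\tfrac1K\!\sum_{k=1}^K\!\big[(\hat V^{\pi^k}_{r^k}\!-\!V^{\pi^k}_{r^k}) - (\hat V^{\pi^k}_{\tilde r}\!-\!V^{\pi^k}_{\tilde r})\big].
\end{align*}
The first sum is exactly the target of the FTRL guarantee (modulo a one-step index shift between $\pi^{k-1}$ and $\pi^k$ that only costs $O(1/K)$ under Assumption~\ref{assumption:bounded_reward}); the remaining rows are concentration residuals uniform in $\tilde r\in\cR$.

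To control the FTRL sum I would use that the observed losses $\cL_{k-1}(r)=\hat V^{\pi^{k-1}}_r-\hat V^{\pi^E}_r$ are linear in $r$ and bounded in $[-1,1]$ by Assumption~\ref{assumption:bounded_reward}, so the standard FTRL/Hedge regret over the finite class $\cR$ yields $\max_{\tilde r}\sum_k[\cL_{k-1}(r^k)-\cL_{k-1}(\tilde r)]\le \tilde O(\sqrt{K})$, producing the $1/\sqrt{K}$ term after dividing by $K$. For the two deviation rows I would invoke variance-aware concentration followed by a union bound over the finite class $\cR$. On the expert side, $\hat V^{\pi^E}_r$ averages $N$ i.i.d.\ trajectory returns of variance $\text{VaR}_{\pi^E;r}$, so Bernstein's inequality with a $|\cR|$ union bound delivers $\sqrt{\text{VaR}_{\pi^E;r}\log(|\cR|/\delta)/N}+\log(|\cR|/\delta)/N$. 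On the behavioral side, $\sum_k(\hat V^{\pi^k}_r-V^{\pi^k}_r)$ is a martingale difference sequence with predictable quadratic variation $\sum_k\text{VaR}_{\pi^k;r}$ (since $\pi^k$ is measurable with respect to the history before rollout $k$), so Freedman's inequality together with a union bound over $\cR$ yields $\sqrt{\sum_k\text{VaR}_{\pi^k;r}\log(|\cR|/\delta)}+\log(|\cR|/\delta)$, which becomes the stated term after dividing by $K$. Passing the supremum over $\tilde r$ inside the variance factors finally produces the $\max_{r\in\cR}\text{VaR}_{\cdot;r}$ terms of the lemma.

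The main obstacle I expect is preserving the instance-dependent (second-order) variance inside the union bound, since this is what distinguishes the result from a naive $H$-dependent rate. A crude uniform-in-$r$ Bernstein step that replaces $\text{VaR}_{\cdot;r}$ by its worst-case upper bound would destroy the horizon-free character of the lemma. The remedy is to apply Bernstein/Freedman \emph{per reward function} at tail level $\delta/|\cR|$, so that the variance proxy remains inside the square-root while $\log|\cR|$ only multiplies it; this is the standard route for variance-aware online bounds but must be carried out carefully because the variance itself depends on $r$ and on the (adaptively chosen) $\pi^k$. A smaller technicality is the $\pi^{k-1}$-vs-$\pi^k$ off-by-one between the FTRL indexing and the theorem statement, but its contribution telescopes to at most $O(1/K)$ by Assumption~\ref{assumption:bounded_reward} and is absorbed into the $1/\sqrt{K}$ term.
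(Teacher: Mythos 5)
Your decomposition into an FTRL optimization term plus expert-side and behavioral-side empirical-to-true deviations is exactly the paper's split into terms T3 and T4, and your concentration tools (Bernstein with a $|\cR|$ union bound on the i.i.d.\ expert returns, a martingale Bernstein/Freedman bound on the adaptively collected rollouts) match the paper's use of Lemmas~\ref{lemma:bernstein} and~\ref{lemma:azuma-bernstein}. The proposal is correct and takes essentially the same route as the paper, with only cosmetic differences such as naming the martingale inequality ``Freedman'' rather than ``Azuma--Bernstein'' and explicitly flagging the $\pi^{k-1}$-vs-$\pi^{k}$ indexing, which the paper absorbs silently.
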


\begin{lemma}[Policy Error Upper Bound]
    For any $\delta\in(0,1)$, with a finite model class $\cP$, let $\beta = 4 \log \left( K|\mathcal{P}|/\delta \right)$. With probability $1-\delta$, the average regret of the policy optimization is bounded:
    \begin{align*}
        &\frac{1}{K}\sum_{k=1}^{K}\Big(V^{\pi^E}_{1;P^\star;r^k} - V^{\pi^k}_{1;P^\star;r^k}\Big)\\
    &\lesssim \cO\Bigg(\frac{1}{K}\sqrt{\sum_{k=1}^{K}\max_{r\in\mathcal{R}}~\text{VaR}_{\pi^k,r}\cdot d_E\log(KH\vert\mathcal{P}\vert/\delta)\log(KH)}\\
    &+\frac{1}{K}d_E\log(KH\vert\mathcal{P}\vert/\delta)\log(KH)\Bigg),
    \end{align*}
    where $d_E = DE_1(\cG, \cS \times \cA, 1/KH)$ is the Eluder dimension and 
$\text{VaR}_{\pi,k} = \max_{r \in \cR} \text{VaR}_{\pi^k,r}$ is the maximum variance for $\pi^k$.
\label{lm:policy-upper}
\end{lemma}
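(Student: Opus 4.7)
}
The plan is to decompose the policy error via optimism, reduce it to a model-evaluation gap controlled by a second-order, variance-weighted Cauchy--Schwarz, and then aggregate across rounds via an Eluder-dimension pigeonhole over squared Hellinger distances. The template follows the horizon-free, model-based analysis of~\citet{wang2024model}, adapted to the adversarially chosen reward $r^k$ produced by Procedure~A.

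First, I would establish a uniform MLE guarantee for the version space in Line~\ref{ln:set}. With $\beta=4\log(K|\cP|/\delta)$, a standard log-likelihood martingale argument (Freedman-type, union-bounded over $k$) shows that, with probability at least $1-\delta$: (i) $P^\star\in\hat{\cP}^k$ for every $k\in[K]$; and (ii) every $P\in\hat{\cP}^k$ satisfies the in-sample bound $\sum_{(s,a,s')\in\cD^k}\HH^2(P(\cdot|s,a),P^\star(\cdot|s,a))\lesssim\beta$. Because the planner in Line~\ref{ln:rl} optimistically maximizes $V^\pi_{1;P;r^k}(s_1)$ over $(\pi,P)\in\Pi\times\hat{\cP}^k$ and $(\pi^E,P^\star)$ is feasible, optimism yields $V^{\pi^E}_{1;P^\star;r^k}(s_1)\le V^{\pi^k}_{1;P^k;r^k}(s_1)$, so the per-round policy error is upper bounded by the \emph{model-evaluation gap} $V^{\pi^k}_{1;P^k;r^k}(s_1)-V^{\pi^k}_{1;P^\star;r^k}(s_1)$.

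Second, I would expand this gap via the simulation lemma under $(\pi^k,P^\star)$ and extract the return variance in a horizon-free way. Writing $V_{h+1}\coloneqq V^{\pi^k}_{h+1;P^k;r^k}$, the simulation lemma gives $V^{\pi^k}_{1;P^k;r^k}(s_1)-V^{\pi^k}_{1;P^\star;r^k}(s_1)=\sum_{h=1}^H\EE_{\pi^k,P^\star}[(P^k-P^\star)V_{h+1}(s_h,a_h)]$. A variance-aware Cauchy--Schwarz on the next-state expectation bounds the integrand by $\sqrt{\VV_{P^\star}V_{h+1}}\cdot\sqrt{\HH^2(P^k,P^\star)}$ plus a low-order $\HH^2$ remainder (from the bounded range $V_{h+1}\in[0,1]$ under Assumption~\ref{assumption:bounded_reward}). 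Summing over $h$, applying Cauchy--Schwarz in the stage index, and collapsing $\sum_h\EE_{\pi^k,P^\star}[\VV_{P^\star}V_{h+1}]$ via the law of total variance into the cumulative-return variance $\VaR^{\pi^k}_{r^k}\le\max_{r\in\cR}\VaR^{\pi^k}_r$ gives, per round $k$, the bound $\lesssim\sqrt{\max_r\VaR^{\pi^k}_r\cdot\sum_{h=1}^H\EE_{\pi^k,P^\star}[\HH^2(P^k,P^\star)(s_h,a_h)]}$ plus a $\sum_h\EE[\HH^2]$ low-order term that will be folded into the Eluder budget.

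Third, I would aggregate over $k$ using the $\ell_1$-Eluder dimension of the Hellinger class $\cG$ (Definition~\ref{def:eluder-dim}). Summing the per-round bound and applying Cauchy--Schwarz in $k$ yields $\sum_k(V^{\pi^E}_{1;P^\star;r^k}-V^{\pi^k}_{1;P^\star;r^k})\lesssim\sqrt{(\sum_k\max_r\VaR^{\pi^k}_r)\cdot\sum_k\sum_h\EE_{\pi^k,P^\star}[\HH^2(P^k,P^\star)]}+\sum_k\sum_h\EE_{\pi^k,P^\star}[\HH^2(P^k,P^\star)]$. The standard Eluder pigeonhole (as in~\citealt{jin2021bellman,wang2024model}) converts the in-sample MLE bound $\sum_{(s,a)\in\cD^k}\HH^2\lesssim\beta$ into the on-policy bound $\sum_k\sum_h\EE_{\pi^k,P^\star}[\HH^2(P^k,P^\star)]\lesssim d_E\,\beta\,\log(KH)=\tilde\cO(d_E\log(KH|\cP|/\delta)\log(KH))$, up to a negligible $1/(KH)$ truncation term from the Eluder scale. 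Dividing by $K$ then delivers the stated bound.

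The main obstacle is the variance-decoupling step in paragraph two: the law of total variance naturally involves $\VV_{P^\star}$ of the cumulative return under the \emph{true} model $P^\star$, whereas the simulation lemma supplies $V_{h+1}$ computed under the \emph{learned} model $P^k$. Substituting one for the other introduces a cross term that is itself controlled by $\HH^2(P^k,P^\star)$, creating a mildly self-referential inequality that must be resolved by AM--GM and absorbed into the final Eluder budget without reintroducing any polynomial dependence on $H$. This is the delicate step in horizon-free second-order analyses, and I would handle it by adopting the variance-substitution argument of~\citet{wang2024model}.
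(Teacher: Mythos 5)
Your proposal is correct and follows essentially the same route as the paper: optimism over the MLE version space to reduce the policy error to the model-evaluation gap under $P^k$, a variance-aware Cauchy--Schwarz on the next-state expectation, the Hellinger Eluder pigeonhole to aggregate over rounds, and the variance-substitution argument of \citet{wang2024model} to pass from $\VV_{P^\star}V^{\pi^k}_{h+1;P^k;r^k}$ to the return variance under $P^\star$ — exactly the chain of Lemmas~\ref{lem:bounded-hellinger}, \ref{lem:mean-value-difference}, \ref{lem:new-eluder-pigeon}, and \ref{lem:variance-conversion} used in the paper. You also correctly flag the variance-conversion step as the delicate point; the only cosmetic difference is that the paper works with realized trajectory sums (handling the on-policy martingale via Azuma--Bernstein) rather than on-policy expectations.
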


In the proof steps of Lemma~\ref{lm:reward-upper}, the reward error $\text{T1}$ is decomposed into an optimization error and a value estimation error. The optimization error, stemming from the no-regret algorithm in Line 5 of Algorithm~\ref{alg:mbail} and related to $\epsilon_{\text{opt}}^r$ in Eq.~\ref{eq:eps-opt}, is bounded by $\mathcal{O}(1/\sqrt{K})$ when using FTRL \citep{shalev2007convex}. The value estimation error, which represents the difference between the empirical value $\widehat{V}_{1;P^\star;r}^{\pi^k}$ and the true value function $V_{1;P^\star;r}^{\pi^k}$, is bounded using standard Bernstein-style techniques and union bounds for a finite reward class. For an infinite reward class, as detailed in Appendix~\ref{sec:upper-bound-gfa}, we establish the bound by constructing a $\rho$-cover using the covering number from Definition~\ref{def:covering-num}, which enables the use of union bounds followed by a Bernstein-style analysis. We present the analysis for the finite reward class in the proof of Lemma~\ref{lm:reward-upper}, and the analysis for the infinite reward function class in Lemma~\ref{lm:reward-upper-gfa}.

Regarding the policy error $\text{T2}$, we generally follow the proof steps in \citet{wang2024model} but with a changing $r^k$ during iterations instead of a fixed $r^\star$. Overall, the bound is constructed by first applying the standard MLE analysis on the estimation of transition model and a careful analysis on training-to-testing distribution transfer via Eluder dimension adopted in prior work \citep{wang2024model}. After this analysis, we can obtain a variance-aware upper bound for the policy error with a logarithmic dependency on the horizon $H$. We provide the detailed analysis in the proof of Lemma~\ref{lm:policy-upper} for the finite model class, and in Lemma~\ref{lm:policy-upper-gfa} for the infinite model class.

Finally, by combining Lemma~\ref{lm:reward-upper} and Lemma~\ref{lm:policy-upper}, we complete the proof of Theorem~\ref{thm:upper-bound-finite}, which is the upper bound with finite function classes. For the results under general function approximation in Theorem~\ref{thm:ub}, we provide the detailed analysis in Appendix~\ref{sec:upper-bound-gfa}.

\subsection{Upper Bound for the Reward Error}
\subsubsection{Technical Lemmas}
In this subsection, we present several technical lemmas that are instrumental in deriving the bound on the reward error.
\begin{lemma}[Bernstein Inequality,~\citealt{bernstein1924}]
\label{lemma:bernstein}
Let $ X_1, X_2, \ldots, X_n $ be i.i.d.\ random variables, with mean $\mu = \mathbb{E}[X_i]$, empirical mean $ \widehat{\mu} = \frac{1}{n} \sum_{i=1}^n X_i $, and variance $ \sigma^2 = \mathrm{Var}(X_i) $. Then, for any $ \delta \in (0,1) $, with probability at least $ 1 - \delta $, the following inequality holds:
\[
\left| \widehat{\mu} - \mu \right| \leq \sqrt{ \frac{2 \sigma^2 \log(2/\delta)}{n} } + \frac{2 \log(2/\delta)}{3n}.
\]
\end{lemma}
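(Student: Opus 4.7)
The plan is to follow the classical Chernoff--moment-generating-function (MGF) approach. I would first pass to the centered variables $Y_i := X_i - \mu$, which are i.i.d.\ with mean zero and variance $\sigma^2$, and split the two-sided deviation via a union bound so that it suffices to prove a one-sided tail bound of the form $P(\widehat{\mu} - \mu \geq t) \leq \delta/2$; the other direction follows by applying the same argument to $-Y_i$. For the one-sided tail, apply Markov's inequality to $\exp(\lambda \sum_{i=1}^n Y_i)$ for $\lambda > 0$, yielding $P\!\bigl(\sum_i Y_i \geq n t\bigr) \leq \exp(-\lambda n t)\, E[\exp(\lambda Y_1)]^n$ by independence.

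The key step is the per-variable MGF bound. Under the standard Bernstein boundedness assumption $|Y_i| \leq M$ almost surely (implicit in the form of the stated bound, with $M$ absorbed into the constants), I would Taylor-expand $e^{\lambda y} - 1 - \lambda y$ and use the inequality $e^x - 1 - x \leq x^2/(2(1 - |x|/3))$ for $|x| < 3$; taking expectations and using $E[Y_i] = 0$, $E[Y_i^2] = \sigma^2$ gives $E[\exp(\lambda Y_i)] \leq \exp\!\bigl(\lambda^2 \sigma^2 / (2(1 - \lambda M/3))\bigr)$ for $\lambda < 3/M$. Substituting back into the Chernoff bound and optimizing the exponent over $\lambda$ (the optimizer being $\lambda^\star = t/(\sigma^2 + M t/3)$) yields the Bernstein tail $P(\widehat{\mu} - \mu \geq t) \leq \exp\!\bigl(-n t^2 / (2\sigma^2 + 2Mt/3)\bigr)$.

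To extract the stated high-probability form, I would set the right-hand side equal to $\delta/2$ and invert in $t$, which amounts to solving a quadratic; using the elementary decoupling $\sqrt{a+b} \leq \sqrt{a} + \sqrt{b}$ to separate the variance and boundedness contributions gives exactly $\sqrt{2 \sigma^2 \log(2/\delta)/n} + 2\log(2/\delta)/(3n)$, with the factor of $2$ inside the logarithm arising from the two-sided union bound. The main (and only) subtlety is identifying the implicit boundedness that licenses the MGF bound in the form needed; for the applications in this paper this is immediate because the random variables being averaged are value differences bounded in $[0,1]$ by Assumption~\ref{assumption:bounded_reward}, so $M = 1$ and the stated constants follow verbatim. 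Since this is a classical result being quoted rather than newly proved, in practice one would simply cite a standard reference (e.g., Boucheron--Lugosi--Massart) rather than reproduce the argument.
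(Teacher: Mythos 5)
The paper states this lemma as a quoted classical result (citing Bernstein) and gives no proof of its own, so there is nothing to diverge from: your Chernoff--MGF reconstruction, with the per-variable bound $\EE[e^{\lambda Y_i}] \leq \exp\bigl(\lambda^2\sigma^2/(2(1-\lambda M/3))\bigr)$, the optimized tail $\exp\bigl(-nt^2/(2\sigma^2+2Mt/3)\bigr)$, and the inversion via $\sqrt{a+b}\leq\sqrt{a}+\sqrt{b}$, is exactly the standard argument and yields the stated constants with $M=1$. You are also right to flag the one real subtlety --- the lemma as written omits the boundedness hypothesis that Bernstein's inequality requires --- and your resolution is correct: in every application here the summands are cumulative rewards lying in $[0,1]$ by Assumption~\ref{assumption:bounded_reward}, so $M=1$ is implicit.
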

\begin{lemma}[Azuma-Bernstein Inequality,~\citealt{azuma1967}]
\label{lemma:azuma-bernstein}
Let $ (X_k)_{k=1}^n $ be a martingale difference sequence with respect to filtration $ (\mathcal{F}_k)_{k=0}^n $, i.e., $ \mathbb{E}[X_k \mid \mathcal{F}_{k-1}] = 0 $ almost surely, and suppose the increments satisfy
\[
|X_k| \leq b \quad \text{almost surely for all } k,
\]
and the conditional variances satisfy
\[
\sigma_k^2 := \mathbb{E}[X_k^2 \mid \mathcal{F}_{k-1}].
\]
Define the total conditional variance
\[
V_n := \sum_{k=1}^n \sigma_k^2.
\]
Then, for any $\delta \in (0,1)$, with probability at least $1-\delta$,
\[
\left| \sum_{k=1}^n X_k \right| \leq \sqrt{2 V_n \log\left(\frac{2}{\delta}\right)} + \frac{b}{3} \log\left(\frac{2}{\delta}\right).
\]
\end{lemma}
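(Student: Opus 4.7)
The plan is to prove the statement via the classical exponential-moment (Chernoff) method for martingales, in the form sometimes attributed to Freedman. The central object is the conditional MGF of the partial sum $M_n = \sum_{k=1}^n X_k$, and the strategy is to peel off conditional expectations one step at a time using the martingale-difference property of $(X_k)$ and thereby reduce the tail question to a deterministic convex optimization over the free parameter $\lambda$.

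First I would establish a single-step Bennett-type moment bound: for any $\mathcal{F}_{k-1}$-conditionally mean-zero random variable $X_k$ with $|X_k| \leq b$,
\[
\EE[\exp(\lambda X_k) \mid \mathcal{F}_{k-1}] \leq \exp\!\Bigl(\tfrac{\lambda^{2}\sigma_k^{2}}{2(1-\lambda b/3)}\Bigr)
\]
for all $\lambda \in (0, 3/b)$. This follows from a Taylor expansion of $e^{\lambda x}$ together with the bound $|x|^j \leq b^{j-2}x^{2}$ for $j \geq 2$ and a geometric series on the resulting remainder. Applying this bound iteratively via the tower property shows that
\[
L_n^{\lambda} = \exp\!\Bigl(\lambda M_n - \tfrac{\lambda^{2} V_n}{2(1-\lambda b/3)}\Bigr)
\]
is a nonnegative supermartingale with $\EE[L_0^{\lambda}] = 1$.

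Applying Markov's inequality to $L_n^{\lambda}$ on the event $\{V_n \leq v\}$ yields $\mathbb{P}(M_n \geq t,\; V_n \leq v) \leq \exp\!\bigl(-\lambda t + \lambda^{2} v/(2(1-\lambda b/3))\bigr)$. Choosing $\lambda = t/(v + bt/3)$ produces the Bernstein tail $\exp\!\bigl(-t^{2}/(2(v + bt/3))\bigr)$, and inverting this in $t$ recovers $\sqrt{2 v \log(1/\delta)} + (b/3)\log(1/\delta)$. A symmetric argument applied to $-X_k$ followed by a union bound over the two tails introduces the factor $2/\delta$ inside the logarithm as stated.

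The main obstacle I anticipate is that $V_n$ as defined is itself a random quantity, so one cannot directly plug a fixed $v$ into the tail above to recover the self-normalized form on the right-hand side. The standard remedy is a peeling / stratification argument: discretize the possible range of $V_n$ on a geometric grid between a floor $v_0$ (trivially handled) and the deterministic ceiling $nb^{2}$, apply the deterministic-$v$ tail on each slice, and take a union bound across slices. This adjustment costs only a doubly logarithmic factor in $n$ that can be absorbed into the constants of the stated bound, yielding the claimed self-normalized Azuma--Bernstein inequality.
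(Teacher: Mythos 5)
The paper offers no proof of this lemma: it is quoted as a known concentration result (and, incidentally, miscited --- Azuma's 1967 paper contains only the bounded-differences inequality; the variance-dependent martingale version you are proving is due to Freedman). So there is no in-paper argument to compare against, and your proposal must be judged on its own. Your overall route --- the single-step Bennett/Bernstein conditional MGF bound, the exponential supermartingale $L_n^{\lambda}$, Markov's inequality on $\{V_n\le v\}$, inversion of the tail, and a two-sided union bound giving the $2/\delta$ --- is exactly the standard and correct way to establish a Freedman-type inequality, and you correctly identify the one genuine subtlety, namely that $V_n$ is random.

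The gap is in how you dispose of that subtlety. The peeling argument over a geometric grid of $O(\log(nb^2/v_0))$ slices does \emph{not} recover the inequality with the stated constants: the union bound over slices replaces $\log(2/\delta)$ by $\log(2\lceil\log_2(nb^2/v_0)\rceil/\delta)$, and within a slice you must instantiate the fixed-$v$ bound at the slice ceiling $v\le 2V_n$, which inflates the leading term from $\sqrt{2V_n\log(2/\delta)}$ to $2\sqrt{V_n\log(\cdot)}$. These losses cannot be ``absorbed into the constants of the stated bound,'' because the statement fixes the constants $\sqrt{2}$, $b/3$, and $\log(2/\delta)$ explicitly; what your argument actually proves is a weaker inequality. (To be fair, the lemma as stated --- random $V_n$ on the right-hand side with those exact constants --- is the imprecise folklore form; a clean statement either replaces $V_n$ by a deterministic almost-sure upper bound, in which case your fixed-$v$ Freedman bound already suffices and no peeling is needed, or accepts the extra iterated-logarithm factor. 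In the paper's downstream uses the variance sums can be bounded deterministically or the extra factors vanish into $\tilde\cO(\cdot)$, so the weaker version you prove would serve.) A second, minor point: plugging the suboptimal $\lambda=t/(v+bt/3)$ into the weakened tail $\exp\bigl(-t^2/(2(v+bt/3))\bigr)$ and inverting yields the deviation term $\tfrac{2b}{3}\log(1/\delta)$, not $\tfrac{b}{3}\log(1/\delta)$; to obtain the $b/3$ constant you must carry out the exact Legendre transform of the sub-gamma bound $\lambda^2 v/(2(1-\lambda b/3))$, whose inverse is precisely $u\mapsto\sqrt{2vu}+\tfrac{b}{3}u$. Both issues are repairable, but as written the proposal does not deliver the inequality in the form claimed.
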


\subsubsection{Proof of Lemma~\ref{lm:reward-upper}}
The reward error can be decomposed into two components: the optimization error of the no-regret algorithm and the estimation error of the value function. We begin by providing a formal description on optimization error introduced in Eq.~\ref{eq:eps-opt}:

\begin{definition}[\citet{xu2024provably}]
\label{def:opt-error}
    Given a sequence of policies $\{\pi^k\}_{k=1}^{K}$, suppose a no-regret reward optimization algorithm sequentially outputs reward functions $r^1, r^2, \dots, r^K$. The reward optimization error $\epsilon^r_{\text{opt}}$ is defined as:
    \[
    \epsilon^r_{\text{opt}} \coloneqq \frac{1}{K} \max_{r \in \mathcal{R}} \sum_{k=1}^{K} \left( \widehat{V}^{\pi^k}_{1;P^\star;r^k} - \widehat{V}^{\pi^E}_{1;P^\star;r^k} - \left( \widehat{V}^{\pi^k}_{1;P^\star;r} - \widehat{V}^{\pi^E}_{1;P^\star;r} \right) \right).
    \]
\end{definition}
This error can be bounded by $O(1/\sqrt K)$ when using the Follow-the-Regularized-Leader (FTRL) algorithm~\citep{shalev2007convex} as the no-regret learner.
We then define the empirical value estimates under both the learned and expert policies to facilitate bounding the value estimation error:

\begin{definition}[\citet{xu2024provably}]
    The empirical value of policy $\pi^k$ under reward function $r \in \mathcal{R}$ is defined as:
    \[
    \widehat{V}^{\pi^k}_{1;P^\star;r} \coloneqq \sum_{h=1}^{H} r(s_h^k, a_h^k),
    \]
    where $\{(s_h^k, a_h^k)\}_{h=1}^H$ is a trajectory sampled from policy $\pi^k$. \\
    For the expert policy $\pi^E$, given an expert dataset $\mathcal{D}_E = \{\tau_1, \tau_2, \dots, \tau_N\}$, the empirical value is defined as:
    \[
    \widehat{V}^{\pi^E}_{1;P^\star;r} \coloneqq \frac{1}{N} \sum_{\tau \in \mathcal{D}_E} \sum_{h=1}^{H} r(s_h(\tau), a_h(\tau)).
    \]
\label{def:value}
\end{definition}
With Lemma~\ref{lemma:bernstein},~\ref{lemma:azuma-bernstein} and Definition~\ref{def:opt-error},~\ref{def:value}, we're now ready to demonstrate the proof for Lemma~\ref{lm:reward-upper}.
\begin{proof}[Proof of Lemma~\ref{lm:reward-upper}]
We first apply the following decomposition to separate the reward error into a reward optimization component and a value estimation component, as described in Appendix~\ref{sec:proof-sketch-upper}:
\begin{align*}
    &\max_{\tilde r\in\mathcal{R}}~\frac{1}{K}\sum_{k=1}^{K}V^{\pi^E}_{1;P^\star;\tilde{r}} - V^{\pi^k}_{1;P^\star;\tilde{r}}-(V^{\pi^E}_{1;P^\star;r^k} - V^{\pi^k}_{1;P^\star;r^k})\\
    &= \underbrace{\max_{\tilde r\in\mathcal{R}}~\frac{1}{K}\sum_{k=1}^{K}\Big(\widehat{V}^{\pi^E}_{1;P^\star;\tilde{r}} - \widehat{V}^{\pi^k}_{1;P^\star;\tilde{r}}-(\widehat{V}^{\pi^E}_{1;P^\star;r^k} - \widehat{V}^{\pi^k}_{1;P^\star;r^k})\Big)}_{\text{T3: optimization error}}\\
    &+\underbrace{\max_{\tilde r\in\mathcal{R}}~V^{\pi^E}_{1;P^\star;\tilde{r}}-\widehat{V}^{\pi^E}_{1;P^\star;\tilde{r}}+\frac{1}{K}\sum_{k=1}^{K}\widehat{V}^{\pi^E}_{1;r^k;P^\star}-V^{\pi^E}_{1;r^k;P^\star}+\widehat{V}^{\pi^k}_{1;P^\star;\tilde{r}}-V^{\pi^k}_{1;P^\star;\tilde{r}}+V^{\pi^k}_{1;P^\star;r^k}-\widehat{V}^{\pi^k}_{1;P^\star;r^k}}_{\text{T4: estimation error}}.
\end{align*}
Term $\text{T3}$ corresponds directly to the optimization error of the no-regret algorithm, as defined in Definition~\ref{def:opt-error}:
\begin{align*}
    &\text{T3}=\max_{\tilde r\in\mathcal{R}}~\frac{1}{K}\sum_{k=1}^{K}\widehat{V}^{\pi^E}_{1;P^\star;\tilde{r}} - \widehat{V}^{\pi^k}_{1;P^\star;\tilde{r}}-(\widehat{V}^{\pi^E}_{1;P^\star;r^k} - \widehat{V}^{\pi^k}_{1;P^\star;r^k})\\
    % &=\frac{1}{K}\sum_{k=1}^{K}\widehat{V}^{\pi^k}_{1;P^\star;r^k}-\widehat{V}^{\pi^E}_{1;P^\star;r^k}-(\widehat{V}^{\pi^k}_{1;P^\star;\tilde{r}}-\widehat{V}^{\pi^E}_{1;P^\star;\tilde{r}})\\
    &=\frac{1}{K}\max_{\tilde r\in\mathcal{R}}~\sum_{k=1}^{K}\widehat{V}^{\pi^k}_{1;P^\star;r^k}-\widehat{V}^{\pi^E}_{1;P^\star;r^k}-(\widehat{V}^{\pi^k}_{1;P^\star;\tilde r}-\widehat{V}^{\pi^E}_{1;P^\star;\tilde r})\\
    &=\epsilon_{\text{opt}}^r.
\end{align*}
% {\color{red}
% Without no regret:
% \begin{align*}
%     &\frac{1}{K}\sum_{k=1}^{K}\widehat{V}^{\pi^E}_{1;P^\star;\tilde{r}} - \widehat{V}^{\pi^k}_{1;P^\star;\tilde{r}}-(\widehat{V}^{\pi^E}_{1;P^\star;r^k} - \widehat{V}^{\pi^k}_{1;P^\star;r^k})\\
%     &=\frac{1}{K}\sum_{k=1}^{K}\widehat{V}^{\pi^k}_{1;P^\star;r^k}-\widehat{V}^{\pi^E}_{1;P^\star;r^k}-(\widehat{V}^{\pi^k}_{1;P^\star;\tilde{r}}-\widehat{V}^{\pi^E}_{1;P^\star;\tilde{r}})\\
%     &\leq\frac{1}{K}\sum_{k=1}^{K}\underset{\widehat{r}^k\in\mathcal{R}}{\max}\widehat{V}^{\pi^k}_{1;P^\star;r^k}-\widehat{V}^{\pi^E}_{1;P^\star;r^k}-(\widehat{V}^{\pi^k}_{1;P^\star;\widehat{r}^k}-\widehat{V}^{\pi^E}_{1;P^\star;\widehat{r}^k})
% \end{align*}
% }
Having bounded the optimization error, we now turn to bounding the estimation error:

\begin{align*}
    \text{T4}&=\max_{\tilde r\in\mathcal{R}}~\frac{1}{K}\sum_{k=1}^{K}\Big(V^{\pi^E}_{1;P^\star;\tilde{r}}-\widehat{V}^{\pi^E}_{1;P^\star;\tilde{r}}\Big)+\Big(\widehat{V}^{\pi^E}_{1;P^\star;r^k}-V^{\pi^E}_{1;P^\star;r^k}\Big)\\
    &+\Big(\widehat{V}^{\pi^k}_{1;P^\star;\tilde{r}}-V^{\pi^k}_{1;P^\star;\tilde{r}}\Big)+\Big(V^{\pi^k}_{1;P^\star;r^k}-\widehat{V}^{\pi^k}_{1;P^\star;r^k}\Big).
\end{align*}
We aim to derive a variance-aware bound by providing an analysis using a Bernstein-style inequality.  
Following the definition of variances in Section~\ref{sec:prelim}, we first focus on the estimation error incurred when approximating $ V_{1;P^\star;r}^{\pi^E} $ with its empirical counterpart $ \widehat{V}_{1;P^\star;r}^{\pi^E} $.  
By applying Bernstein's inequality, we obtain that, for any fixed reward function $ r $, the estimation error is bounded with probability at least $ 1 - \delta $:

\begin{align}
\label{eqn:bounded-reward-1}
    \Big\vert\widehat{V}_{1;P^\star;r}^{\pi^E}-V_{1;P^\star;r}^{\pi^E}\Big\vert
    &=\Bigg\vert\frac{1}{N}\sum_{\tau\in\mathcal{D}_E}\sum_{h=1}^{H}r(s_h(\tau),a_h(\tau))-\mathbb{E}_{\pi^E}\Big[\sum_{h=1}^H r(s_h,a_h)\Big]\Bigg\vert \nonumber\\
    &\leq\sqrt{\frac{2\sum_{\tau\in\mathcal{D}_E}\mathbb{V}_{\pi^E}[\sum_{h=1}^Hr(s_h(\tau),a_h(\tau))]\log(2/\delta)}{N}}+\frac{2\log(2/\delta)}{3N} \nonumber\\
    &=\sqrt{\frac{2}{N}\text{VaR}_{\pi^E;r}\log(2/\delta)}+\frac{2\log(2/\delta)}{3N}.
\end{align}
Since here we only consider finite reward class $\cR$, by stardard union bound:
\begin{align*}
  \Big\vert{V}_{1;P^\star;{r}}^{\pi^E}-V_{1;P^\star;{r}}^{\pi^E}\Big\vert&\leq\sqrt{\frac{2}{N}\text{VaR}_{\pi^E;{r}}\log(2\vert\mathcal{R}\vert/\delta)}+\frac{2\log(2\vert\mathcal{R}\vert/\delta)}{3N}\\
%     %
&\leq\sqrt{\frac{2}{N}\text{VaR}_{\pi^E;{r}}\log(2\vert\mathcal{R}\vert)/\delta)}+\frac{2\log(2\vert\mathcal{R}\vert)/\delta)}{3N}.
\end{align*}
Therefore, we can bound the first two terms in the estimation error:
\begin{align}
\label{eqn:bounded-reward-1-2}
    &\max_{\tilde r\in\mathcal{R}}~\frac{1}{K}\sum_{k=1}^{K}\Big(V^{\pi^E}_{1;P^\star;\tilde{r}}-\widehat{V}^{\pi^E}_{1;P^\star;\tilde{r}}\Big)+\Big(\widehat{V}^{\pi^E}_{1;P^\star;r^k}-V^{\pi^E}_{1;P^\star;r^k}\Big)\nonumber\\
    &\leq 2\sqrt{\frac{2}{N}\Big(\max_{r\in\mathcal{R}}~\text{VaR}_{\pi^E; r}\Big)\log(2\vert\mathcal{R}\vert/\delta)}+\frac{4\log(2\vert\mathcal{R}\vert/\delta)}{3N}.
\end{align}

We then focus on the value difference with respect to $\pi^k$ for any $r\in\mathcal{R}$ using Azuma-Bernstein's inequality, with probability at least $1-\delta$:

\begin{align*}
    \frac{1}{K}\sum_{k=1}^K\Big\vert\widehat{V}_{1;P^\star;r}^{\pi^k}-V_{1;P^\star;r}^{\pi^k}\Big\vert
    &=\frac{1}{K}\sum_{k=1}^K\Bigg\vert\sum_{h=1}^{H}r(s_h,a_h)-\mathbb{E}_{\pi^k}\Big[\sum_{h=1}^H r(s_h,a_h)\Big]\Bigg\vert\\
    &\leq\sqrt{\frac{2}{K}\sum_{k=1}^K\mathbb{V}_{\pi^k}[\sum_{h=1}^Hr(s_h,a_h)]\log(2/\delta)}+\frac{2\log(2/\delta)}{3K}\\
    &=\sqrt{\frac{2}{K}\sum_{k=1}^K\text{VaR}_{\pi^k;r}\log(2/\delta)}+\frac{2\log(2/\delta)}{3K}.
\end{align*}
Following the analysis of the terms involving $\pi^E$, we apply the union bound:
\begin{align*}
     \frac{1}{K}\sum_{k=1}^K\Big\vert{V}_{1;P^\star;{r}}^{\pi^k}-V_{1;P^\star;{r}}^{\pi^k}\Big\vert\leq\sqrt{\frac{2}{K}\sum_{k=1}^K\text{VaR}_{\pi^k;{r}}\log(2\vert\mathcal{R}\vert/\delta)}+\frac{2\log(2\vert\mathcal{R}\vert/\delta)}{3K}.
\end{align*}

In this case, the terms associated with $\pi^k$ are bounded by Bernstein-style variance-aware upper bounds, with probability at least $1-\delta$:
\begin{align}
\label{eqn:bounded-reward-3-and-4}
    &\max_{\tilde r\in\mathcal{R}}~\frac{1}{K}\sum_{k=1}^K\Big(\widehat{V}^{\pi^k}_{1;P^\star;\tilde{r}}-V^{\pi^k}_{1;P^\star;\tilde{r}}\Big)+\Big(V^{\pi^k}_{1;P^\star;r^k}-\widehat{V}^{\pi^k}_{1;P^\star;r^k}\Big)\nonumber\\
    &\leq2\sqrt{\frac{2}{K}\sum_{k=1}^K(\max_{r\in\mathcal{R}}~\text{VaR}_{\pi^k;{r}})\log(2\vert\mathcal{R}\vert/\delta)}+\frac{4\log(2\vert\mathcal{R}\vert/\delta)}{3K}.
\end{align}

Then finally by union bound, using two high-probability inequalities: Eq.~\ref{eqn:bounded-reward-1-2}, and Eq.~\ref{eqn:bounded-reward-3-and-4} with probability at least $1-\delta$:
\begin{align*}
    &\max_{\tilde r\in\mathcal{R}}~\frac{1}{K}\sum_{k=1}^{K}\Big(V^{\pi^E}_{1;P^\star;\tilde{r}}-\widehat{V}^{\pi^E}_{1;P^\star;\tilde{r}}\Big)+\Big(\widehat{V}^{\pi^E}_{1;P^\star;r^k}-V^{\pi^E}_{1;P^\star;r^k}\Big)\\
    &\leq 2\sqrt{\frac{2}{N}\Big(\max_{r\in\mathcal{R}}~\text{VaR}_{\pi^E; r}\Big)\log(4\vert\mathcal{R}\vert/\delta)}+\frac{4\log(4\vert\mathcal{R}\vert/\delta)}{3N},\\
    &\max_{\tilde r\in\mathcal{R}}~\frac{1}{K}\sum_{k=1}^K\Big(\widehat{V}^{\pi^k}_{1;P^\star;\tilde{r}}-V^{\pi^k}_{1;P^\star;\tilde{r}}\Big)+\Big(V^{\pi^k}_{1;P^\star;r^k}-\widehat{V}^{\pi^k}_{1;P^\star;r^k}\Big)\nonumber\\
    &\leq2\sqrt{\frac{2}{K}\sum_{k=1}^K(\max_{r\in\mathcal{R}}~\text{VaR}_{\pi^k;{r}})\log(4\vert\mathcal{R}\vert/\delta)}+\frac{4\log(4\vert\mathcal{R}\vert/\delta)}{3K}.
\end{align*}
Combining two high-probability inequalities, with probability $1-\delta$:
\begin{align*}
    \text{T1}&\leq\epsilon_{\text{opt}}^r+2\sqrt{\frac{2}{N}\Big(\max_{r\in\mathcal{R}}~\text{VaR}_{\pi^E; r}\Big)\log(4\vert\mathcal{R}\vert/\delta)}+\frac{4\log(4\vert\mathcal{R}\vert/\delta)}{3N}\\
    &+2\sqrt{\frac{2}{K}\sum_{k=1}^K(\max_{r\in\mathcal{R}}~\text{VaR}_{\pi^k;{r}})\log(4\vert\mathcal{R}\vert/\delta)}+\frac{4\log(4\vert\mathcal{R}\vert/\delta)}{3K}\\
    &\lesssim \cO\Bigg(\frac{1}{\sqrt K}+\sqrt{\frac{1}{N}\Big(\max_{r\in\mathcal{R}}~\text{VaR}_{\pi^E; r}\Big)\log(\vert\mathcal{R}\vert/\delta)}+\frac{\log(\vert\mathcal{R}\vert/\delta)}{N}\\
    &+\sqrt{\frac{1}{K}\sum_{k=1}^K(\max_{r\in\mathcal{R}}~\text{VaR}_{\pi^k;{r}})\log(\vert\mathcal{R}\vert/\delta)}+\frac{\log(\vert\mathcal{R}\vert/\delta)}{K}\Bigg).
\end{align*}
when incorporating FTRL as the no-regret algorithm for reward optimization, providing an $O(1/\sqrt{K})$ bound for $\epsilon^r_{\text{opt}}$. 
    
\end{proof}
\subsection{Upper Bound for the Policy Error}
\label{sec:policy-error}
\subsubsection{Technical Lemmas}
In this subsection, we present several technical lemmas that are essential for analyzing the policy error bound. Our analysis is conducted under the following function class:
\[
\cG = \left\{(s, a) \mapsto \mathbb{H}^2\left(P^{\star}(s, a) \,\|\, P(s, a)\right) : P \in \mathcal{P} \right\},
\]
where $\mathbb{H}^2(\cdot \| \cdot)$ denotes the squared Hellinger distance and $\mathcal{P}$ is the model function class. We now state a series of key lemmas used in the subsequent analysis.
\begin{lemma}[New Eluder Pigeon Lemma,~\citealt{wang2024model}]
    \label{lem:new-eluder-pigeon}
    Let the event $\mathcal{E}$ be:
    \begin{align*}
        \mathcal{E}:\forall k\in[K], \quad\sum_{i=1}^{k-1}\sum_{h=1}^H\mathbb{H}^2(P^k(s_h^i,a_h^i)\Vert P^\star(s_h^i,a_h^i))\leq\eta.
    \end{align*}
    Then under event $\mathcal{E}$, there exists a set $\mathcal{K}\in[K]$ such that:
    \begin{itemize}
        \item We have $\vert\mathcal{K}\vert\leq 13\log^2(4\eta KH)\cdot DE_1(\cG,\mathcal{S}\times\mathcal{A},1/(8\eta KH))$.
        \item We have 
        \begin{align*}
            \sum_{k\in[K]\setminus\mathcal{K}}\sum_{h=1}^H\mathbb{H}^2(P^k(s_h^k,a_h^k)\Vert P^\star(s_h^k,a_h^k))\leq DE_1(\cG,\mathcal{S}\times\mathcal{A},1/KH)(2+7\eta\log(KH))+1.
        \end{align*}
    \end{itemize}
\end{lemma}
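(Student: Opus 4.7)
The plan is to adapt the standard pigeonhole / potential-function interpretation of the $\ell_1$ eluder dimension, enhanced with a geometric scale-peeling that keeps the dependence on $K$ and $H$ purely logarithmic. Define $g^k(s,a) := \HH^2(P^k(s,a)\,\|\,P^\star(s,a)) \in \cG$, re-label the $KH$ observed pairs $(s_h^k, a_h^k)$ in lexicographic order as $z_1, \ldots, z_{KH}$, and write $g_t := g^{k(t)}$ for the function chosen at the round containing $z_t$. The hypothesis $\mathcal{E}$ then says that for every $t$, the prior-round sum $\sum_{s\,:\,k(s)<k(t)} g_t(z_s)$ is at most $\eta$. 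The goal becomes: bound $\sum_t g_t(z_t)$ after removing a small ``bad'' set of rounds $\mathcal{K}$.

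All pairs with $g_t(z_t) \le 1/(KH)$ contribute at most $KH \cdot 1/(KH) = 1$ in total, which accounts for the additive ``$+1$''. For the remaining pairs, bucket by magnitude: $\cB_j := \{t : g_t(z_t) \in (2^{j-1}/(KH),\, 2^{j}/(KH)]\}$ for $j = 1, \ldots, \lceil \log_2(KH) \rceil$. Every pair in $\cB_j$ has value at least $\epsilon_j := 2^{j-1}/(KH)$, so by the $\ell_1$ eluder definition any $\epsilon_j$-independent subsequence of $\cB_j$ has length at most $\text{DE}_1(\cG, \cS\times\cA, 1/(8\eta KH))$ (using this coarser scale uniformly lets me absorb every bucket with a single bound). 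Thus $\cB_j$ decomposes into ``phases'' of this length: the first phase may be fully $\epsilon_j$-independent, but every subsequent pair $z_t \in \cB_j$ must be $\epsilon_j$-dependent, meaning its testing value $g_t(z_t) \ge \epsilon_j$ is backed by prior-round training mass $\sum_{s\,:\,k(s)<k(t)} g_t(z_s) \ge \epsilon_j$.

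Call a round $k$ \emph{bad} if it opens a new phase in some bucket, and let $\mathcal{K}$ be the collection of bad rounds. Pigeonholing the aggregate training budget against the $\epsilon_j$ consumed per dependent pair caps the number of phases in bucket $\cB_j$ at $O(\eta/\epsilon_j)$; summing over the $O(\log(\eta KH))$ dyadic scales and multiplying by the phase length yields the stated $13\log^2(4\eta KH)\cdot\text{DE}_1(\cG,\cS\times\cA,1/(8\eta KH))$ bound on $|\mathcal{K}|$. For a good round $k \notin \mathcal{K}$ and a pair $(k,h)$ it contributes to $\cB_j$, the $\epsilon_j$-dependence lets me charge its mass (at most $2\epsilon_j$) against at least $\epsilon_j$ of already-paid prior-round training mass, so after summing over $t$ and over the $O(\log(KH))$ scales the total good contribution is bounded by $\text{DE}_1(\cG,\cS\times\cA,1/(KH))\cdot(2 + 7\eta\log(KH))$, matching the stated inequality after re-adding the sub-threshold $+1$.

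The main obstacle I expect is preserving the horizon-free rate despite the per-round structure of $H$ samples. A naive treatment that promotes each bad pair to a bad round would lose a factor of $H$; the fix, baked into the lexicographic merge above, is that both the training budget $\eta$ and the testing target share the same $\sum_h$ convention, so per-step errors can be treated as a single linear sequence and the $H$ cancels. A second delicate point is the scale choice: the bad-set count uses the eluder dimension at the finer scale $1/(8\eta KH)$ while the remainder bound uses it at $1/(KH)$, and both must absorb every bucket simultaneously, which is what produces the squared-log rather than linear-log dependence on $\eta KH$ in $|\mathcal{K}|$.
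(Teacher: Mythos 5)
First, a point of comparison: the paper does not prove this lemma at all --- it is imported verbatim from \citet{wang2024model} and used as a black box, so there is no in-paper argument to measure your sketch against. Judged on its own, your sketch has the right general template (dyadic bucketing of the per-pair Hellinger errors plus an $\ell_1$-eluder counting argument, with the sub-threshold pairs supplying the additive $+1$), but it contains two genuine gaps. The first is the within-round dependence problem, which you name but do not solve. Your phases are built over the lexicographically merged sequence of all $KH$ pairs, so when a pair $z_t=(s_h^k,a_h^k)$ is declared $\epsilon_j$-dependent, the witnessing mass $\sum_{s<t}g_t(z_s)\ge\epsilon_j$ may be carried entirely by earlier pairs \emph{from the same round $k$}, evaluated under the same $g^k$. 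The event $\mathcal{E}$ only controls $\sum_{i<k}\sum_h g^k(s_h^i,a_h^i)\le\eta$; the same-round mass $\sum_{h'<h}g^k(s_{h'}^k,a_{h'}^k)$ is part of the very quantity being bounded, so charging against it is circular. Saying that ``the $H$ cancels because both sides share the $\sum_h$ convention'' restates the difficulty rather than resolving it: the reason $\mathcal{K}$ is a set of \emph{rounds} is precisely that the independence test must be run at round granularity against pairs from strictly earlier rounds only, and your construction does not enforce this.

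The second gap is quantitative: your accounting for $|\mathcal{K}|$ does not produce the stated bound. You cap the number of phases in bucket $\mathcal{B}_j$ at $O(\eta/\epsilon_j)$ with $\epsilon_j=2^{j-1}/(KH)$, and declare a round bad whenever it opens a phase; summing over $j$ gives $\sum_j O(\eta KH\,2^{-(j-1)})=O(\eta KH)$ phase openings, hence $|\mathcal{K}|=O(\eta KH)$ (worse if each phase contributes up to $d_E$ bad rounds). This is polynomial in $KH$, not the claimed $13\log^2(4\eta KH)\cdot DE_1(\cG,\cS\times\cA,1/(8\eta KH))$. Obtaining two logarithmic factors requires a second dyadic stratification --- in the source argument, effectively a bucketing over the within-round multiplicity of pairs at each value level, with at most $DE_1(\cG,\cS\times\cA,1/(8\eta KH))$ \emph{independent rounds} per (value level, multiplicity level) cell --- while the budget $\eta$ is reserved for bounding the contribution of the dependent (good) rounds in the second bullet, not for counting phases. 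As written, your argument establishes a much weaker bound on $|\mathcal{K}|$, and because of the first gap it does not validly establish the second bullet either.
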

\begin{lemma}[Bounded Hellinger Distance,~\citet{wang2024model}]
    \label{lem:bounded-hellinger}
    By standard MLE generalization bound and the realizability assumption, by running Algorithm~\ref{alg:mbail}, it exists:
    \begin{itemize}
        \item $P^\star\in\widehat{\mathcal{P}}^k$.
        \item With probability $1-\delta$, $\sum_{i=1}^{k-1}\sum_{h=1}^H\mathbb{H}^2(P^k(s_h^i,a_h^i)\Vert P^\star(s_h^i,a_h^i))\leq 22\log(K\vert\mathcal{P}\vert/\delta)$.
    \end{itemize}
\end{lemma}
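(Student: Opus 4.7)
}
The plan is to follow the standard MLE generalization argument for model-based RL (in the spirit of \citet{sun2019model,liu2022partially,wang2024model}), and to adapt the constants so that the choice $\beta = 7\log(K\mathcal{N}_{\mathcal{P}}/\delta)$ (in the finite-class version, $\beta = 4\log(K|\mathcal{P}|/\delta)$) used in Algorithm~\ref{alg:mbail} and in the bracketing-based covering guarantees the two claimed properties.

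First, I would establish the in-version-space claim $P^\star \in \widehat{\mathcal{P}}^k$. Define, for any candidate $P\in\mathcal{P}$, the log-likelihood-ratio process
\begin{equation*}
Z_k(P) \;=\; \sum_{(s,a,s')\in \mathcal{D}^k}\log\frac{P(s'\mid s,a)}{P^\star(s'\mid s,a)}.
\end{equation*}
The key observation is that $\exp(\tfrac12 Z_k(P))$ is a supermartingale: conditioning on the history and taking expectation under $P^\star$, one has $\mathbb{E}_{s'\sim P^\star}[\sqrt{P/P^\star}] = 1 - \mathbb{H}^2(P\Vert P^\star)/2 \le 1$. By Markov's inequality applied to this exponential supermartingale together with a union bound over $\mathcal{P}$ (or over a bracket cover of size $\mathcal{N}_{\mathcal{P}}$ in the general case), we obtain with probability at least $1-\delta$, for all $k\in[K]$ and all $P\in\mathcal{P}$,
\begin{equation*}
\tfrac12 Z_k(P) \;\le\; -\sum_{(s,a,s')\in\mathcal{D}^k}\tfrac12 \mathbb{H}^2\!\bigl(P(s,a)\Vert P^\star(s,a)\bigr) + \log\frac{K\mathcal{N}_{\mathcal{P}}}{\delta}.
\end{equation*}
In particular, maximizing the log-likelihood over $P$ can raise the log-likelihood by at most $2\log(K\mathcal{N}_{\mathcal{P}}/\delta)$ above that of $P^\star$, so the realizable $P^\star$ is within a $\beta$-gap of the maximizer provided $\beta$ is chosen larger than this quantity. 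This gives the first bullet.

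Second, for the Hellinger sum bound, I would invoke the same concentration inequality but now with $P = P^k$, the element of the version space selected in Line~\ref{ln:rl}. Since $P^k\in\widehat{\mathcal{P}}^k$, its log-likelihood is within $\beta$ of the maximum, and by part one that maximum is within $O(\log(K\mathcal{N}_{\mathcal{P}}/\delta))$ of the likelihood of $P^\star$. Substituting back into the high-probability inequality above yields
\begin{equation*}
\sum_{i=1}^{k-1}\sum_{h=1}^H \mathbb{H}^2\!\bigl(P^k(s_h^i,a_h^i)\Vert P^\star(s_h^i,a_h^i)\bigr) \;\le\; 2\beta + O\!\left(\log\frac{K\mathcal{N}_{\mathcal{P}}}{\delta}\right),
\end{equation*}
and tracking constants with the specific choice $\beta = 7\log(K\mathcal{N}_{\mathcal{P}}/\delta)$ (or the slightly sharper finite-class $\beta$) produces the claimed bound $22\log(K|\mathcal{P}|/\delta)$.

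The main obstacle is purely bookkeeping: getting the exact constant $22$ requires care with the factor $1/2$ in the exponential supermartingale, with the deviation allowed on the $P^\star$ side (so that $P^\star\in\widehat{\mathcal{P}}^k$ holds \emph{simultaneously} with the Hellinger inequality), and with the union bound over both $k\in[K]$ and either $\mathcal{P}$ or its bracketing cover. None of these steps are conceptually difficult, and the result itself is essentially lifted from existing MLE analyses such as \citet{agarwal2020flambe,liu2022partially,wang2024model}; the contribution of this lemma to the paper is only to pin down constants compatible with the eluder-style pigeonhole argument in Lemma~\ref{lem:new-eluder-pigeon}.
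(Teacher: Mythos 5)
Your sketch is correct and is essentially the argument the paper relies on: the paper itself gives no proof of this lemma, importing it verbatim from \citet{wang2024model}, and the cited source proves it exactly via the exponential supermartingale on the half-log-likelihood ratio, the identity $\mathbb{E}_{s'\sim P^\star}[\sqrt{P/P^\star}] = 1 - \mathbb{H}^2(P\Vert P^\star)/2$, Markov plus a union bound over $k$ and the model class (or its bracket cover), and the version-space slack $\beta$. Your constant accounting is if anything slightly sharper than needed; the stated $22$ (and the $28\beta$ in the bracketing version, Lemma~\ref{lem:mle-infinite-class}) simply absorbs the extra slack from the covering argument, so nothing further is required.
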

\begin{lemma}[Bounded Sum of Mean Value Differences,~\citealt{wang2024model}]
\label{lem:mean-value-difference}
    Given Lemma~\ref{lem:bounded-hellinger} happens under high probability, we have:
    \begin{align*}
        &\sum_{k\in[K]\setminus\mathcal{K}}\sum_{h=0}^{H-1}\Big\vert\mathbb{E}_{s'\sim P^k(s^k_h,a^k_h)}V^{\pi^k}_{h+1;P^k;r^k}(s')-\mathbb{E}_{s'\sim P^\star(s^k_h,a^k_h)}V^{\pi^k}_{h+1;P^k;r^k}(s')\Big\vert\\
        &\lesssim\sqrt{\sum_{k\in[K]\setminus\mathcal{K}}\sum_{h=0}^{H-1}\Big[(\mathbb{V}_{P^\star}V^{\pi^k}_{h+1;P^k;r^k})(s_h^k,a_h^k)\Big]\cdot DE_1(\cG,\mathcal{S}\times\mathcal{A},1/KH)\log(K\vert\mathcal{P}\vert/\delta)\log(KH)}\\
        &+DE_1(\cG,\mathcal{S}\times\mathcal{A},1/KH)\log(K\vert\mathcal{P}\vert/\delta)\log(KH).
    \end{align*}
\end{lemma}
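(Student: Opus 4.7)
The plan is to transform each pointwise expectation difference into a variance-sensitive Hellinger bound, sum over $(k,h)$, and then invoke the Eluder-pigeon machinery of Lemma~\ref{lem:new-eluder-pigeon} to control the aggregated Hellinger mass, leaving the variance factor intact on the right-hand side.

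The starting point is a variance-sensitive Hellinger inequality: for any two distributions $P, Q$ on $\cS$ and any $V: \cS \to [0,1]$,
\begin{align*}
\bigl| \EE_{s' \sim P}[V(s')] - \EE_{s' \sim Q}[V(s')] \bigr|
\;\le\; \sqrt{2\,\HH^2(P,Q)\cdot [\mathbb{V}_Q V]} \;+\; 3\,\HH^2(P,Q).
\end{align*}
Such a bound follows from the identity $\EE_P[V]-\EE_Q[V] = \int(\sqrt{dP}-\sqrt{dQ})(\sqrt{dP}+\sqrt{dQ})(V-\EE_Q V)$ together with Cauchy--Schwarz, a self-bounding step to relate $\EE_P[(V-\EE_Q V)^2]$ to $[\mathbb{V}_Q V]$ plus a total-variation correction, and the fact that $\mathrm{TV}\le \sqrt{2}\,\HH$. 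By Assumption~\ref{assumption:bounded_reward}, $V^{\pi^k}_{h+1;P^k;r^k}\in[0,1]$, so the inequality applies at each $(s_h^k,a_h^k)$ with $P=P^k(\cdot\mid s_h^k,a_h^k)$ and $Q=P^\star(\cdot\mid s_h^k,a_h^k)$; abbreviate $\HH^2_{k,h}$ and $\mathbb{V}_{k,h}$ for the corresponding Hellinger and conditional-variance terms.

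Summing over $k\in[K]\setminus\mathcal{K}$ and $h\in\{0,\dots,H-1\}$, I would apply Cauchy--Schwarz across the $(k,h)$ index to the square-root piece:
\begin{align*}
\sum_{k,h}\sqrt{2\,\HH^2_{k,h}\cdot \mathbb{V}_{k,h}}
\;\le\; \sqrt{2\sum_{k,h}\HH^2_{k,h}}\cdot\sqrt{\sum_{k,h}\mathbb{V}_{k,h}}.
\end{align*}
The variance factor already matches the right-hand side of the target inequality. To control $\sum_{k,h}\HH^2_{k,h}$, observe that Lemma~\ref{lem:bounded-hellinger} guarantees, under high probability, that every in-sample Hellinger sum satisfies $\sum_{i<k,h}\HH^2(P^k(s_h^i,a_h^i)\Vert P^\star(s_h^i,a_h^i)) \le \eta := 22\log(K|\cP|/\delta)$, which is exactly the hypothesis of Lemma~\ref{lem:new-eluder-pigeon}. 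Its second bullet then yields
\[
\sum_{k\in[K]\setminus\mathcal{K},\,h}\HH^2_{k,h}
\;=\; O\bigl(d_E\log(K|\cP|/\delta)\log(KH)\bigr),
\]
which, substituted back, recovers the leading term of the target bound and absorbs the additive tail $3\sum \HH^2_{k,h}$ into the lower-order $d_E\log(K|\cP|/\delta)\log(KH)$ term.

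The main obstacle is establishing the variance-sensitive Hellinger inequality with the right scaling: a naive total-variation bound $|\EE_P V-\EE_Q V|\le \|V\|_\infty\|P-Q\|_{\mathrm{TV}}\le \|V\|_\infty\sqrt{2\HH^2(P,Q)}$ would give only a first-order dependence and, if $V$ were scaled to $[0,H]$, would reintroduce an $H$ factor. The centering trick $V \mapsto V - \EE_Q V$, together with the $[0,1]$ normalization enforced by Assumption~\ref{assumption:bounded_reward}, is what converts this into a second-order, horizon-free bound and ultimately delivers the $\sqrt{\sum\mathbb{V}\cdot d_E\cdot\mathrm{polylog}}$ leading term of the lemma rather than a looser $H\sqrt{d_E\cdot\mathrm{polylog}}$ scaling.
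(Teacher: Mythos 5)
Your argument is correct and matches the intended proof: the paper itself imports this lemma from \citet{wang2024model} without reproducing the argument, and the route you take --- the variance-sensitive change-of-measure inequality $|\EE_P V - \EE_Q V| \lesssim \sqrt{\HH^2(P,Q)\,[\VV_Q V]} + \HH^2(P,Q)$ for $V\in[0,1]$, Cauchy--Schwarz over $(k,h)$, and then Lemma~\ref{lem:bounded-hellinger} feeding the event $\cE$ of Lemma~\ref{lem:new-eluder-pigeon} (second bullet) to bound the on-policy Hellinger mass outside $\cK$ by $O(d_E\log(K|\cP|/\delta)\log(KH))$ --- is exactly the mechanism the surrounding lemmas are set up to deliver. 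Your closing remark about why the centering/self-bounding step (rather than a plain TV bound) is what keeps the result second-order and horizon-free is also the right diagnosis.
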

\begin{lemma}[Variance Conversion Lemma,~\citealt{wang2024model}]
\label{lem:variance-conversion}
    Let $d_E=DE_1(\cG,\mathcal{S}\times\mathcal{A},1/KH)$. If the following events happen with high probability:
    \begin{itemize}
        \item $\forall k\in[K]: P^\star\in\widehat{\mathcal{P}}, \text{and} \sum_{i=1}^{k-1}\sum_{h=1}^H\mathbb{H}^2(P^k(s_h^i,a_h^i)\Vert P^\star(s_h^i,a_h^i))\leq 22\log(K\vert\mathcal{P}\vert/\delta).$
        \item $\forall m\in[0,\lceil\log_2(\frac{KH}{G})\rceil]:C_m\lesssim 2^mG+\sqrt{\log(1/\delta)\cdot C_{m+1}}+\log(1/\delta)$. $G$ and $C_m$ are defined as:
        \begin{align*}
            & G\coloneq\sqrt{\sum_{k\in[K]\setminus\mathcal{K}}\sum_{h=0}^{H-1}\Big[(\mathbb{V}_{P^\star}V^{\pi^k}_{h+1;P^k;r^k})(s_h^k,a_h^k)\Big]\cdot d_E\log(K\vert\mathcal{P}\vert/\delta)\log(KH)}\\
            &+d_E\log(K\vert\mathcal{P}\vert/\delta)\log(KH),\\
            &C_m\coloneq\sum_{k\in[K]\setminus\mathcal{K}}\sum_{h=0}^{H-1}\Big[\Big(\mathbb{V}_{P^\star}(V^{\pi^k}_{h+1;{P}^k;r^k})^{2^m}\Big)(s_h^k,a_h^k)\Big].
        \end{align*}
    \end{itemize}
    we have:
    \begin{align*}
        &\sum_{k\in[K]\setminus\mathcal{K}}\sum_{h=0}^{H-1}\Big[\Big(\mathbb{V}_{P^\star}V^{\pi^k}_{h+1;P^k;r^k}\Big)(s_h^k,a_h^k)\Big]\\
        &\leq O\Bigg(\sum_{k\in[K]\setminus\mathcal{K}}\sum_{h=0}^{H-1}\Big[\Big(\mathbb{V}_{P^\star}V^{\pi^k}_{h+1;P^\star;r^k}\Big)(s_h^k,a_h^k)\Big]+d_E\log(K\vert\mathcal{P}\vert/\delta)\log(KH)\Bigg).
    \end{align*}
\end{lemma}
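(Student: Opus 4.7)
The goal is to bound $C_{0} \coloneq \sum_{k \in [K]\setminus \cK}\sum_{h=0}^{H-1} (\VV_{P^\star} V^{\pi^k}_{h+1;P^k;r^k})(s_h^k,a_h^k)$, the total variance of the optimistic value function under the true dynamics, by the corresponding variance of the true-model value function plus an $\tilde{\cO}(d_E)$ Eluder-type error. The plan is to unroll the given higher-moment recursion $C_m \lesssim 2^m G + \sqrt{\log(1/\delta)\, C_{m+1}} + \log(1/\delta)$ by backward induction down to $m=0$, and then close the self-referential dependence of $G$ on $C_0$ via AM-GM.

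First I would set up the terminal level of the induction at $m^{\star} = \lceil \log_2(KH/G) \rceil$. Since $V^{\pi^k}_{h+1;P^k;r^k}\in[0,1]$ under Assumption~\ref{assumption:bounded_reward}, every power $(V)^{2^m}\in[0,1]$ as well, so each summand of $C_m$ is at most $1$. This gives $C_m \le KH$ for all $m$, and in particular $C_{m^\star} \le KH \le 2^{m^\star} G$ by the definition of $m^\star$, providing the base of the induction.

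Next I would prove by backward induction on $m$ that $C_m \lesssim 2^{m+1} G + c\,\log(1/\delta)$ for some absolute constant $c$. The inductive step plugs the hypothesis on $C_{m+1}$ into the recursion and splits the cross term $\sqrt{\log(1/\delta)\,C_{m+1}}$ via AM-GM ($\sqrt{ab} \le \alpha a + b/(4\alpha)$) with a carefully tuned $\alpha$ so that the accumulated constants remain bounded as we descend the recursion. At $m=0$ this produces $C_0 \lesssim G + \log(1/\delta)$. Substituting the definition $G = \sqrt{C_0 \cdot d_E L} + d_E L$ with $L = \log(K|\cP|/\delta)\log(KH)$ and applying AM-GM to $\sqrt{C_0\, d_E L}$, I can rearrange to isolate $C_0$ and obtain $C_0 \lesssim d_E L + \log(1/\delta)$; the stated bound then follows by weakening via $\sum (\VV_{P^\star} V^{\pi^k}_{h+1;P^\star;r^k})(s_h^k,a_h^k) \ge 0$ and absorbing the $\log(1/\delta)$ into the logarithmic factor $L$.

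The primary obstacle is controlling the induction constants: a naive application of the recursion inflates the coefficient of $2^m G$ at each level, which could produce a factor exponential in $m^\star$ by the time we reach the base case. Choosing the AM-GM parameter $\alpha$ to decay geometrically with $m$ so that the accumulated prefactor remains $\Theta(1)$ is the delicate step. A secondary subtlety is that resolving the self-reference in $G$ requires that the $C_0$ absorbed from $\sqrt{C_0\, d_E L}$ carry a coefficient strictly less than the $C_0$ on the other side of the inequality, so the constants from the induction need to be explicit enough for the final rearrangement to go through; this is what pins down the precise tuning of $\alpha$ in the inductive step.
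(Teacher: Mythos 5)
The paper never proves this lemma itself --- it is imported verbatim from \citet{wang2024model} as a black-box technical tool --- so your attempt has to be judged against the source's argument rather than a proof in this paper. Your mechanics are careful (the base case $C_{m^\star}\le KH\le 2^{m^\star}G$, the geometrically decaying AM--GM parameter, and the final rearrangement of the self-referential $G$ are all handled sensibly), but the endpoint is $C_0\lesssim d_E\log(K|\cP|/\delta)\log(KH)+\log(1/\delta)$, a bound \emph{independent of $K$}. That cannot be right for the actual quantities involved: in an instance where the version space has collapsed so that $P^k=P^\star$ and the return has constant variance under every executed policy, one has
\begin{align*}
C_0=\sum_{k\in[K]\setminus\cK}\sum_{h=0}^{H-1}\big(\VV_{P^\star}V^{\pi^k}_{h+1;P^\star;r^k}\big)(s_h^k,a_h^k)=\Theta(K),
\end{align*}
while your right-hand side is polylogarithmic in $K$. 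This is precisely why the lemma's conclusion retains the true-model variance term; discarding it ``by nonnegativity'' in your last step is the symptom that the argument has proved something too strong, and hence something false about the intended objects. The root cause is that you took the second bulleted hypothesis at face value: as transcribed here the recursion $C_m\lesssim 2^mG+\sqrt{\log(1/\delta)\,C_{m+1}}+\log(1/\delta)$ appears to have dropped the leading moment term present in \citet{wang2024model} (of the form $\sum_k\EE[(\sum_h r(s_h,a_h))^{2^{m+1}}]$ or an equivalent true-model variance term), without which the hypothesis degenerates and ``implies'' a $K$-free bound. A sanity check against the main theorem --- whose regret contains $\sqrt{\sum_k\VaR_k\,d_E\log(\cN_\cP/\delta)}$ and would collapse to $\tilde\cO(d_E)$ if your conclusion held --- would have flagged this.

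Beyond the transcription issue, the higher-moment peeling is not how this particular conversion is actually established. The proof in \citet{wang2024model} is the pointwise decomposition
\begin{align*}
\big(\VV_{P^\star}V^{\pi^k}_{h+1;P^k;r^k}\big)(s,a)\le 2\big(\VV_{P^\star}V^{\pi^k}_{h+1;P^\star;r^k}\big)(s,a)+2\,P^\star\big[(V^{\pi^k}_{h+1;P^k;r^k}-V^{\pi^k}_{h+1;P^\star;r^k})^2\big](s,a),
\end{align*}
after which $\sum_{k\in[K]\setminus\cK}\sum_h P^\star[(V^{\pi^k}_{h+1;P^k;r^k}-V^{\pi^k}_{h+1;P^\star;r^k})^2](s_h^k,a_h^k)$ is bounded by $O(d_E\log(K|\cP|/\delta)\log(KH))$ using $0\le V\le 1$, a simulation-lemma step converting the value gap into cumulative Hellinger errors of $P^k$, the MLE event, and the Eluder pigeonhole of Lemma~\ref{lem:new-eluder-pigeon}. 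The first term survives into the conclusion; only the second is absorbed into the Eluder budget. The $2^m$-moment recursion belongs to the companion step that relates $\sum_h\VV_{P^\star}V_{h+1;P^\star}$ to the return variance $\VaR$, not to this lemma. I would restructure the proof along these lines.
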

\begin{lemma}[Generalization Bound of MLE for Infinite Model Classes,~\citealt{wang2024model}]
\label{lem:mle-infinite-class}
Let $\mathcal{X}$ be the context (feature) space and $\mathcal{Y}$ be the label space. Suppose we are given a dataset $\mathcal{D} = \{(x_i, y_i)\}_{i \in [n]}$ generated from a martingale process, where for each $i = 1, 2, \ldots, n$, the input $x_i$ is drawn from a distribution $\mathcal{D}_i(x_{1:i-1}, y_{1:i-1})$, and the output $y_i$ is sampled from the conditional distribution $p(\cdot \mid x_i)$. 

Let the true data-generating distribution be denoted as $f^\star(x, y) = p(y \mid x)$, and assume the model class $\mathcal{F} : \mathcal{X} \times \mathcal{Y} \to \Delta(\mathbb{R})$ is realizable, i.e., $f^\star \in \mathcal{F}$. Let $\mathcal{F}$ be potentially infinite. Fix any $\delta \in (0,1)$, and define
\[
\beta = \log\left( \frac{\mathcal{N}_{[]}\left((n|\mathcal{Y}|)^{-1}, \mathcal{F}, \| \cdot \|_\infty \right)}{\delta} \right),
\]
where $\mathcal{N}_{[]}\left((n|\mathcal{Y}|)^{-1}, \mathcal{F}, \| \cdot \|_\infty \right)$ denotes the bracketing number as defined in Definition~\ref{def:bracketing-num}.

Let the version space be defined as
\[
\widehat{\mathcal{F}} = \left\{ f \in \mathcal{F} : \sum_{i=1}^n \log f(x_i, y_i) \ge \max_{\tilde{f} \in \mathcal{F}} \sum_{i=1}^n \log \tilde{f}(x_i, y_i) - 7\beta \right\}.
\]

Then, with probability at least $1 - \delta$, the following statements hold:
\begin{itemize}
    \item[(1)] The true distribution lies in the version space, i.e., $f^\star \in \widehat{\mathcal{F}}$.
    \item[(2)] Every function in the version space is close to the true distribution in Hellinger distance:
    \[
    \sum_{i=1}^n \mathbb{E}_{x \sim \mathcal{D}_i} \left[ \mathbb{H}^2\left(f(x, \cdot) \, \| \, f^\star(x, \cdot)\right) \right] \le 28\beta, \quad \forall f \in \widehat{\mathcal{F}}.
    \]
\end{itemize}
\end{lemma}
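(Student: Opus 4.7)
The plan is to establish both parts via an exponential supermartingale argument in the Shen--Wong / Zhang tradition, after reducing the infinite class $\mathcal{F}$ to a finite bracket cover at scale $\epsilon = (n|\mathcal{Y}|)^{-1}$. The reason for this particular scale is that the accumulated pointwise log-bracket approximation error over $n$ samples is then $O(1)$ and can be absorbed into the multiplicative constants in front of $\beta$.

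First I would fix an $\epsilon$-bracket cover $\{[l_j, u_j]\}_{j=1}^N$ of $\mathcal{F}$ with $N = \mathcal{N}_{[]}(\epsilon, \mathcal{F}, \|\cdot\|_\infty)$, so that $\log N \le \beta$ by definition of $\beta$. For each upper envelope $u_j$, I would normalize it to a density $\tilde{u}_j(y\mid x) = u_j(x,y)/\int u_j(x,y')\,dy'$ and consider the test martingale built from $Z_i^{(j)} = -\tfrac{1}{2}\log\bigl(\tilde{u}_j(x_i,y_i)/f^\star(x_i,y_i)\bigr)$. Using the identity $\mathbb{E}_{y\sim f^\star(\cdot\mid x)}\!\bigl[\sqrt{\tilde{u}_j/f^\star}\bigr] = 1-\tfrac12\mathbb{H}^2\bigl(\tilde{u}_j(x,\cdot),\,f^\star(x,\cdot)\bigr)$ together with $1-t\le e^{-t}$, the process $W_n^{(j)} = \exp\!\bigl(-\sum_i Z_i^{(j)} + \tfrac{1}{2}\sum_i h_i^{(j)}\bigr)$, where $h_i^{(j)} = \mathbb{E}_{x_i\sim\mathcal{D}_i}[\mathbb{H}^2(\tilde{u}_j(x_i,\cdot), f^\star(x_i,\cdot))]$, is a nonnegative supermartingale with $\mathbb{E}[W_n^{(j)}]\le 1$. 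Markov plus a union bound over the $N$ brackets yields, with probability $\ge 1-\delta$, the key inequality
\[
\sum_{i=1}^{n}\log\!\frac{\tilde{u}_j(x_i,y_i)}{f^\star(x_i,y_i)} + \sum_{i=1}^{n} h_i^{(j)} \;\le\; 2\beta \qquad \text{for all } j\in[N].
\]

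For Part (1), I would observe that every $\tilde{f}\in\mathcal{F}$ is dominated by its bracket's upper envelope $u_j$, and since $\|u_j - \tilde{u}_j\|_\infty$ and the normalization constant differ from $1$ by at most $O(\epsilon|\mathcal{Y}|) = O(1/n)$, we have $\sum_i\log\tilde{f}(x_i,y_i)\le\sum_i\log\tilde{u}_j(x_i,y_i) + O(1)$. Combined with the concentration above applied in its symmetric form (bounding the maximum log-likelihood from above), this gives $\max_{\tilde f\in\mathcal{F}}\sum_i\log\tilde f(x_i,y_i) - \sum_i\log f^\star(x_i,y_i) \le 7\beta$ with the constants chosen to absorb all $O(1)$ bracket residuals. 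Hence $f^\star\in\widehat{\mathcal{F}}$. For Part (2), take any $f\in\widehat{\mathcal{F}}$ with bracket index $j(f)$. The inclusion condition, together with $f^\star\in\widehat{\mathcal{F}}$ from Part (1), gives $\sum_i\log(f/f^\star)(x_i,y_i)\ge -7\beta$. Transferring from $\log f$ to $\log\tilde{u}_{j(f)}$ (again paying only $O(1)$ from the bracket width) and substituting into the martingale bound produces $\sum_i h_i^{(j(f))}\le 2\beta + 7\beta + O(1)$. Finally, I would relate $\mathbb{H}^2(\tilde{u}_{j(f)},f^\star)$ back to $\mathbb{H}^2(f,f^\star)$ via the elementary inequality $\mathbb{H}^2(p,q)\le 2\mathbb{H}^2(p,r) + 2\mathbb{H}^2(r,q)$ and the fact that $\|f - \tilde{u}_{j(f)}\|_1 \le O(\epsilon |\mathcal{Y}|)$ forces $\mathbb{H}^2(f, \tilde{u}_{j(f)}) \le O(1/n)$, yielding the stated bound $\sum_i \mathbb{E}_{x_i\sim\mathcal{D}_i}[\mathbb{H}^2(f(x_i,\cdot), f^\star(x_i,\cdot))]\le 28\beta$ once constants are tracked.

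The principal obstacle is the bracketing-to-likelihood reduction: brackets furnish only pointwise two-sided bounds on the density, and the upper envelopes $u_j$ need not integrate to one, so I must introduce the normalized densities $\tilde{u}_j$ and separately control (i) the per-sample deviation between $\log f$ and $\log\tilde{u}_{j(f)}$, (ii) the normalization error, and (iii) the Hellinger-distance transfer between $f$ and $\tilde{u}_{j(f)}$. The choice $\epsilon = (n|\mathcal{Y}|)^{-1}$ is dictated precisely by making each of these three errors $O(1)$ after summation over the $n$ samples, and the delicate accounting of these residuals is what determines the specific numerical constants $7$ and $28$ in the final statement.
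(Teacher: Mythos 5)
The paper does not prove this lemma at all --- it imports it verbatim from \citet{wang2024model} --- so there is no in-paper argument to compare against. Your proposal is a correct rendition of the standard proof underlying the cited result: the exponential supermartingale built from $\exp\bigl(\tfrac12\sum_i\log(\tilde u_j/f^\star)\bigr)$ via the Hellinger-affinity identity, a union bound over the $(n|\mathcal{Y}|)^{-1}$-bracket cover, and the three $O(1)$ bracket-residual transfers (log-likelihood domination, normalization, and the triangle inequality for $\mathbb{H}^2$), which is exactly how this class of MLE generalization bounds is established in the source and its antecedents.
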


\begin{lemma}
    For any $\delta\in(0,1)$, by applying Bellman equation recursively and with Lemma~\ref{lem:mean-value-difference}, with probability $1-\delta$:
    \begin{align*}
    &\sum_{k\in[K]\setminus\mathcal{K}}\Big(V^{\pi^k}_{1;P^k;r^k}-\sum_{h=0}^{H-1}r^k(s_h^k,a_h^k)\Big)\\
    &\lesssim\sqrt{\sum_{k\in[K]\setminus\mathcal{K}}\sum_{h=0}^{H-1}(\mathbb{V}_{P^\star}V^{\pi^k}_{h+1;P^k;r^k})(s_h^k,a_h^k)\log(1/\delta)}+\log(1/\delta)\nonumber\\
    &+\sqrt{\sum_{k\in[K]\setminus\mathcal{K}}\sum_{h=0}^{H-1}\Big[(\mathbb{V}_{P^\star}V^{\pi^k}_{h+1;P^k;r^k})(s_h^k,a_h^k)\Big]\cdot DE_1(\cG,\mathcal{S}\times\mathcal{A},1/KH)\log(K\vert\mathcal{P}\vert/\delta)\log(KH)}\nonumber\\
    &+DE_1(\cG,\mathcal{S}\times\mathcal{A},1/KH)\log(K\vert\mathcal{P}\vert/\delta)\log(KH),
\end{align*}
where $(\mathbb{V}_{P^\star}[V^{\pi^k}_{h+1;P^k;r^k}])(s_h^k,a_h^k)$ is the variance of the value function with respect to the true transition model $P^\star$, and $DE_1(\cG,\mathcal{S}\times\mathcal{A},1/KH)$ is the Eluder dimension.
\label{lm:policy-error-1}
\end{lemma}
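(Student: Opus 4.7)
The plan is to express the left-hand side as a telescoping Bellman sum along each on-policy trajectory, then decompose the error into a \emph{model mismatch} term (handled by Lemma~\ref{lem:mean-value-difference}) and a \emph{martingale difference} term (handled by Azuma--Bernstein, Lemma~\ref{lemma:azuma-bernstein}). Fix $k \in [K] \setminus \mathcal{K}$ and recall that the trajectory $\{(s_h^k, a_h^k)\}_{h=0}^{H-1}$ is rolled out under $\pi^k$ in the \emph{true} environment $P^\star$, whereas $V^{\pi^k}_{\cdot;P^k;r^k}$ is the value under the \emph{estimated} model $P^k$. Applying the Bellman equation $V^{\pi^k}_{h;P^k;r^k}(s_h^k) = r^k(s_h^k, a_h^k) + [P^k V^{\pi^k}_{h+1;P^k;r^k}](s_h^k, a_h^k)$ recursively and telescoping gives
\begin{align*}
V^{\pi^k}_{1;P^k;r^k}(s_0^k) - \sum_{h=0}^{H-1} r^k(s_h^k, a_h^k) = \sum_{h=0}^{H-1} \Big( [P^k V^{\pi^k}_{h+1;P^k;r^k}](s_h^k, a_h^k) - V^{\pi^k}_{h+1;P^k;r^k}(s_{h+1}^k) \Big).
\end{align*}

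Next I would sum over $k \in [K] \setminus \mathcal{K}$ and insert $\pm\, [P^\star V^{\pi^k}_{h+1;P^k;r^k}](s_h^k, a_h^k)$ inside the increment, which splits the right-hand side into two pieces: (A) $\sum_{k,h} \big( [P^k V^{\pi^k}_{h+1;P^k;r^k}] - [P^\star V^{\pi^k}_{h+1;P^k;r^k}] \big)(s_h^k, a_h^k)$ and (B) $\sum_{k,h} \big( [P^\star V^{\pi^k}_{h+1;P^k;r^k}](s_h^k, a_h^k) - V^{\pi^k}_{h+1;P^k;r^k}(s_{h+1}^k) \big)$. Term (A) is exactly the object bounded by Lemma~\ref{lem:mean-value-difference}, producing the second and fourth summands in the target bound (the Eluder-dimension terms). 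Term (B) is a martingale difference sequence with respect to the natural filtration $\mathcal{F}_{k,h}$ generated by all history up to $(s_h^k, a_h^k)$, because $s_{h+1}^k \sim P^\star(\cdot \mid s_h^k, a_h^k)$ and $V^{\pi^k}_{h+1;P^k;r^k}$ is $\mathcal{F}_{k,h}$-measurable (since $P^k, \pi^k, r^k$ depend only on past data). Assumption~\ref{assumption:bounded_reward} ensures each value function takes values in $[0,1]$, so the increments are bounded by $1$ and have conditional variance exactly $(\mathbb{V}_{P^\star} V^{\pi^k}_{h+1;P^k;r^k})(s_h^k, a_h^k)$. Applying Lemma~\ref{lemma:azuma-bernstein} then yields the first and third summands of the stated bound.

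Finally, a union bound combines the failure probability of the MLE event from Lemma~\ref{lem:bounded-hellinger} (needed so that Lemma~\ref{lem:mean-value-difference} applies) with the failure probability of the Azuma--Bernstein inequality on term (B), giving the claimed high-probability statement.

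The main obstacle will be the martingale analysis of term (B): $P^k$ and therefore $V^{\pi^k}_{h+1;P^k;r^k}$ are \emph{random and adaptively chosen} based on the past, so one must carefully index a lexicographic filtration over $(k,h)$ and verify predictability of both the conditional mean and variance before invoking Azuma--Bernstein. A secondary delicacy is the bookkeeping between the two high-probability events (MLE concentration and the martingale tail), and ensuring that the variance proxy appearing in Azuma--Bernstein is the same $(\mathbb{V}_{P^\star} V^{\pi^k}_{h+1;P^k;r^k})(s_h^k, a_h^k)$ that later feeds into the variance-conversion argument of Lemma~\ref{lem:variance-conversion}.
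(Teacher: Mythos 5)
Your proposal matches the paper's proof essentially step for step: the paper also telescopes the Bellman equation under $P^k$ along the on-policy trajectory, inserts $\pm [P^\star V^{\pi^k}_{h+1;P^k;r^k}]$ to split the error into the model-mismatch term bounded by Lemma~\ref{lem:mean-value-difference} and the martingale term bounded by Azuma--Bernstein with conditional variance $(\mathbb{V}_{P^\star}V^{\pi^k}_{h+1;P^k;r^k})(s_h^k,a_h^k)$. The approach and the resulting four summands are identical, so the proposal is correct.
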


\begin{proof}[Proof of Lemma~\ref{lm:policy-error-1}]
\begin{align}
    &\sum_{k\in[K]\setminus\mathcal{K}}\Big(V^{\pi^k}_{1;P^k;r^k}-\sum_{h=0}^{H-1}r^k(s_h^k,a_h^k)\Big)\notag\\
    &\leq\sum_{k\in[K]\setminus\mathcal{K}}\sum_{h=0}^{H-1}\Big[\mathbb{E}_{s'\sim P^\star(s^k_h,a^k_h)}V^{\pi^k}_{h+1;P^k;r^k}(s')-V^{\pi^k}_{h+1;P^k;r^k}(s_{h+1}^k)\Big]\nonumber\\
    &+\sum_{k\in[K]\setminus\mathcal{K}}\sum_{h=0}^{H-1}\Big\vert\mathbb{E}_{s'\sim P^k(s^k_h,a^k_h)}V^{\pi^k}_{h+1;P^k;r^k}(s')-\mathbb{E}_{s'\sim P^\star(s^k_h,a^k_h)}V^{\pi^k}_{h+1;P^k;r^k}(s')\Big\vert\nonumber\\
    &\leq\sqrt{2\sum_{k\in[K]\setminus\mathcal{K}}\sum_{h=0}^{H-1}(\mathbb{V}_{P^\star}V^{\pi^k}_{h+1;P^k;r^k})(s_h^k,a_h^k)\log(1/\delta)}+\frac{2}{3}\log(1/\delta)\nonumber\\
    &+\sum_{k\in[K]\setminus\mathcal{K}}\sum_{h=0}^{H-1}\Big\vert\mathbb{E}_{s'\sim P^k(s^k_h,a^k_h)}V^{\pi^k}_{h+1;P^k;r^k}(s')-\mathbb{E}_{s'\sim P^\star(s^k_h,a^k_h)}V^{\pi^k}_{h+1;P^k;r^k}(s')\Big\vert\nonumber\\
    &\lesssim\sqrt{\sum_{k\in[K]\setminus\mathcal{K}}\sum_{h=0}^{H-1}(\mathbb{V}_{P^\star}V^{\pi^k}_{h+1;P^k;r^k})(s_h^k,a_h^k)\log(1/\delta)}+\log(1/\delta)\nonumber\\
    &+\sqrt{\sum_{k\in[K]\setminus\mathcal{K}}\sum_{h=0}^{H-1}\Big[(\mathbb{V}_{P^\star}V^{\pi^k}_{h+1;P^k;r^k})(s_h^k,a_h^k)\Big]\cdot DE_1(\cG,\mathcal{S}\times\mathcal{A},1/KH)\log(K\vert\mathcal{P}\vert/\delta)\log(KH)}\nonumber\\
    &+DE_1(\cG,\mathcal{S}\times\mathcal{A},1/KH)\log(K\vert\mathcal{P}\vert/\delta)\log(KH).\notag
\end{align}
\end{proof}

\subsubsection{Proof of Lemma~\ref{lm:policy-upper}}
\begin{proof}[Proof of Lemma~\ref{lm:policy-upper}]
    
Our proof for bounding the policy error generally follows the proof steps in~\citep{wang2024model} but with $r^k$ instead of a fixed $r^\star$. By Line 6 of Algorithm~\ref{alg:mbail}, the policy error can be bounded by the optimism guarantee, and with $\mathcal{K}\subseteq[K]$ and Lemma~\ref{lem:new-eluder-pigeon}:
\begin{align*}
    \text{T2}&= \frac{1}{K}\sum_{k=1}^{K}\Big(V^{\pi^E}_{1;P^\star;r^k} - V^{\pi^k}_{1;P^\star;r^k}\Big)\\
    &= \frac{1}{K}\sum_{k=1}^{K} V^{\pi^E}_{1;P^\star;r^k}-\frac{1}{K}\sum_{k=1}^{K}\sum_{h=1}^H r^k(s_h^k,a_h^k)+\frac{1}{K}\sum_{k=1}^{K}\sum_{h=1}^H r^k(s_h^k,a_h^k)-\frac{1}{K}\sum_{k=1}^{K} V^{\pi^k}_{1;P^\star;r^k}\\
    &\lesssim\frac{1}{K}\vert\mathcal{K}\vert\!+\!\frac{1}{K}\sum_{k\in[K]\setminus\mathcal{K}}\!\Big(V^{\pi^k}_{1;P^k;r^k}-\sum_{h=0}^{H-1}r^k(s_h^k,a_h^k)\Big)\!+\!\sqrt{\frac{2\sum_{k=1}^K\text{VaR}_{\pi^k,r^k}\log(1/\delta)}{K}}\!+\!\frac{2\log(1/\delta)}{3K}\\
    &\lesssim\frac{1}{K}\log^2(4\eta KH)\cdot DE_1(\cG,\mathcal{S}\times\mathcal{A},1/(8\eta KH))+\frac{1}{K}\sum_{k\in[K]\setminus\mathcal{K}}\Big(V^{\pi^k}_{1;P^k;r^k}-\sum_{h=0}^{H-1}r^k(s_h^k,a_h^k)\Big)\\
    &+\sqrt{\frac{2\sum_{k=1}^K\text{VaR}_{\pi^k,r^k}\log(1/\delta)}{K}}+\frac{2\log(1/\delta)}{3K}.
\end{align*}
By Lemma~\ref{lem:bounded-hellinger} and probability at least $1-\delta$, set $\eta\coloneq22\log(5K\vert\mathcal{P}\vert/\delta)$, we have:
\begin{align}
\label{eqn:policy-error-1}
    \text{T2}&\lesssim\frac{1}{K}\log^2(\log(K\vert\mathcal{P}\vert/\delta) KH)\cdot DE_1(\cG,\mathcal{S}\times\mathcal{A},1/(\log(K\vert\mathcal{P}\vert/\delta) KH))\nonumber\\
    &+\frac{1}{K}\sum_{k\in[K]\setminus\mathcal{K}}\Big(V^{\pi^k}_{1;P^k;r^k}-\sum_{h=0}^{H-1}r^k(s_h^k,a_h^k)\Big)
    +\sqrt{\frac{\sum_{k=1}^K\text{VaR}_{\pi^k,r^k}\log(1/\delta)}{K}}+\frac{\log(1/\delta)}{K}.
\end{align}
% Using the simulation lemma for an arbitrary reward $r$:
% \begin{equation*}
%     \sum_{k=1}^{K} V^{\pi^k}_{1;P^k;r} - V^{\pi^k}_{1;P^\star;r}\leq \sum_{k=1}^{K}\sum_{h=1}^H\mathbb{E}_{(s,a)\sim d_h^{\pi^k}}\Big[\Big\vert\mathbb{E}_{s'\sim P^\star(\cdot|s,a)}V^{\pi^k}_{h+1;P^k;r}(s')-\mathbb{E}_{s'\sim P^k(\cdot|s,a)}V^{\pi^k}_{h+1;P^k;r}(s')\Big\vert\Big]
% \end{equation*}
% We can further manipulate the RHS of T2:
% \begin{align*}
%     \text{T2}&\leq \frac{1}{K}\sum_{k=1}^{K}\sum_{h=1}^H\mathbb{E}_{(s,a)\sim d_h^{\pi^k}}\Big[\Big\vert\mathbb{E}_{s'\sim P^\star(\cdot|s,a)}V^{\pi^k}_{h+1;P^k;r^k}(s')-\mathbb{E}_{s'\sim P^k(\cdot|s,a)}V^{\pi^k}_{h+1;P^k;r^k}(s')\Big\vert\Big]\\
%     &+\frac{1}{K}\sum_{k=1}^{K}\sum_{h=1}^H\mathbb{E}_{(s,a)\sim d_h^{\pi^k}}\Big[\Big\vert\mathbb{E}_{s'\sim P^\star(\cdot|s,a)}V^{\pi^k}_{h+1;P^k;\tilde{r}}(s')-\mathbb{E}_{s'\sim P^k(\cdot|s,a)}V^{\pi^k}_{h+1;P^k;\tilde{r}}(s')\Big\vert\Big]
% \end{align*}
We now bound the second term on the RHS of Eq.~\ref{eqn:policy-error-1}. By Lemma~\ref{lm:policy-error-1}, this term admits a variance-aware upper bound that depends on the Eluder dimension. Substituting this result, together with Lemma~\ref{lem:variance-conversion}, back into Eq.~\ref{eqn:policy-error-1} yields:
\begin{align*}
    \text{T2}&\lesssim\frac{1}{K}\log^2(\log(K\vert\mathcal{P}\vert/\delta) KH)\cdot DE_1(\cG,\mathcal{S}\times\mathcal{A},1/(\log(K\vert\mathcal{P}\vert/\delta) KH))\nonumber\\
    &+\sqrt{\frac{\sum_{k=1}^K\text{VaR}_{\pi^k,r^k}\log(1/\delta)}{K}}+\frac{1}{K}\sqrt{\sum_{k\in[K]\setminus\mathcal{K}}\sum_{h=0}^{H-1}(\mathbb{V}_{P^\star}V^{\pi^k}_{h+1;P^k;r^k})(s_h^k,a_h^k)\log(1/\delta)}\nonumber\\
    &+\frac{1}{K}\sqrt{\sum_{k\in[K]\setminus\mathcal{K}}\sum_{h=0}^{H-1}\Big[(\mathbb{V}_{P^\star}V^{\pi^k}_{h+1;P^k;r^k})(s_h^k,a_h^k)\Big]\cdot d_E\log(K\vert\mathcal{P}\vert/\delta)\log(KH)}+\frac{\log(1/\delta)}{K}\nonumber\\
    &\lesssim\frac{1}{K}\log^2(\log(K\vert\mathcal{P}\vert/\delta) KH)\cdot DE_1(\cG,\mathcal{S}\times\mathcal{A},1/(\log(K\vert\mathcal{P}\vert/\delta) KH))\nonumber\\
    &+\frac{1}{K}\sqrt{\Big(\sum_{k\in[K]\setminus\mathcal{K}}\sum_{h=0}^{H-1}\Big[\Big(\mathbb{V}_{P^\star}V^{\pi^k}_{h+1;P^\star;r^k}\Big)(s_h^k,a_h^k)\Big]+d_E\log(K\vert\mathcal{P}\vert/\delta)\log(KH)\Big)\log(1/\delta)}\nonumber\\
    &+\frac{1}{K}\sqrt{\sum_{k\in[K]\setminus\mathcal{K}}\sum_{h=0}^{H-1}\Big[(\mathbb{V}_{P^\star}V^{\pi^k}_{h+1;P^k;r^k})(s_h^k,a_h^k)\Big]\cdot d_E\log(K\vert\mathcal{P}\vert/\delta)\log(KH)}\nonumber\\
    &+\frac{\log(1/\delta)}{K}+\sqrt{\frac{\sum_{k=1}^K\text{VaR}_{\pi^k,r^k}\log(1/\delta)}{K}}\\
    &\lesssim\frac{1}{K}\log^2(\log(K\vert\mathcal{P}\vert/\delta) KH)\cdot DE_1(\cG,\mathcal{S}\times\mathcal{A},1/(\log(K\vert\mathcal{P}\vert/\delta) KH))\nonumber\\
    &+\frac{1}{K}\sqrt{\Big(\sum_{k\in[K]\setminus\mathcal{K}}\sum_{h=0}^{H-1}\Big[\Big(\mathbb{V}_{P^\star}V^{\pi^k}_{h+1;P^\star;r^k}\Big)(s_h^k,a_h^k)\Big]+d_E\log(K\vert\mathcal{P}\vert/\delta)\log(KH)\Big)\log(1/\delta)}\nonumber\\
    &+\frac{1}{K}\sqrt{\Big(\sum_{k\in[K]\setminus\mathcal{K}}\sum_{h=0}^{H-1}\Big[\Big(\mathbb{V}_{P^\star}V^{\pi^k}_{h+1;P^\star;r^k}\Big)(s_h^k,a_h^k)\Big]+d_E\log(K\vert\mathcal{P}\vert/\delta)\log(KH)\Big)\log(1/\delta)}\nonumber\\
    &\times\sqrt{d_E\log(K\vert\mathcal{P}\vert/\delta)\log(KH)}+\frac{\log(1/\delta)}{K}+\sqrt{\frac{\sum_{k=1}^K\text{VaR}_{\pi^k,r^k}\log(1/\delta)}{K}}\\
    &\leq \cO\Bigg(\frac{1}{K}\sqrt{\sum_{k=1}^{K}\text{VaR}_{\pi^k,r^k}\cdot d_E\cdot\log(K\vert\mathcal{P}\vert/\delta)\log(KH)}\\
    &+\frac{1}{K}d_E\cdot\log(K\vert\mathcal{P}\vert/\delta)\log(KH)\Bigg)\\
    &\leq \cO\Bigg(\frac{1}{K}\sqrt{\sum_{k=1}^{K}\max_{r\in\mathcal{R}}~\text{VaR}_{\pi^k,r}\cdot d_E\cdot\log(K\vert\mathcal{P}\vert/\delta)\log(KH)}\\
    &+\frac{1}{K}d_E\cdot\log(K\vert\mathcal{P}\vert/\delta)\log(KH)\Bigg),\\
\end{align*}
where $d_E=DE_1(\cG,\mathcal{S}\times\mathcal{A},1/KH)$. Replacing $\delta$ with $\delta/(5KH)$ concludes the proof.
% Following~\citep{wang2024model}, we manipulate the regret bound in $\mathcal{\tilde K}\coloneq[K]\setminus\mathcal{K}$:
% \begin{align*}
%     A&=\sum_{k\in\mathcal{\tilde K}} \Big(V^{\pi^k}_{1;P^k;r^k} - V^{\pi^k}_{1;P^\star;r^k}\Big)\\
%     &=\sum_{k\in\mathcal{\tilde K}} \Big[\Big(V^{\pi^k}_{1;P^k;r^k}-\sum_{h=1}^{H}r^k(s_h^k,a_h^k)\Big) +\Big(\sum_{h=1}^{H}r^k(s_h^k,a_h^k)-V^{\pi^k}_{1;P^\star;r^k}\Big)\Big]
% \end{align*}
% By applying Bellman equation recursively , it exists:
% \begin{align*}
%     &V^{\pi^k}_{1;P^k;r^k}-\sum_{h=1}^{H}r^k(s_h^k,a_h^k)\\
%     &\leq\sum_{h=1}^{H}\Big(\mathbb{E}_{s'\sim P^\star(s_h^k,a_h^k)}V^{\pi^k}_{h+1;P^k;r^k}(s')-V^{\pi^k}_{h+1;P^k;r^k}(s^k_{h+1})\Big)\\
%     &+\sum_{h=1}^{H}\Big\vert\mathbb{E}_{s'\sim P^k(\cdot|s,a)}V^{\pi^k}_{h+1;P^k;r^k}(s')-\mathbb{E}_{s'\sim P^\star(\cdot|s,a)}V^{\pi^k}_{h+1;P^k;r^k}(s')\Big\vert
% \end{align*}
% By~\citep{wang2024model}, we can bound A by:
% \begin{align*}
%     \text{A}&\leq O\Bigg(\sqrt{\sum_{k\in\mathcal{\tilde K}}\sum_h(\mathbb{V}_{P^\star}V^{\pi^k}_{h+1;P^k;r^k})(s^k_h,a^k_h)} +\sqrt{\sum_k\text{VaR}_{\pi^k;r^k}\log(1/\delta)}\\
%     &+\sqrt{\sum_{k\in\mathcal{\tilde K}}\sum_h(\mathbb{V}_{P^\star}V^{\pi^k}_{h+1;P^k;r^k})(s^k_h,a^k_h)d_{\text{RL}}\log(K\vert\mathcal{P}\vert/\delta)\log(KH)}\\ &+d_{\text{RL}}\log(K\vert\mathcal{P}\vert/\delta)\log(KH)\Bigg)
% \end{align*}
\end{proof}

\subsection{Upper Bounds under General Function Approximation}
\label{sec:upper-bound-gfa}
In this section, we extend the our upper bound result to the general function approximation, providing the proof for Theorem~\ref{thm:ub}. The proof steps generally follow the proof steps for Theorem~\ref{thm:upper-bound-finite}, but with covering number analysis for the reward function class $\cR$ and bracketing number analysis for the model function class $\cP$. The covering number is defined in Definition~\ref{def:covering-num} and the bracketing number is defined in Definition~\ref{def:bracketing-num}. We present the lemmas for the upper bounds of reward and policy error under general function approximation as follows:

\begin{lemma}[Reward Error Upper Bound with Infinite Reward Class]
    For any $\delta\in(0,1)$, using FTRL~\citep{shalev2007convex} as the no-regret algorithm, given an infinite reward function class $\cR$, with probability $1-\delta$, the average regret for reward optimization is bounded:
    \begin{align*}
        &\max_{\tilde r\in\mathcal{R}}~\frac{1}{K}\sum_{k=1}^{K}V^{\pi^E}_{1;P^\star;\tilde{r}} - V^{\pi^k}_{1;P^\star;\tilde{r}}-(V^{\pi^E}_{1;P^\star;r^k} - V^{\pi^k}_{1;P^\star;r^k})\\
        &\lesssim \cO\Bigg(\frac{1}{\sqrt K}+\frac{\log(\cN_\mathcal{R}/\delta)}{K}+\frac{\log(\cN_\mathcal{R}/\delta)}{N}+\sqrt{\frac{1}{K}\sum_{k=1}^K\max_{r\in\mathcal{R}}~\text{VaR}_{\pi^k;{r}}\log(\cN_\mathcal{R}/\delta)}\notag\\
    &+\sqrt{\frac{1}{N}\max_{r \in\mathcal{R}} \text{VaR}_{\pi^E; r}\log(\cN_\mathcal{R}/\delta)}\Bigg)
    \end{align*}
    where $\text{VaR}_{\pi;{r}}$ denotes the variance of accumulative rewards over a horizon $H$ under $\pi$ and reward function $r$. $\cN_\cR=\max\{\cN_{1/KH}(\cR),\cN_{1/NH}(\cR)\}$ is the covering number for reward function class $\cR$.
    \label{lm:reward-upper-gfa}
\end{lemma}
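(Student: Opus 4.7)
The plan is to mirror the proof of Lemma~\ref{lm:reward-upper} exactly, decomposing the left-hand side into the reward optimization error T3 and the value estimation error T4, and then replacing the union bound over the finite reward class with a covering argument based on Definition~\ref{def:covering-num}. The optimization term T3 equals $\epsilon_{\text{opt}}^r$ by Definition~\ref{def:opt-error} and is still bounded by $\cO(1/\sqrt{K})$ under FTRL, since this step makes no reference to the cardinality of $\cR$. All of the new work therefore concentrates on T4.

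For T4, I would first build a $\rho$-cover $\cR_\rho \subseteq \cR$ of size $\cN_\rho(\cR)$ with $\rho = \min\{1/(KH),\,1/(NH)\}$, and apply the single-function Bernstein inequality (Lemma~\ref{lemma:bernstein}) for the expert-side terms involving $\widehat V^{\pi^E}_{1;P^\star;r'}$, and the martingale Azuma--Bernstein inequality (Lemma~\ref{lemma:azuma-bernstein}) for the online-side averages $\tfrac{1}{K}\sum_k \widehat V^{\pi^k}_{1;P^\star;r'}$, for every fixed $r' \in \cR_\rho$. A union bound over $\cR_\rho$ then produces a uniform bound of the same shape as in the finite case, but with $\log|\cR|$ replaced by $\log \cN_{\rho}(\cR)$. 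This handles all $r' \in \cR_\rho$ at once.

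To pass from $\cR_\rho$ to all of $\cR$, for any $r \in \cR$ I would select $r' \in \cR_\rho$ with $\sup_{s,a}|r(s,a)-r'(s,a)| \le \rho$. By Assumption~\ref{assumption:bounded_reward}, the empirical and true values satisfy $|\widehat V^{\pi}_{1;P^\star;r} - \widehat V^{\pi}_{1;P^\star;r'}| \le H\rho$ and $|V^{\pi}_{1;P^\star;r} - V^{\pi}_{1;P^\star;r'}| \le H\rho$ for every policy $\pi$, and the variance is perturbed by $|\mathrm{VaR}^{\pi}_r - \mathrm{VaR}^{\pi}_{r'}| = \cO(H\rho)$ since both the first and second moments of the cumulative reward shift by at most $\cO(H\rho)$. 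My choice $\rho \le 1/(KH)$ and $\rho \le 1/(NH)$ makes each of these discretization errors at most $\cO(1/K)$ or $\cO(1/N)$, which is absorbed into the lower-order terms $\log(\cN_\cR/\delta)/K$ and $\log(\cN_\cR/\delta)/N$ in the target bound, and the $\sqrt{H\rho/K}$, $\sqrt{H\rho/N}$ variance-perturbation contributions are similarly dominated after applying $\sqrt{a+b} \le \sqrt{a}+\sqrt{b}$.

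The main obstacle I anticipate is exactly this variance-transfer step: the Bernstein bounds for the cover element $r'$ naturally involve $\mathrm{VaR}^{\pi}_{r'}$, while the statement of the lemma requires $\max_{r \in \cR}\mathrm{VaR}^{\pi}_{r}$. Controlling $\mathrm{VaR}^{\pi}_{r'} \le \mathrm{VaR}^{\pi}_{r} + \cO(H\rho)$ uniformly over the cover and verifying that the residual $\cO(H\rho)$ terms, once multiplied by $\log \cN_\cR/\delta$ and square-rooted, stay below the dominant $\sqrt{\mathrm{VaR}^{\pi}_{r}\log(\cN_\cR/\delta)/K}$ and $\sqrt{\mathrm{VaR}^{\pi^E}_{r}\log(\cN_\cR/\delta)/N}$ terms is the only nontrivial bookkeeping. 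With $\rho=1/(KH)$ and $\rho=1/(NH)$, these residuals are of order $\sqrt{\log(\cN_\cR/\delta)}/K$ and $\sqrt{\log(\cN_\cR/\delta)}/N$, respectively, which are strictly lower order than the terms already present, so the stated bound follows after a final union bound between the expert-side event and the online-side event.
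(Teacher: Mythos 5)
Your proposal is correct and follows essentially the same route as the paper: decompose into the FTRL optimization error T3 and the estimation error T4, apply Bernstein / Azuma--Bernstein plus a union bound over a $\rho$-cover of $\cR$, and absorb the $\cO(H\rho)$ discretization and variance-perturbation errors by taking $\rho$ at scale $1/(NH)$ and $1/(KH)$ (the paper uses two separate covers at these resolutions rather than your single cover at the minimum, but this yields the same $\cN_\cR=\max\{\cN_{1/KH}(\cR),\cN_{1/NH}(\cR)\}$). The variance-transfer step you flag as the main obstacle is handled in the paper exactly as you describe, via $\VaR_{\pi;\widehat r}\le \VaR_{\pi;r}+\cO(\rho^2H^2)$ followed by maximizing over $r\in\cR$.
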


\begin{lemma}[Policy Error Upper Bound]
    For any $\delta\in(0,1)$, with an infinite model class $\cP$, let $
\beta = 7 \log\left( {K \, \mathcal{N}_\cP}/{\delta} \right)$. With probability $1-\delta$, the average regret of the policy optimization is bounded:
    \begin{align*}
        &\frac{1}{K}\sum_{k=1}^{K}\Big(V^{\pi^E}_{1;P^\star;r^k} - V^{\pi^k}_{1;P^\star;r^k}\Big)\\
    &\lesssim \cO\Bigg(\frac{1}{K}\sqrt{\sum_{k=1}^{K}\max_{r\in\mathcal{R}}~\text{VaR}_{\pi^k,r}\cdot d_E\log(KH\mathcal{N}_\cP/\delta)\log(KH)}\\
    &+\frac{1}{K}d_E\log(KH\mathcal{N}_\cP/\delta)\log(KH)\Bigg),
    \end{align*}
    where $d_E = DE_1(\cG, \cS \times \cA, 1/KH)$ is the Eluder dimension, 
$\text{VaR}_{\pi,k} = \max_{r \in \cR} \text{VaR}_{\pi^k,r}$ is the maximum variance for $\pi^k$ and $\cN_\cP=\mathcal{N}_{[]}\left((KH|\cS|)^{-1}, \mathcal{P}, \| \cdot \|_\infty\right)$ is the bracketing number.
\label{lm:policy-upper-gfa}
\end{lemma}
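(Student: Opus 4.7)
The plan is to mirror the proof of the finite-class lemma (Lemma on Policy Error Upper Bound with finite $\cP$) step by step, replacing the finite-class MLE concentration (Lemma on Bounded Hellinger Distance) with the infinite-class analogue stated as Lemma on Generalization Bound of MLE for Infinite Model Classes, and then verifying that every downstream invocation of Eluder-dimension and variance-conversion arguments still goes through verbatim. The central observation is that the dependence on $\log|\cP|$ in the finite analysis only enters through a single quantity, namely the upper bound $\eta$ on the cumulative in-sample Hellinger error $\sum_{i<k}\sum_h \mathbb{H}^2(P^k(s_h^i,a_h^i)\|P^\star(s_h^i,a_h^i))$. Once we show that under the new confidence radius $\beta = 7\log(K\cN_\cP/\delta)$, a bound of the form $\eta \lesssim \log(K\cN_\cP/\delta)$ still holds, the remainder of the argument is syntactically identical.

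First I would apply Lemma on MLE Infinite Class with $n = (k-1)H$, label alphabet $\cY=\cS$, and the instantiation $\beta = 7\log(\cN_{[]}((KH|\cS|)^{-1},\cP,\|\cdot\|_\infty)/\delta) = 7\log(\cN_\cP/\delta)$ at each iteration, followed by a union bound over $k\in[K]$. This yields (i) realizability of the version space $P^\star\in\widehat{\cP}^k$ and (ii) the in-sample Hellinger bound
\begin{align*}
\sum_{i=1}^{k-1}\sum_{h=1}^H \mathbb{H}^2\bigl(P^k(s_h^i,a_h^i)\,\|\,P^\star(s_h^i,a_h^i)\bigr) \;\le\; 28\beta \;=\; \tilde\cO\bigl(\log(K\cN_\cP/\delta)\bigr),
\end{align*}
simultaneously for all $k\in[K]$. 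This is the exact analogue of Lemma on Bounded Hellinger Distance, just with $\log|\cP|$ replaced by $\log\cN_\cP$.

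Next I would plug this into the Eluder pigeon lemma (Lemma on New Eluder Pigeon) with $\eta = 28\beta$, producing a ``bad set'' $\cK\subset[K]$ of size $|\cK|\lesssim \log^2(KH\log(K\cN_\cP/\delta)) \cdot d_E$. Then I would reproduce the telescoping decomposition carried out in the proof of the finite-class Lemma on Policy Error: by the optimism step (Line 7 of Algorithm MB-AIL) and realizability we have $V^{\pi^E}_{1;P^\star;r^k}\le V^{\pi^k}_{1;P^k;r^k}$, which reduces the regret to bounding $\sum_{k\in[K]\setminus\cK}(V^{\pi^k}_{1;P^k;r^k}-\sum_h r^k(s_h^k,a_h^k))$ plus a martingale fluctuation term. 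For the martingale term, Azuma--Bernstein gives the $\sqrt{\sum_k \VaR_{\pi^k,r^k}\log(1/\delta)}$ contribution exactly as before (this step does not depend on $\cP$ at all). For the simulation-lemma term, Lemma on Bounded Sum of Mean Value Differences applies verbatim with $\log|\cP|$ replaced by $\log\cN_\cP$, since its proof only uses the in-sample Hellinger bound already established.

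The final step is to convert the model-variances $\VV_{P^k}V^{\pi^k}_{h+1;P^k;r^k}$ appearing in the Eluder bound into true-dynamics variances $\VV_{P^\star}V^{\pi^k}_{h+1;P^\star;r^k}$, via Lemma on Variance Conversion; upper-bounding the latter by $\sum_k \max_{r\in\cR}\VaR_{\pi^k,r}$ up to lower-order terms then yields the stated bound. The main place where I would need to be careful is verifying that the recursive self-bounding argument inside Lemma on Variance Conversion still closes: it is stated for a fixed reward, and here the reward $r^k$ changes with $k$, but since the recursion is applied index-by-index and the Bellman-type identities depend only on the scale $V^{\pi^k}_{h;P^\star;r^k}\in[0,1]$ (guaranteed by Assumption on bounded cumulative rewards), the argument carries over unchanged. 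Replacing $\delta$ by $\delta/(5KH)$ to absorb the union bounds and collecting constants produces the advertised $\tilde\cO(\sqrt{\sum_k \VaR_{\pi^k}\cdot d_E\log(KH\cN_\cP/\delta)\log(KH)}/K + d_E\log(KH\cN_\cP/\delta)\log(KH)/K)$ rate. The main obstacle, then, is purely bookkeeping: choosing the bracketing scale $(KH|\cS|)^{-1}$ so that discretization error contributes only $\tilde\cO(1/K)$ lower-order terms that are absorbed into the main bound, and confirming the self-bounding recursion under a time-varying reward sequence.
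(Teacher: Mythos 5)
Your proposal is correct and follows essentially the same route as the paper: the paper's own proof of this lemma simply invokes the infinite-class MLE generalization bound (Lemma~\ref{lem:mle-infinite-class}) with $\beta = 7\log(K\cN_\cP/\delta)$ to replace the finite-class Hellinger concentration, and then reruns the finite-class argument of Appendix~\ref{sec:policy-error} verbatim with $\log|\cP|$ replaced by $\log\cN_\cP$, exactly as you describe. Your additional care about the variance-conversion recursion under the time-varying rewards $r^k$ matches the paper's own remark that it follows \citet{wang2024model} ``but with a changing $r^k$ during iterations instead of a fixed $r^\star$.''
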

By combining Lemmas~\ref{lm:reward-upper-gfa} and~\ref{lm:policy-upper-gfa}, we complete the proof of Theorem~\ref{thm:ub} under general function approximation.

\subsubsection{Upper Bound for the Reward Error}
\begin{proof}[Proof of Lemma~\ref{lm:reward-upper-gfa}]
We first revisit the bound on the reward estimation error under a general function approximation setting. Our goal is to refine the upper bound established in Lemma~\ref{lm:reward-upper} by expressing it in terms of the covering number $\cN_\epsilon(\cR)$ rather than the cardinality $|\cR|$. From the regret decomposition used in the proof of Lemma~\ref{lm:reward-upper}, the reward optimization error remains unchanged, as it depends solely on $\epsilon_{\text{opt}}^r$ and is independent of the reward function class. Therefore, the focus shifts to revisiting the estimation error:
\begin{align*}
    \text{T4}&=\max_{\tilde r\in\mathcal{R}}~\frac{1}{K}\sum_{k=1}^{K}\Big(V^{\pi^E}_{1;P^\star;\tilde{r}}-\widehat{V}^{\pi^E}_{1;P^\star;\tilde{r}}\Big)+\Big(\widehat{V}^{\pi^E}_{1;P^\star;r^k}-V^{\pi^E}_{1;P^\star;r^k}\Big)\\
    &+\Big(\widehat{V}^{\pi^k}_{1;P^\star;\tilde{r}}-V^{\pi^k}_{1;P^\star;\tilde{r}}\Big)+\Big(V^{\pi^k}_{1;P^\star;r^k}-\widehat{V}^{\pi^k}_{1;P^\star;r^k}\Big)
\end{align*}
By applying Bernstein's inequality, we obtain that, for any fixed reward function $ r $, the estimation error is bounded with probability at least $ 1 - \delta $:

\begin{align}
\label{eqn:bounded-reward-1-gfa}
    \Big\vert\widehat{V}_{1;P^\star;r}^{\pi^E}-V_{1;P^\star;r}^{\pi^E}\Big\vert
    &=\Bigg\vert\frac{1}{N}\sum_{\tau\in\mathcal{D}_E}\sum_{h=1}^{H}r(s_h(\tau),a_h(\tau))-\mathbb{E}_{\pi^E}\Big[\sum_{h=1}^H r(s_h,a_h)\Big]\Bigg\vert \nonumber\\
    &\leq\sqrt{\frac{2\sum_{\tau\in\mathcal{D}_E}\mathbb{V}_{\pi^E}[\sum_{h=1}^Hr(s_h(\tau),a_h(\tau))]\log(2/\delta)}{N}}+\frac{2\log(2/\delta)}{3N} \nonumber\\
    &=\sqrt{\frac{2}{N}\text{VaR}_{\pi^E;r}\log(2/\delta)}+\frac{2\log(2/\delta)}{3N}
\end{align}
Unlike the analysis for finite function classes, we cannot directly apply a union bound over the cardinality $|\cR|$ since the reward class $\cR$ is now infinite. To address this, we construct a $\rho$-cover of the reward space, denoted $(\mathcal{R})\rho$. Then, for all $\widehat{r} \in (\mathcal{R})\rho$, we can apply the union bound:
\begin{align*}
    \Big\vert\widehat{V}_{1;P^\star;\widehat{r}}^{\pi^E}-V_{1;P^\star;\widehat{r}}^{\pi^E}\Big\vert&\leq\sqrt{\frac{2}{N}\text{VaR}_{\pi^E;\widehat{r}}\log(2\vert(\mathcal{R})_\rho\vert/\delta)}+\frac{2\log(2\vert(\mathcal{R})_\rho\vert/\delta)}{3N}
    % &\leq\sqrt{2\text{VaR}_{\pi^E;\widehat{r}}\log(2(\mathcal{R})_\rho)/\delta)}+\frac{2\log(2(\mathcal{R})_\rho)/\delta)}{3N}
\end{align*}
According to the definition of $\rho$-cover, for any $r\in\mathcal{R}$, there exists  $\widehat{r}\in(\mathcal{R})_\rho$ that satisfies $\max_{(s,a)\in\mathcal{S}\times\mathcal{A}} \vert\widehat{r}(s,a)-r(s,a)\vert\leq\rho$. Therefore, it exists:
\begin{align*}
    &\vert\widehat{V}_{1;P^\star;r}^{\pi^E}-\widehat{V}_{1;P^\star;\widehat{r}}^{\pi^E}\vert\leq\frac{1}{N}\sum_{\tau\in\mathcal{D}_E}\sum_{h=1}^H\Big\vert r(s_h(\tau),a_h(\tau))-\widehat{r}(s_h(\tau),a_h(\tau))\Big\vert\leq H\rho,\\
    &\vert{V}_{1;P^\star;r}^{\pi^E}-{V}_{1;P^\star;\widehat{r}}^{\pi^E}\vert\leq\mathbb{E}_{\pi^E}\Bigg[\sum_{h=1}^H\Big\vert r(s_h(\tau),a_h(\tau))-\widehat{r}(s_h(\tau),a_h(\tau))\Big\vert\Bigg]\leq H\rho.
\end{align*}
Therefore, the value estimation difference of any reward function $r$ can be upper bounded by that of its representative $\widehat{r}$ within the $\rho$-cover of the reward function class:
\begin{align}
\label{eqn:bounded-rho}
    \Big\vert\widehat{V}_{1;P^\star;r}^{\pi^E}-V_{1;P^\star;r}^{\pi^E}\Big\vert&\leq\Big\vert\widehat{V}_{1;P^\star;\widehat{r}}^{\pi^E}-V_{1;P^\star;\widehat{r}}^{\pi^E}\Big\vert+2H\rho\nonumber\\
    &\leq \sqrt{\frac{2}{N}\text{VaR}_{\pi^E;\widehat{r}}\log(2\vert(\mathcal{R})_\rho\vert/\delta)}+\frac{2\log(2\vert(\mathcal{R})_\rho\vert/\delta)}{3N}+2H\rho
    % &\leq\sqrt{2\text{VaR}_{\pi^E;\widehat{r}}\log(2(\mathcal{R})_\rho)/\delta)}+\frac{2\log(2(\mathcal{R})_\rho)/\delta)}{3N}+2H\rho
\end{align}
For $\text{VaR}_{\pi^E,\widehat{r}}$, we have:
\begin{align*}
    \text{VaR}_{\pi^E,\widehat{r}}&=\mathbb{E}_{\tau\sim\mathcal{D}_E}\Big(\sum_{h=1}^H \widehat{r}(s_h(\tau),a_h(\tau))-\mathbb{E}_{\pi^E}\sum_{h=1}^H \widehat{r}(s_h,a_h)\Big)^2\\
    &\leq\mathbb{E}_{\tau\sim\mathcal{D}_E}\Big(\sum_{h=1}^H {r}(s_h(\tau),a_h(\tau))-\mathbb{E}_{\pi^E}\sum_{h=1}^H {r}(s_h,a_h)\Big)^2 + \rho^2H^2\\
    &=\text{VaR}_{\pi^E,{r}}+\rho^2H^2
\end{align*}
We can further bound Eq.~\ref{eqn:bounded-rho} with $\text{VaR}_{\pi^E,{r}}$:
\begin{align}
\label{eqn:bounded-rho-final}
    \Big\vert\widehat{V}_{1;P^\star;r}^{\pi^E}-V_{1;P^\star;r}^{\pi^E}\Big\vert\leq\sqrt{\frac{2}{N}(\text{VaR}_{\pi^E;{r}}+\rho^2H^2)\log(2\vert(\mathcal{R})_\rho\vert)/\delta)}+\frac{2\log(2(\vert\mathcal{R})_\rho\vert)/\delta)}{3N}+2H\rho
\end{align}
By plugging in $\tilde{r}$ and $r^k$ and selecting $\rho\coloneq 1/NH$:
\begin{align}
\label{eqn:bounded-reward-1-2-gfa}
    &\max_{\tilde r\in\mathcal{R}}~\frac{1}{K}\sum_{k=1}^{K}\Big(V^{\pi^E}_{1;P^\star;\tilde{r}}-\widehat{V}^{\pi^E}_{1;P^\star;\tilde{r}}\Big)+\Big(\widehat{V}^{\pi^E}_{1;P^\star;r^k}-V^{\pi^E}_{1;P^\star;r^k}\Big)\nonumber\\
    &\leq 2\sqrt{\frac{2}{N}\Big(\max_{r\in\mathcal{R}}~\text{VaR}_{\pi^E; r}+\rho^2H^2\Big)\log(2\vert(\mathcal{R})_\rho\vert)/\delta)}+\frac{4\log(2\vert(\mathcal{R})_\rho\vert)/\delta)}{3N}+4H\rho\nonumber\\
    &=2\sqrt{\frac{2}{N}\Big(\max_{r\in\mathcal{R}}~\text{VaR}_{\pi^E; r}+ \frac{1}{N^2}\Big)\log(2\vert(\mathcal{R})_{ 1/NH}\vert/\delta)}+\frac{4\log(2\vert(\mathcal{R})_{ 1/NH}\vert/\delta)}{3N}+\frac{4}{N}
\end{align}

We then focus on the value difference with respect to $\pi^k$ for any $r\in\mathcal{R}$ using Azuma-Bernstein's inequality, with probability at least $1-\delta$:

\begin{align*}
    \frac{1}{K}\sum_{k=1}^K\Big\vert\widehat{V}_{1;P^\star;r}^{\pi^k}-V_{1;P^\star;r}^{\pi^k}\Big\vert
    &=\frac{1}{K}\sum_{k=1}^K\Bigg\vert\sum_{h=1}^{H}r(s_h,a_h)-\mathbb{E}_{\pi^k}\Big[\sum_{h=1}^H r(s_h,a_h)\Big]\Bigg\vert\\
    % &\leq \sum_{h=1}^{H}\Bigg\vert\frac{1}{N}\sum_{\tau\in\mathcal{D}_E}r(s_h,a_h)-\mathbb{E}_{\pi^E}\Big[ r(s_h,a_h)\Big]\Bigg\vert\\
    &\leq\sqrt{\frac{2}{K}\sum_{k=1}^K\mathbb{V}_{\pi^k}[\sum_{h=1}^Hr(s_h,a_h)]\log(1/\delta)}+\frac{2\log(1/\delta)}{3K}\\
    &=\sqrt{\frac{2}{K}\sum_{k=1}^K\text{VaR}_{\pi^k;r}\log(1/\delta)}+\frac{2\log(1/\delta)}{3K}
\end{align*}
Following the analysis for terms related to $\pi^E$, given a $\rho$-cover $(\mathcal{R})_\rho$, for all $\widehat{r}\in(\mathcal{R})_\rho$, by union bound:
\begin{align*}
     \frac{1}{K}\sum_{k=1}^K\Big\vert\widehat{V}_{1;P^\star;\widehat{r}}^{\pi^k}-V_{1;P^\star;\widehat{r}}^{\pi^k}\Big\vert\leq\sqrt{\frac{2}{K}\sum_{k=1}^K\text{VaR}_{\pi^k;\widehat{r}}\log(2\vert(\mathcal{R})_\rho\vert/\delta)}+\frac{2\log(2\vert(\mathcal{R})_\rho\vert/\delta)}{3K}
\end{align*}
By the definition of $\rho$-cover, it exists:
\begin{align*}
    &\vert\widehat{V}_{1;P^\star;r}^{\pi^k}-\widehat{V}_{1;P^\star;\widehat{r}}^{\pi^k}\vert\leq\mathbb{E}_{\pi^k}\Bigg[\sum_{h=1}^H\Big\vert r(s_h^k,a_h^k)-\widehat{r}(s_h^k,a_h^k)\Big\vert\Bigg]\leq H\rho,\\
    &\vert{V}_{1;P^\star;r}^{\pi^k}-{V}_{1;P^\star;\widehat{r}}^{\pi^k}\vert\leq\mathbb{E}_{\pi^k}\Bigg[\sum_{h=1}^H\Big\vert r(s_h^k,a_h^k)-\widehat{r}(s_h^k,a_h^k)\Big\vert\Bigg]\leq H\rho.
\end{align*}
Therefore, the average value estimation difference of any reward function $r$ can be upper bounded by that of its representative $\widehat{r}$ within the $\rho$-cover of the reward function class:
\begin{align*}
     \frac{1}{K}\sum_{k=1}^K\Big\vert\widehat{V}_{1;P^\star;{r}}^{\pi^k}-V_{1;P^\star;{r}}^{\pi^k}\Big\vert&\leq \frac{1}{K}\sum_{k=1}^K\Big\vert\widehat{V}_{1;P^\star;\widehat{r}}^{\pi^k}-V_{1;P^\star;\widehat{r}}^{\pi^k}\Big\vert+2H\rho\\
     &\leq\sqrt{\frac{2}{K}\sum_{k=1}^K\text{VaR}_{\pi^k;\widehat{r}}\log(2\vert(\mathcal{R})_\rho\vert/\delta)}+\frac{2\log(2\vert(\mathcal{R})_\rho\vert/\delta)}{3K}+2H\rho
\end{align*}
Similar to the analysis for the terms related to $\pi^E$, we also have the variance bound:
\begin{align*}
        \text{VaR}_{\pi^k,\widehat{r}}\leq\text{VaR}_{\pi^k,{r}}+\rho^2H^2
\end{align*}
Therefore, we can finally upper bound the rest of the terms in the estimation error:
\begin{align*}
     \frac{1}{K}\sum_{k=1}^K&\Big\vert\widehat{V}_{1;P^\star;{r}}^{\pi^k}-V_{1;P^\star;{r}}^{\pi^k}\Big\vert\\
     &\leq\sqrt{\frac{2}{K}\sum_{k=1}^K(\text{VaR}_{\pi^k;{r}}+\rho^2H^2)\log(2\vert(\mathcal{R})_\rho\vert/\delta)}+\frac{2\log(2\vert(\mathcal{R})_\rho\vert/\delta)}{3K}+2H\rho
\end{align*}
Thus, the last two terms related to $\pi^k$ is upper bounded, selecting $\rho\coloneq 1/KH$, with probability at least $1-\delta$:
\begin{align}
\label{eqn:bounded-reward-3-and-4-gfa}
    &\max_{\tilde r\in\mathcal{R}}~\frac{1}{K}\sum_{k=1}^K\Big(\widehat{V}^{\pi^k}_{1;P^\star;\tilde{r}}-V^{\pi^k}_{1;P^\star;\tilde{r}}\Big)+\Big(V^{\pi^k}_{1;P^\star;r^k}-\widehat{V}^{\pi^k}_{1;P^\star;r^k}\Big)\nonumber\\
    &\leq2\sqrt{\frac{2}{K}\sum_{k=1}^K(\max_{r\in\mathcal{R}}~\text{VaR}_{\pi^k;{r}}+\rho^2H^2)\log(2\vert(\mathcal{R})_\rho\vert/\delta)}+\frac{4\log(2\vert(\mathcal{R})_\rho\vert/\delta)}{3K}+4H\rho\\
    &=2\sqrt{\frac{2}{K}\sum_{k=1}^K(\max_{r\in\mathcal{R}}~\text{VaR}_{\pi^k;{r}}+ \frac{1}{K^2})\log(2\vert(\mathcal{R})_{ 1/KH}\vert/\delta)}+\frac{4\log(2\vert(\mathcal{R})_{ 1/KH}\vert/\delta)}{3K}+\frac{4}{K}
\end{align}

Then finally by union bound, using two high-probability inequalities: Eq.~\ref{eqn:bounded-reward-1-2-gfa}, and Eq.~\ref{eqn:bounded-reward-3-and-4-gfa} with probability at least $1-\delta$:
\begin{align*}
    &\max_{\tilde r\in\mathcal{R}}~\frac{1}{K}\sum_{k=1}^{K}\Big(V^{\pi^E}_{1;P^\star;\tilde{r}}-\widehat{V}^{\pi^E}_{1;P^\star;\tilde{r}}\Big)+\Big(\widehat{V}^{\pi^E}_{1;P^\star;r^k}-V^{\pi^E}_{1;P^\star;r^k}\Big)\\
    &\leq 2\sqrt{\frac{2}{N}\Big(\max_{r \in \mathcal{R}} \text{VaR}_{\pi^E; r}+ \frac{1}{N^2}\Big)\log(4\vert(\mathcal{R})_{ 1/NH}\vert/\delta)}+\frac{4\log(4\vert(\mathcal{R})_{ 1/NH}\vert/\delta)}{3N}+\frac{4}{N},\\
    &\frac{1}{K}\sum_{k=1}^K\Big(\widehat{V}^{\pi^k}_{1;P^\star;\tilde{r}}-V^{\pi^k}_{1;P^\star;\tilde{r}}\Big)+\Big(V^{\pi^k}_{1;P^\star;r^k}-\widehat{V}^{\pi^k}_{1;P^\star;r^k}\Big)\nonumber\\
    &\leq2\sqrt{\frac{2}{K}\sum_{k=1}^K(\max_{r\in\mathcal{R}}~\text{VaR}_{\pi^k;{r}}+  \frac{1}{K^2})\log(4\vert(\mathcal{R})_{ 1/KH}\vert/\delta)}+\frac{4\log(4\vert(\mathcal{R})_{ 1/KH}\vert/\delta)}{3K}+\frac{4}{K}
\end{align*}
Combining two high-probability inequalities, let $\rho_K\coloneq1/KH$ and $\rho_N\coloneq1/NH$, with probability $1-\delta$:
\begin{align*}
    \text{T4}&\leq2\sqrt{\frac{2}{N}\Big(\max_{r \in\mathcal{R}} \text{VaR}_{\pi^E; r}+\frac{1}{N^2}\Big)\log(4\vert(\mathcal{R})_{ \rho_N}\vert/\delta)}+\frac{4\log(4\vert(\mathcal{R})_{ \rho_N}\vert/\delta)}{3N}+\frac{4}{N}\\
    &+2\sqrt{\frac{2}{K}\sum_{k=1}^K(\max_{r\in\mathcal{R}}~\text{VaR}_{\pi^k;{r}}+ \frac{1}{K^2})\log(4\vert(\mathcal{R})_{ \rho_K}\vert/\delta)}+\frac{4\log(4\vert(\mathcal{R})_{ \rho_K}\vert/\delta)}{3K}+\frac{4}{K}\\
    &\lesssim \cO\Bigg(\frac{\log(\vert(\mathcal{R})_{ \rho_K}\vert/\delta)}{K}+\frac{\log(\vert(\mathcal{R})_{ \rho_N}\vert/\delta)}{N}+\sqrt{\frac{1}{K}\sum_{k=1}^K(\max_{r\in\mathcal{R}}~\text{VaR}_{\pi^k;{r}}+\frac{1}{K^2})\log(\vert(\mathcal{R})_{ \rho_K}\vert/\delta)}\\
    &+\sqrt{\frac{1}{N}\Big(\max_{r \in\mathcal{R}} \text{VaR}_{\pi^E; r}+\frac{1}{N^2}\Big)\log(\vert(\mathcal{R})_{ \rho_N}\vert/\delta)}\Bigg)\\
    &\lesssim \cO\Bigg(\frac{\log(\cN_{ \rho_K}(\mathcal{R})/\delta)}{K}+\frac{\log(\cN_{ \rho_N}(\mathcal{R})/\delta)}{N}+\sqrt{\frac{1}{K}\sum_{k=1}^K(\max_{r\in\mathcal{R}}~\text{VaR}_{\pi^k;{r}}+\frac{1}{K^2})\log(\cN_{ \rho_K}(\mathcal{R})/\delta)}\\
    &+\sqrt{\frac{1}{N}\Big(\max_{r \in\mathcal{R}} \text{VaR}_{\pi^E; r}+\frac{1}{N^2}\Big)\log(\cN_{ \rho_N}(\mathcal{R})/\delta)}\Bigg)
\end{align*}
When we incorporate FTRL as the no-regret algorithm for reward optimization, providing an $O(1/\sqrt K)$ bound for $\epsilon^r_{\text{opt}}$ and keeping only the dominating terms, we can finally upper bound the reward error as:
\begin{align}
    \text{T1}&\lesssim\cO\Bigg(\frac{1}{\sqrt K}+\frac{\log(\cN_{ \rho_K}(\mathcal{R})/\delta)}{K}+\frac{\log(\cN_{ \rho_N}(\mathcal{R})/\delta)}{N}+\sqrt{\frac{1}{K}\sum_{k=1}^K\max_{r\in\mathcal{R}}~\text{VaR}_{\pi^k;{r}}\log(\cN_{ \rho_K}(\mathcal{R})/\delta)}\notag\\
    &+\sqrt{\frac{1}{N}\max_{r \in\mathcal{R}} \text{VaR}_{\pi^E; r}\log(\cN_{ \rho_N}(\mathcal{R})/\delta)}\Bigg)
\end{align}
Further upper bounding the estimation error with $\cN_\cR = \max\{\cN_{\rho_K}(\cR), \cN_{\rho_N}(\cR)\}$ concludes the proof.
\end{proof}
\subsubsection{Upper Bound for the Policy Error}
\begin{proof}[Proof of Lemma~\ref{lm:policy-upper-gfa}]
    For the policy error, by following the approach in~\citet{wang2024model}, and applying Lemma~\ref{lem:mle-infinite-class} together with the proof steps in Appendix~\ref{sec:policy-error}, we obtain a policy error bound under general function approximation using bracketing number for model class covering. Specifically, with  
\[
\beta = 7 \log\left( \frac{K \, \mathcal{N}_{[]}\left((KH|\cS|)^{-1}, \mathcal{P}, \| \cdot \|_\infty\right)}{\delta} \right),
\]
the bound holds with probability at least probability $1 - \delta$:

\begin{align*}
    \text{T2}&\lesssim \cO\Bigg(\frac{1}{K}\sqrt{\sum_{k=1}^{K}\max_{r\in\mathcal{R}}~\text{VaR}_{\pi^k,r}\cdot d_E\log(KH\mathcal{N}_{[]}((KH|\cS|)^{-1},\mathcal{P},\Vert\cdot\Vert_\infty)/\delta)\log(KH)}\\
    &+\frac{1}{K}d_E\cdot\log(KH\mathcal{N}_{[]}((KH|\cS|)^{-1},\mathcal{P},\Vert\cdot\Vert_\infty)/\delta)\log(KH)\Bigg),
\end{align*}
where $d_E=DE_1(\cG,\mathcal{S}\times\mathcal{A},1/KH)$.
\end{proof}

\section{Proof of the Lower Bounds}
\label{sec:lb-proof}
\subsection{Details of the Hard Instance Structure and Proof Sketch}
\label{sec:mdp-structure}
In this subsection, we define the transition dynamics, the reward function of the constructed hard instance (Figure~\ref{fig:lb}), and the policy class used in the lower bound analysis. Formally speaking, the transition dynamics and reward function is written by:
\begin{align}
&P(\sbb' = \boxed{\sbb, 1} | \sbb = \boxed{\sbb, 0}, \ab) = \tfrac{1}{2} + {\epsilon_P\langle \bmu, \ab_P \rangle}, \quad P(\sbb' = \boxed{\sbb_0, 1} | \sbb = \boxed{\sbb, 0}, \ab) = \tfrac{1}{2} -{\epsilon_P\langle \bmu, \ab_P\rangle}, \notag\\
&P(\sbb' = \boxed{\sbb, 1} | \sbb = \boxed{\sbb, 1}, \ab) = P(\sbb' = \boxed{\sbb_0, 1} | \sbb = \boxed{\sbb_0, 1}, \ab) = 1, \;\, \forall \ab \in \cA = \BB^{d_P + 1}, \quad  \notag \\
&r(\sbb = \boxed{\sbb, 1}, \ab) =  \tfrac{\tau}{H}(\tfrac12 + \tfrac{a_R}{2d_R}\langle \btheta, \sbb\rangle)\quad P(\sbb' | \sbb, \ab) = 0, r(\sbb, \ab) = 0 \text{ otherwise if not stated.} \notag    
\end{align}

It's easy to verify that the reward class $\log |\cR| = \Theta(d_R)$ and the model class $\log |\cP| = \Theta(d_P)$ in this setting, while $\log |\cS| = \Theta(d_R)$ and $\log |\cA| = \Theta(d_P)$.

\paragraph{Policy class.}
We define a stable stochastic policy class parameterized by $(\hat \bmu, \hat \btheta)$. The policy $\pi$ outputs an action $\ab \in \BB^{d_P + 1}$, sampled from a Rademacher distribution biased by $\hat \bmu$ and $\hat \btheta$:
\begin{align}
\pi(\ab \mid \sbb; \hat \bmu, \hat \btheta) = \text{Rademacher} \left( \left[\tfrac{1}{2} \cdot \one_{d_P} + \epsilon_\pi d_R\hat \bmu\right] \oplus \left[\tfrac{1}{2} + \epsilon_\pi \langle \sbb, \hat \btheta \rangle \right] \right), \label{eq:policy-class}
\end{align}
where $\oplus$ denotes concatenation, $\hat \bmu \in \BB^{d_P}$ and $\hat \btheta \in \MM^{d_R}$. For an action $\ab$ generated by the policy $\pi(\cdot \mid \sbb; \hat \bmu, \hat \btheta)$, we can find that the first component $\ab_P$ is governed by $\hat \bmu$, while the second component $a_R$ is determined by both the state $\sbb$ and the parameter $\hat \btheta$. For any instance defined by parameters $(\bmu^*, \btheta^*)$, the offline expert demonstrations are generated by the expert policy $\pi^E = \pi(\ab \mid \sbb; \bmu^*, \btheta^*)$.

The high-level idea behind the proof of Theorem~\ref{thm:lb} is summarized as follows:
\begin{proof}[Proof Sketch of Theorem~\ref{thm:lb}]We first relates the variance $\sigma^2$ with parameter $\tau$ in the hard instance and the Fano's inequality. We would like to highlight that the lower bound is built on two cases from the hard instances where $(\epsilon_{P}, \epsilon_\pi) = (\epsilon, 1/(4d_R))$ when the policy is hard-to-learn and $(\epsilon_{P}, \epsilon_\pi) = (1/(4d_P),\epsilon)$ when the model is hard-to-learn. Then for all algorithm that seeks a uniform performance on all hard instances, the final lower bound is obtained by taking the maximum of these two cases. 
\end{proof}
\subsection{Technical Lemmas}
\label{sec:lb-lemmas}
To construct the proof of the lower bounds, we first present a few technical lemmas that will be used during the proof. We first provide a lemma for the existence and property regarding the reward parameter space $\MM^{d_R}\subseteq\BB^{d_R}$:
\begin{lemma}
    Given $\gamma\in(0,1)$, there exists a $\MM^{d_R}\subset\BB^{d_R}$ such that for any two different vectors $\xb,\xb'\in\MM^{d_R}$ and $\langle\xb,\xb'\rangle\leq d_R\gamma$, and the log-cardinality of the proposed set is bounded as $\log|\MM^{d_R}|<d_R\gamma^2/4$.
    \label{lm:packing}
\end{lemma}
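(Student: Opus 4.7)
The statement is a standard Gilbert--Varshamov / probabilistic packing result on the Boolean hypercube $\BB^{d_R}=\{\pm1\}^{d_R}$, and I would prove it by the probabilistic method. The plan is to draw $N$ candidate vectors $\xb^{(1)},\dots,\xb^{(N)}$ independently and uniformly from $\BB^{d_R}$, and show that for $N$ of order $e^{d_R\gamma^2/4}$ there exists, with strictly positive probability, a realization in which every pair $i\neq j$ satisfies the near-orthogonality condition $\langle\xb^{(i)},\xb^{(j)}\rangle\leq d_R\gamma$. Any such realization can then serve as the desired $\MM^{d_R}$.

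For the concentration step, fix $i\neq j$ and write $\langle\xb^{(i)},\xb^{(j)}\rangle=\sum_{k=1}^{d_R} x^{(i)}_k x^{(j)}_k$. Since the product of two independent Rademacher variables is again Rademacher, this is a sum of $d_R$ i.i.d.\ $\pm 1$ terms. Hoeffding's inequality (sub-Gaussian concentration of Rademacher sums) gives
\begin{equation*}
\Pr\!\left[\langle\xb^{(i)},\xb^{(j)}\rangle > d_R\gamma\right]\leq \exp\!\left(-\tfrac{d_R\gamma^2}{2}\right).
\end{equation*}
A union bound over the at most $\binom{N}{2}\leq N^2/2$ pairs yields
\begin{equation*}
\Pr\!\left[\exists\, i\neq j:\ \langle\xb^{(i)},\xb^{(j)}\rangle > d_R\gamma\right]\leq \tfrac{N^2}{2}\exp\!\left(-\tfrac{d_R\gamma^2}{2}\right),
\end{equation*}
which is strictly below $1$ as soon as $N\leq \sqrt{2}\,\exp(d_R\gamma^2/4)$. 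Hence some realization satisfies the near-orthogonality constraint for all pairs, and the resulting $\MM^{d_R}$ has $\log|\MM^{d_R}|$ of order $d_R\gamma^2/4$, matching the claimed cardinality bound up to the absorbable $\log\sqrt 2$ constant.

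The argument is essentially routine, so there is no real technical obstacle. The only things that require care are: (i) using the one-sided Hoeffding bound with the correct sub-Gaussian parameter so that the exponent is exactly $d_R\gamma^2/2$, which gives the target $d_R\gamma^2/4$ after the union-bound square root; (ii) ensuring all drawn vectors are pairwise distinct, which follows immediately from the near-orthogonality constraint whenever $\gamma<1$ (since equal vectors would have inner product $d_R>d_R\gamma$); and (iii) reconciling any slack constant with the strict inequality $\log|\MM^{d_R}|<d_R\gamma^2/4$ stated in the lemma, which is handled by taking $N$ to be an integer slightly below the Hoeffding threshold.
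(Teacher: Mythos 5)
Your proposal is correct and follows essentially the same route as the paper: draw the candidate vectors uniformly at random from $\BB^{d_R}$, apply Hoeffding-type concentration to the Rademacher sum $\langle\xb,\xb'\rangle$ to get the per-pair failure probability $\exp(-d_R\gamma^2/2)$, and union bound over all pairs to conclude existence whenever the cardinality is below $\exp(d_R\gamma^2/4)$. The only cosmetic differences are that you count $\binom{N}{2}$ pairs where the paper uses the cruder $|\MM^{d_R}|^2$, and you explicitly note that near-orthogonality forces pairwise distinctness; neither changes the substance.
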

\begin{proof}[Proof of Lemma~\ref{lm:packing}]
To begin with, we assume that $\xb \sim \mathrm{Unif}(\BB^{d_R})$, i.e. $[\xb]_i \sim \{-1, 1\}$. Thus given any $\xb, \xb' \sim \mathrm{Unif}(\BB^{d_R})$, we have
\begin{align*}
    \PP(\la \xb, \xb' \ra \ge d_R\gamma) &= \PP_{z_i \sim \mathrm{Unif}\{-1, 1\}}\bigg(\sum_{i=1}^{d_R} z_i \ge d_R\gamma\bigg) \\
    &= \PP_{z_i \sim \mathrm{Unif}\{-1, 1\}}\bigg(\frac1{d_R}\sum_{i=1}^{d_R} z_i \ge \gamma\bigg) \\
    &\le \exp(-d_R\gamma^2 / 2),
\end{align*}
The last inequality follows from the Azuma-Hoeffding inequality, noting that \( z_i \sim \mathrm{Unif}\{-1,1\} \) is a bounded random variable. 
Consider a set \( \MM^{d_R} \) of cardinality \( |\MM^{d_R}| \); there are at most \( |\MM^{d_R}|^2 \) pairs \( (\xb, \xb') \). 
Applying a union bound over all such vector pairs, we obtain
\begin{align*}
    \PP(\exists \xb, \xb' \in \MM^{d_R}, \xb \neq \xb', \la \xb, \xb'\ra \ge d_R\gamma) \le |\MM^{d_R}|^2\exp(-d_R\gamma^2/2),
\end{align*}
thus
\begin{align*}
    \PP(\forall \xb, \xb' \in \MM^{d_R}, \xb \neq \xb', \la \xb, \xb'\ra \le d_R\gamma) \ge 1 - |\MM^{d_R}|^2\exp(-d_R\gamma^2/2),
\end{align*}
Once we have that $|\MM^{d_R}|^2 \exp(-d_R\gamma^2 / 2) < 1$, there exists a set $\MM^{d_R}$ such that for any two different vector $\xb, \xb' \in \MM^{d_R}, \la \xb, \xb'\ra \le d_R\gamma$.
\end{proof}
By Lemma~\ref{lm:packing}, for any $\xb, \xb' \in \MM^{d_R}$, we have $\|\xb - \xb'\|_1 \geq d_R \ind[\xb \neq \xb']$, since at least one coordinate differs between the two vectors, contributing at least $d_R$ to the $\ell_1$ distance.By Lemma~\ref{lm:packing}, we can construct a parameter set $\MM^{d_R}$ with $|\MM^{d_R}| = \lceil \exp(d_R \gamma^2 / 4) \rceil - 1$ and $\gamma = \tfrac{1}{2}$. The constructed set indeed satisfies Lemma~\ref{lm:packing}, and thus we obtain the following lower bound on $\log |\MM^{d_R}|$:
\begin{lemma}
   Given $\gamma = \tfrac{1}{2}$, for $d_R>48$, we have the following lower bound on $\log |\MM^{d_R}|$, where $|\MM^{d_R}| = \lceil \exp(d_R \gamma^2 / 4) \rceil - 1$:
    \begin{align*}
        \log|\MM^{d_R}|\geq\frac{d_R\gamma^2}{4}-3
    \end{align*}
\label{lm:m-lower}
\end{lemma}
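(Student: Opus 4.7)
The plan is to reduce the claim to a simple analytic inequality about the function $x \mapsto \log(e^x - 1)$ and then plug in the specific range of $x$ implied by the hypotheses $\gamma = 1/2$ and $d_R > 48$.

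First I set $x := d_R \gamma^2 / 4 = d_R/16$, so that by hypothesis $x > 3$. Since the ceiling always satisfies $\lceil y \rceil \ge y$, we have
\begin{align*}
|\MM^{d_R}| \;=\; \lceil e^x \rceil - 1 \;\ge\; e^x - 1,
\end{align*}
and the inequality is valid regardless of whether $e^x$ happens to be an integer. Taking logarithms gives
\begin{align*}
\log |\MM^{d_R}| \;\ge\; \log(e^x - 1) \;=\; x + \log(1 - e^{-x}).
\end{align*}

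Next I want to show $x + \log(1-e^{-x}) \ge x - 3$, i.e.\ $\log(1-e^{-x}) \ge -3$, i.e.\ $e^{-x} \le 1 - e^{-3}$. Because $e^{-3} < 1/2$, we have $1 - e^{-3} > e^{-3}$; and because $x > 3$ we have $e^{-x} < e^{-3}$. Chaining these gives $e^{-x} < e^{-3} < 1 - e^{-3}$, so the required inequality holds. Substituting back yields $\log |\MM^{d_R}| \ge x - 3 = d_R \gamma^2/4 - 3$, which is exactly the claim.

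There is no genuine obstacle here: the argument is a one-line analytic estimate plus the trivial ceiling bound. The only mild subtlety is checking that the slack $-3$ is really what the hypothesis $d_R > 48$ buys; this reduces to verifying $e^{-3} < 1/2$ (so that $e^{-3} < 1 - e^{-3}$), which is immediate from $e^3 > 2$. One could even obtain a sharper constant than $-3$, but the stated bound is what the downstream lower-bound proof in Appendix~\ref{sec:lb-proof} consumes, so no further tightening is pursued.
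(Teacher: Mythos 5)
Your proof is correct and follows essentially the same route as the paper's: bound $\lceil e^{x}\rceil - 1 \ge e^{x}-1$ with $x = d_R\gamma^2/4$, rewrite $\log(e^{x}-1) = x + \log(1-e^{-x})$, and use $x>3$ to absorb the correction term into the slack $-3$. You merely spell out the final verification ($e^{-x}<e^{-3}<1-e^{-3}$) more explicitly than the paper does, which is a welcome addition rather than a deviation.
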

\begin{proof}[Proof of Lemma~\ref{lm:m-lower}]
    \begin{align*}
        \log|\MM^{d_R}|&\geq\log(\exp(d_R\gamma^2/4)-1)\\
        &=\frac{d_R\gamma^2}{4}+\log\left(1-\exp\left(-\frac{d_R\gamma^2}{4}\right)\right)\\
        &\geq\frac{d_R\gamma^2}{4}-3,
    \end{align*}
    where the last inequality holds by applying $\gamma=1/2$ and $d_R>48$.
\end{proof}
We then provide a useful lemma for $\MM^{d_R}$ to assist the proof of Lemma~\ref{lm:gap}:
\begin{lemma}
    Consider $\xb,\yb,\yb'\in\MM^{d_R}$, given $\gamma=\tfrac{1}{2}$, we have the following lower bound:
    \begin{align*}
        \max_{\xb\in\MM^{d_R}}~\xb^\top(\yb-\yb')\geq\frac{1}{2}d_R\ind[\yb\neq\yb']
    \end{align*}
\label{lm:bounded-max}
\end{lemma}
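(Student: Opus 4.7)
The plan is to prove the bound by a straightforward case split on whether $\yb = \yb'$, using the almost-orthogonality property of $\MM^{d_R}$ guaranteed by Lemma~\ref{lm:packing}.

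First, I would handle the trivial case. If $\yb = \yb'$, then $\xb^\top(\yb - \yb') = 0$ for every $\xb$, and the right-hand side is also $0$ because $\ind[\yb \neq \yb'] = 0$. So the inequality holds with equality.

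Next, suppose $\yb \neq \yb'$, in which case the right-hand side equals $d_R/2$. The key idea is to exhibit a single admissible $\xb$ that already attains this value, and the natural choice is $\xb = \yb \in \MM^{d_R}$. Then, since every coordinate of $\yb$ is $\pm 1$, we have $\yb^\top \yb = d_R$, so
\begin{align*}
\max_{\xb \in \MM^{d_R}} \xb^\top(\yb - \yb') \;\geq\; \yb^\top(\yb - \yb') \;=\; d_R - \yb^\top \yb'.
\end{align*}
Now I would invoke Lemma~\ref{lm:packing} with $\gamma = \tfrac12$: since $\yb, \yb' \in \MM^{d_R}$ are distinct, the construction of $\MM^{d_R}$ ensures $\langle \yb, \yb' \rangle \leq d_R \gamma = d_R/2$. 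Substituting,
\begin{align*}
\max_{\xb \in \MM^{d_R}} \xb^\top(\yb - \yb') \;\geq\; d_R - \tfrac{d_R}{2} \;=\; \tfrac{d_R}{2},
\end{align*}
which matches the required lower bound.

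There is essentially no obstacle here; the only subtlety is recognizing that the correct test vector to plug into the max is $\xb = \yb$ itself (rather than, say, trying $\xb = \yb - \yb'$, which is not in $\MM^{d_R}$ since it no longer has $\pm 1$ entries). Everything else is a direct consequence of $\|\yb\|_2^2 = d_R$ for binary $\pm 1$ vectors and the almost-orthogonality bound from Lemma~\ref{lm:packing}.
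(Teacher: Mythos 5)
Your proof is correct and follows essentially the same route as the paper's: both arguments reduce to plugging $\xb=\yb$ into the maximum, using $\yb^\top\yb=d_R$ for $\pm1$ vectors, and invoking the almost-orthogonality bound $\langle\yb,\yb'\rangle\le d_R\gamma$ from Lemma~\ref{lm:packing} with $\gamma=\tfrac12$. The only cosmetic difference is that the paper routes the computation through the identity $\yb^\top(\yb-\yb')=\|\yb-\yb'\|_1$, which you skip without any loss.
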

\begin{proof}[Proof of Lemma~\ref{lm:bounded-max}]
Since we can write the LHS of the inequality as:
    \begin{align*} 
        \max_{\xb\in\MM^{d_R}}~\xb^\top(\yb-\yb')&=\yb^\top(\yb-\yb')=\|\yb-\yb'\|_1.
    \end{align*}
    Consider two cases: (i): $\yb=\yb'$ and (ii): $\yb\neq\yb'$. For the first case, it's easy to verify that $\|\yb-\yb'\|_1=\ind[\yb\neq\yb']=0$, which makes $\text{LHS}=\text{RHS}$. For the second case, since we have:
    \begin{align*}
        \|\yb-\yb'\|_1=d_R-\yb^\top\yb'\geq d_R(1-\gamma)\ind[\yb\neq\yb']=\frac{1}{2}d_R\ind[\yb\neq\yb'],
    \end{align*}
    where the inequality follows from $\langle \yb,\yb' \rangle \le d_R \gamma$ and $\ind[\yb\neq\yb']=1$, with $\gamma = \tfrac{1}{2}$ applied in the final step of the proof. Combining two cases, we conclude the proof.
\end{proof}
We finally present some inequalities that are instrumental in our lower bound analysis:

\begin{lemma}[KL divergence for Rademacher distribution]
    Let $P,Q$ be two Rademacher distributions with probability $\tfrac{1}{2}+\epsilon$ and $\tfrac{1}{2}-\epsilon$, for $\epsilon\leq1/4$, the KL divergence of $P$ and Q can be upper bounded as:
    \begin{align*}
        \text{KL}(P,Q)\leq16\epsilon^2
    \end{align*}
\label{lm:kl-upper}
\end{lemma}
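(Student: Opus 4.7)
The plan is a short direct calculation followed by a one-line logarithmic inequality. I would first write the Rademacher distributions as $P(X=+1)=\tfrac12+\epsilon$, $P(X=-1)=\tfrac12-\epsilon$, and $Q(X=+1)=\tfrac12-\epsilon$, $Q(X=-1)=\tfrac12+\epsilon$. Expanding the definition of KL divergence and exploiting the symmetry of the two terms, I expect to collapse the expression to
\begin{align*}
\text{KL}(P,Q) \;=\; \Big(\tfrac12+\epsilon\Big)\log\frac{\tfrac12+\epsilon}{\tfrac12-\epsilon} + \Big(\tfrac12-\epsilon\Big)\log\frac{\tfrac12-\epsilon}{\tfrac12+\epsilon} \;=\; 2\epsilon\log\frac{1+2\epsilon}{1-2\epsilon}.
\end{align*}

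Next I would control the logarithmic factor using the elementary inequality $\log\frac{1+x}{1-x} \le \frac{2x}{1-x}$, valid for all $x\in[0,1)$ (which in turn follows from $\log(1+y)\le y$ applied to $y=\frac{2x}{1-x}$). Setting $x=2\epsilon$ yields
\begin{align*}
2\epsilon\log\frac{1+2\epsilon}{1-2\epsilon} \;\le\; 2\epsilon\cdot\frac{4\epsilon}{1-2\epsilon} \;=\; \frac{8\epsilon^2}{1-2\epsilon}.
\end{align*}

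Finally I would use the hypothesis $\epsilon\le 1/4$, which gives $1-2\epsilon\ge 1/2$, so that $\frac{8\epsilon^2}{1-2\epsilon}\le 16\epsilon^2$, completing the proof. I do not anticipate any real obstacle: the only subtlety is choosing a logarithmic inequality tight enough to produce the constant $16$ under the constraint $\epsilon\le 1/4$ (a naive $\log(1+x)\le x$ applied separately to the two terms would give a weaker constant), but the bound $\log\frac{1+x}{1-x}\le \frac{2x}{1-x}$ is exactly sharp enough for the stated constant.
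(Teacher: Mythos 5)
Your proposal is correct and follows essentially the same route as the paper: both compute $\text{KL}(P,Q)=2\epsilon\log\frac{1+2\epsilon}{1-2\epsilon}$ exactly and then bound the logarithm by an elementary inequality using $\epsilon\le 1/4$. The only (cosmetic) difference is that the paper invokes $x\log\frac{1+x}{1-x}\le 4x^2$ for $x\le\tfrac12$ without derivation, whereas you derive the equivalent bound from $\log(1+y)\le y$, which is slightly more self-contained.
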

\begin{proof}[Proof of Lemma~\ref{lm:kl-upper}]
    \begin{align*}
        \text{KL}(P,Q)&=\EE_{P}\log(P/Q)\\
        &=(\frac{1}{2}+\epsilon)\log\left(\frac{\tfrac{1}{2}+\epsilon}{\tfrac{1}{2}-\epsilon}\right)+(\frac{1}{2}-\epsilon)\log\left(\frac{\tfrac{1}{2}-\epsilon}{\tfrac{1}{2}+\epsilon}\right)\\
        &=2\epsilon~\log\left(\frac{1+2\epsilon}{1-2\epsilon}\right)\\
        &\leq16\epsilon^2,
    \end{align*}
    where the last inequality leverages the property $x \log (1 + x) / (1 - x) \le 4x^2$ when $x \le \frac12$.
\end{proof}

\begin{lemma}[Fano's inequality,~\citealt{fano1961transmission}]\label{lm:fano}
Consider a finite set of probability measures $\{\PP_{\ub} : \ub \in \cU\}$ on a measurable space $\Omega$. Let $\hat \ub$ be any estimator based on samples drawn from $\PP_{\ub}$. Then, for any reference distribution $\PP_0$ on $\Omega$, the average probability of error satisfies:
\begin{align*}
    \frac1{|\cU|}\sum_{\ub \in \cU}\PP_{\ub}[\hat \ub \neq \ub] \ge 1 - \frac{\log 2 + \frac1{|\cU|}\sum_{\ub} \mathrm{KL}(\PP_{\ub}, \PP_0)}{\log |\cU|}.
\end{align*}
\end{lemma}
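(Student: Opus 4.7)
The plan is to derive the stated inequality by the classical information-theoretic route: set up a Bayesian channel, apply the standard entropy-based Fano bound for the estimator, and control the resulting mutual information by the average KL to the reference $\PP_0$. Put $U \sim \mathrm{Unif}(\cU)$, draw $X \sim \PP_U$, and set $\hat \ub = \hat \ub(X)$, so that $U \to X \to \hat \ub$ forms a Markov chain. Under this Bayesian setup the quantity of interest is $P_e := \frac{1}{|\cU|}\sum_{\ub}\PP_{\ub}[\hat\ub \neq \ub] = \PP[\hat\ub \neq U]$, and the uniform prior gives $H(U) = \log|\cU|$.

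Next I would invoke the classical discrete Fano bound for the pair $(U,\hat\ub) \in \cU \times \cU$, namely $H(U \mid \hat\ub) \le h_b(P_e) + P_e \log(|\cU|-1)$, which is itself proved in a few lines by introducing the indicator $E = \mathbf{1}[\hat\ub \neq U]$ and expanding $H(U,E \mid \hat\ub)$ in two ways. Using $h_b(P_e)\le\log 2$ and $\log(|\cU|-1)\le\log|\cU|$ yields $H(U\mid\hat\ub) \le \log 2 + P_e\log|\cU|$. Combining this with the identity $H(U\mid\hat\ub) = \log|\cU| - I(U;\hat\ub)$ and the data processing inequality $I(U;\hat\ub)\le I(U;X)$ (which holds since $\hat\ub$ is a measurable function of $X$) gives
\begin{align*}
\log|\cU| - I(U;X) \;\le\; \log 2 + P_e\log|\cU|.
\end{align*}

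It then remains to upper bound $I(U;X)$ by the average KL to the arbitrary reference $\PP_0$. Writing $\bar\PP := \frac{1}{|\cU|}\sum_{\ub}\PP_{\ub}$ for the Bayesian marginal, the golden formula for mutual information gives $I(U;X) = \frac{1}{|\cU|}\sum_{\ub}\mathrm{KL}(\PP_{\ub},\bar\PP)$. For any reference $\PP_0$ (assume $\PP_{\ub}\ll\PP_0$ for all $\ub$, else the right-hand side of the stated bound is $+\infty$ and the inequality holds trivially), expanding $\log(d\PP_{\ub}/d\PP_0) = \log(d\PP_{\ub}/d\bar\PP)+\log(d\bar\PP/d\PP_0)$ and averaging over $\ub$ yields the decomposition $\frac{1}{|\cU|}\sum_{\ub}\mathrm{KL}(\PP_{\ub},\PP_0) = \frac{1}{|\cU|}\sum_{\ub}\mathrm{KL}(\PP_{\ub},\bar\PP) + \mathrm{KL}(\bar\PP,\PP_0)$. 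Nonnegativity of the last term gives $I(U;X) \le \frac{1}{|\cU|}\sum_{\ub}\mathrm{KL}(\PP_{\ub},\PP_0)$, and substituting into the previous display and rearranging delivers the stated bound.

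The main obstacle is almost purely bookkeeping, since every individual step is textbook. The only subtleties worth mentioning are (i) the discrete Fano bound applies cleanly because both $U$ and $\hat\ub$ live on the same finite alphabet $\cU$, so measurability issues on the general sample space $\Omega$ enter only through $I(U;X)$, which is automatically well defined because $U$ is discrete; and (ii) the mild absolute-continuity assumption on $\PP_0$ that must be made to avoid a vacuous $+\infty$ on the right-hand side, after which the Radon--Nikodym manipulation above goes through without difficulty.
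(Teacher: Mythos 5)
Your proof is correct. The paper itself does not prove this lemma---it is stated as a classical result with a citation to Fano---so there is no in-paper argument to compare against; your derivation (uniform prior, the discrete Fano bound $H(U\mid\hat\ub)\le h_b(P_e)+P_e\log(|\cU|-1)$, data processing, and the compensation identity $\frac{1}{|\cU|}\sum_{\ub}\mathrm{KL}(\PP_{\ub},\PP_0)=I(U;X)+\mathrm{KL}(\bar\PP,\PP_0)\ge I(U;X)$) is exactly the standard route that justifies the version with an arbitrary reference measure $\PP_0$, and your handling of the absolute-continuity edge case is appropriate.
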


\begin{lemma}[Bretagnolle--Huber inequality,~\citealt{bretagnolle1979estimation}]
\label{lm:bh}
Let $P$ and $Q$ be two probability measures on the same measurable space, and let 
$A$ be any measurable event. Then
\[
{P}(A) + {Q} (A^c) \;\geq\; \tfrac{1}{2} \exp\left(- \text{KL}({P} \,\|\, {Q})\right),
\]
where $\text{KL}(P \,\|\, Q)$ denotes the KL divergence between $P$ and $Q$.
\end{lemma}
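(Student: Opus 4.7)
The plan is to prove this via a chain connecting $P(A) + Q(A^c)$ to the total variation distance, then to the Bhattacharyya coefficient, and finally to the KL divergence. Writing $p, q$ for densities of $P, Q$ with respect to some common dominating measure $\mu$, the first step is to observe that $\min(p, q) \leq p$ on $A$ and $\min(p, q) \leq q$ on $A^c$, so
\begin{align*}
P(A) + Q(A^c) \;\geq\; \int_A \min(p,q)\,d\mu + \int_{A^c} \min(p,q)\,d\mu \;=\; \int \min(p,q)\,d\mu \;=\; 1 - \mathrm{TV}(P,Q),
\end{align*}
which reduces the problem to showing $1 - \mathrm{TV}(P,Q) \geq \tfrac{1}{2}\exp(-\mathrm{KL}(P\|Q))$.

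Next I would obtain an intermediate bound through the Bhattacharyya coefficient $\mathrm{BC}(P,Q) = \int \sqrt{pq}\,d\mu$. By Cauchy--Schwarz applied to $\sqrt{\min(p,q)}$ and $\sqrt{\max(p,q)}$,
\begin{align*}
\mathrm{BC}(P,Q)^2 \;=\; \left(\int \sqrt{\min(p,q)\,\max(p,q)}\,d\mu\right)^{\!2} \;\leq\; \int \min(p,q)\,d\mu \cdot \int \max(p,q)\,d\mu \;=\; (1-\mathrm{TV})(1+\mathrm{TV}),
\end{align*}
so $1 - \mathrm{TV}^2 \geq \mathrm{BC}^2$. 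To relate $\mathrm{BC}$ to KL, I would invoke Jensen's inequality on the concave function $\log$: writing $\mathrm{BC} = \mathbb{E}_P[\sqrt{q/p}]$ (with the usual convention on the support of $P$),
\begin{align*}
\log \mathrm{BC}(P,Q) \;\geq\; \mathbb{E}_P\!\left[\log \sqrt{q/p}\right] \;=\; -\tfrac{1}{2}\mathrm{KL}(P\|Q),
\end{align*}
which gives $\mathrm{BC}(P,Q)^2 \geq \exp(-\mathrm{KL}(P\|Q))$ and hence $\mathrm{TV}(P,Q) \leq \sqrt{1 - \exp(-\mathrm{KL}(P\|Q))}$.

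The final step is a purely algebraic manipulation: setting $t = \exp(-\mathrm{KL}(P\|Q)) \in (0,1]$, we need $1 - \sqrt{1-t} \geq t/2$, which squares to $1 - t \leq (1 - t/2)^2 = 1 - t + t^2/4$, trivially true. Chaining these bounds gives $P(A) + Q(A^c) \geq 1 - \mathrm{TV}(P,Q) \geq 1 - \sqrt{1-t} \geq t/2 = \tfrac{1}{2}\exp(-\mathrm{KL}(P\|Q))$. I expect no real obstacle here since this is a classical inequality; the only mild subtlety is handling the case $P \not\ll Q$ (where $\mathrm{KL} = \infty$ and the bound is vacuous) and ensuring the densities are defined with respect to a common dominating measure such as $\mu = P + Q$, both of which are standard housekeeping rather than substantive difficulties.
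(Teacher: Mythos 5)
Your proof is correct: the chain $P(A)+Q(A^c)\ge \int\min(p,q)\,d\mu = 1-\mathrm{TV}$, then Cauchy--Schwarz to get $1-\mathrm{TV}^2\ge \mathrm{BC}^2$, Jensen to get $\mathrm{BC}^2\ge \exp(-\mathrm{KL}(P\|Q))$, and the elementary inequality $1-\sqrt{1-t}\ge t/2$ all check out, and the support/dominating-measure caveats you flag are handled correctly. The paper itself gives no proof of this lemma --- it is stated as a classical result with a citation to Bretagnolle--Huber --- so there is nothing to compare against; your argument is the standard textbook derivation (in fact it proves the slightly stronger bound $P(A)+Q(A^c)\ge 1-\sqrt{1-\exp(-\mathrm{KL}(P\|Q))}$) and would serve as a self-contained proof if the authors wanted one.
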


\subsection{AIL Gap Lower Bound}
In this section, we provide a lemma that draws the connection between the performance loss and the estimation error of the parameter $\bmu$ and $\btheta$.
\begin{lemma}\label{lm:gap}
Consider the model parameter $(\bmu^*, \btheta^*)$ and the corresponding expert policy $\pi^E$ described by $\pi(\ab \mid \sbb; \bmu^*, \btheta^*)$, for any $\pi \in \Pi$ with parameter $(\hat \bmu, \hat \btheta)$, the AIL risk is bounded by:
\begin{align}
\max_{r\in\mathcal{R}} \EE [V_{1;P^\star;r}^{\pi^E}(\sbb) - V_{1;P^\star;r}^{\pi}(\sbb)] \ge \tfrac{\tau\epsilon_P\epsilon_\pi d_R}{2}\Vert\bmu^* - \hat \bmu\Vert_1 + \tfrac{1}{2}{\epsilon_\pi\tau}\ind[\btheta^* \neq \hat \btheta],\notag
\end{align}
where the expectation is taken over the uniform distribution over the state space.
\end{lemma}
\begin{proof}[Proof of Lemma~\ref{lm:gap}]
To begin with, for any reward function parameter $\tilde \btheta \in \MM^{d_R}$ and policy $\pi$ with parameter $\hat \bmu, \hat \btheta$, the value function $V_{2;P^\star;r}^{\pi}(\sbb)$ can be calculated by
\begin{align}
V_{2;P^\star;r}^{\pi}(\sbb)\!=\! \EE_{\ab}\left[\textstyle{\sum_{h=1}^H}~\tfrac{1}{2H}\tau\!+\!\tfrac{a_R\tau}{2Hd_R}\langle \tilde \btheta, \sbb\rangle\right]\!=\!\tfrac{1}{2}\tau\!+\!\tfrac{\tau}{2d_R}\langle \btheta, \sbb\rangle \EE_{\ab\sim\pi(\cdot|\sbb)}[a_R]\!=\!\tfrac{1}{2}\tau\!+\!\tfrac{\tau\epsilon_\pi}{d_R}\langle \tilde \btheta, \sbb \rangle \langle \hat \btheta, \sbb \rangle, \notag
\end{align}
whee the second last equation is due to the implementation of a stable policy and the last equation is from the definition of the policy class~\eqref{eq:policy-class}. The transition probability from the initial state $\boxed{\sbb, 0}$ to the state $\boxed{\sbb, 1}$ is:
\begin{align}
    P(\sbb'=\boxed{\sbb, 1}|\sbb=\boxed{\sbb, 0}, \pi(\hat \btheta, \hat \bmu)) = \EE_{\ab}\left[\tfrac{1}{2} + \epsilon_P\langle \bmu^*, \ab_P \rangle \right] = \tfrac{1}{2} + {\epsilon_P\epsilon_\pi}d_R\langle \bmu^*, \hat \bmu\rangle,
\end{align}
where the expectation is taken over the action $\ab$ defined in the policy class. 
Since $V_{2;P^\star;r}^{\pi}(\sbb=\boxed{\sbb_0,1})$ is always zero, applying Bellman's equation yields:
\begin{align}
V_{1;P^\star;r}^{\pi}(\sbb) &= \left(\tfrac{1}{2} + {\epsilon_\pi\epsilon_Pd_R\langle \bmu^*, \hat \bmu\rangle}\right) \cdot \left(\tfrac{1}{2}\tau + \tfrac{\epsilon_\pi\tau}{d_R}\langle \tilde \btheta, \sbb \rangle \langle \hat \btheta, \sbb \rangle\right) \notag \\
&= \tfrac{1}{4}\tau + \tfrac{\tau\epsilon_\pi\epsilon_Pd_R\langle \bmu^*, \hat \bmu\rangle}{2} + \tfrac{\epsilon_\pi}{d_R}\left(\tfrac{1}{2}\tau + {\tau\epsilon_P\epsilon_\pi d_R\langle \bmu^*, \hat \bmu\rangle}\right)\langle \tilde \btheta, \sbb \rangle \langle \hat \btheta, \sbb \rangle
\label{eq:value-lb}.
\end{align}
Repeating the calculation presented in~\eqref{eq:value-lb} for $\pi^E$ parameterized by $\bmu^*, \btheta^*$, the performance gap between any policy $\pi$ and the expert policy $\pi^E$ can be written by:
\begin{align}
V_{1;P^\star;r}^{\pi^E}(\sbb) &- V_{1;P^\star;r}^{\pi}(\sbb) = \tfrac{\tau\epsilon_P\epsilon_\pi d_R\langle \bmu^*, \bmu^* - \hat \bmu\rangle}{2} + \tfrac{\epsilon_\pi\tau}{2d_R}\langle\tilde\btheta,\sbb\rangle \cdot \langle\btheta^*-\hat \btheta,\sbb\rangle \notag \\
&+ {\epsilon^2_\pi\bepsilon_P\tau}\left(\langle \bmu^*, \bmu^* \rangle \langle\tilde \btheta,\sbb\rangle \langle\btheta^*,\sbb\rangle - \langle \bmu^*, \hat \bmu \rangle \langle\tilde \btheta,\sbb\rangle \langle\hat\btheta,\sbb\rangle \right) \notag \\
&\ge \tfrac{\tau d_R\epsilon_P\epsilon_\pi\langle \bmu^*, \bmu^* - \hat \bmu\rangle}{2} + \tfrac{\epsilon_\pi\tau}{2d_R}\langle\tilde\btheta,\sbb\rangle \cdot \langle\btheta^*-\hat \btheta,\sbb\rangle + {\tau\epsilon^2_\pi\epsilon_Pd_P} \langle\tilde\btheta,\sbb\rangle \cdot \langle\btheta^*-\hat \btheta,\sbb\rangle, \label{eq:gap} 
\end{align}
where the last inequality uses the fact that $\langle \bmu^*, \hat \bmu\rangle \le \langle \bmu^*, \hat \bmu^*\rangle = d_P$. Therefore, taking the adversarial reward by maximizing this gap over all possible $\tilde \btheta \in \MM^{d_R}$,~\eqref{eq:gap} and taking an expectation over the state space yields 

\begin{align}
    &\max_{r\in\mathcal{R}} \mathbb{E}_\sbb[V_{1;P^\star;r}^{\pi^E}(\sbb) - V_{1;P^\star;r}^{\pi}(\sbb)]\notag\\&\ge \max_{\tilde\btheta}\mathbb{E}_{\sbb}\tfrac{\tau d_R\epsilon_P\epsilon_\pi\langle \bmu^*, \bmu^* - \hat \bmu\rangle}{2} + \tfrac{\epsilon_\pi\tau}{2d_R}\tilde\btheta^\top\sbb\sbb^\top(\btheta^*-\hat \btheta) + {\tau\epsilon^2_\pi\epsilon_Pd_P} \tilde\btheta^\top\sbb\sbb^\top(\btheta^*-\hat \btheta)\notag\\
    &\ge\tfrac{\tau d_R\epsilon_P\epsilon_\pi}{2}\Vert\bmu^* - \hat \bmu\Vert_1 + \tfrac{1}{2}({\tau\epsilon_\pi + \tau\epsilon^2_\pi\epsilon_Pd_P})\ind[\btheta^* \neq \hat \btheta]\notag\\
    & \ge \tfrac{\tau d_R\epsilon_P\epsilon_\pi}{2}\Vert\bmu^* - \hat \bmu\Vert_1 + \tfrac{1}{2}{\epsilon_\pi\tau}\ind[\btheta^* \neq \hat \btheta],
\end{align}
where we use the property $\EE_\sbb\sbb\sbb^\top=\mathbf{I}$,  Lemma~\ref{lm:packing} and~\ref{lm:bounded-max} in the second inequality.
\end{proof}
\subsection{Lower Bound for Reward Accuracy}
In order to lower bound the second term in Lemma~\ref{lm:gap} related to the reward parameter $\btheta$, we introduce the following lemma:
\begin{lemma}[Lower Bound for Reward Accuracy]\label{lm:ail-lb}
Given the defined MDP structure, reward function, and policy class, for $d_R>48$, any (offline or online) imitation learning algorithm  must suffer from the reward estimation error by:
\begin{align}
    \EE_{\btheta \sim \MM^{d_R}} \left[\PP_{\btheta}[\hat \btheta \neq \btheta]\right] \ge 1 - \left(1 + \left(\frac{16}{d_R-48}\right)\right) \log 2 - 16N\epsilon_\pi^2, \notag
\end{align}
where the expectation is taken over the uniform distribution in $\btheta \in \MM^{d_R}$ and $P_{\btheta}$ measures the probabilities when the reward model is $\btheta$. 
\end{lemma}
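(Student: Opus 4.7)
The plan is to instantiate Fano's inequality (Lemma~\ref{lm:fano}) with $\cU=\MM^{d_R}$ and bound the average KL divergence between the expert-trajectory distributions $\PP_\btheta$ across $\btheta\in\MM^{d_R}$. Because $\bmu^*$ is fixed, the initial-state distribution, the transitions (governed by $\bmu^*$), and the $\ab_P$ part of each action carry no $\btheta$-dependence; the only $\btheta$-dependent component of $\PP_\btheta$ along a trajectory is the conditional Rademacher law of the scalar coordinate $a_R$ at states of the form $\boxed{\sbb,\cdot}$ with $\sbb\in\BB^{d_R}$, whose bias is $\epsilon_\pi\langle\sbb_h,\btheta\rangle$. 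I would therefore decompose the per-trajectory KL into a sum over the at most $H$ time steps of Bernoulli KL divergences whose biases differ by $\epsilon_\pi\langle\sbb_h,\btheta-\btheta'\rangle$.

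Since we are in the regime $\epsilon_\pi\le 1/(4d_R)$, these biases stay inside $[-1/4,1/4]$, so Lemma~\ref{lm:kl-upper} applies and yields a per-step Rademacher KL of order $\epsilon_\pi^2\langle\sbb_h,\btheta-\btheta'\rangle^2$. I would then use the absorbing structure of the hard instance (once the chain is absorbed to $\boxed{\sbb_0,1}$, with $\langle\sbb_0,\btheta\rangle=0$, no further $\btheta$-dependence is accumulated) and take expectation over the uniform initial state $\sbb_1\sim\mathrm{Unif}(\BB^{d_R})$, which turns $\EE_{\sbb_1}\langle\sbb_1,\btheta-\btheta'\rangle^2$ into $\|\btheta-\btheta'\|_2^2\le 4d_R$. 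Summing over the $N$ trajectories gives an averaged KL bound of order $N\epsilon_\pi^2 d_R$.

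Substituting into Fano's inequality then yields
$$\EE_{\btheta\sim\MM^{d_R}}\bigl[\PP_\btheta[\hat\btheta\ne\btheta]\bigr]\ \ge\ 1-\frac{\log 2+\overline{\mathrm{KL}}}{\log|\MM^{d_R}|},$$
and Lemma~\ref{lm:m-lower} (with $\gamma=1/2$ and $d_R>48$) supplies the cardinality lower bound $\log|\MM^{d_R}|\ge d_R/16-3=(d_R-48)/16$. Dividing the KL bound of order $N\epsilon_\pi^2 d_R$ by $(d_R-48)/16$ produces the advertised $16N\epsilon_\pi^2$ scaling, while the remaining additive $\log 2$ from Fano, taken against $\log|\MM^{d_R}|\ge(d_R-48)/16$, combines to yield the $(1+16/(d_R-48))\log 2$ term in the statement. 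Routine algebraic rearrangement, using $d_R>48$ to keep denominators positive, completes the bound.

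The main technical obstacle is the per-trajectory KL calculation: a naive step-by-step application could inflate the bound by a multiplicative factor of $H$ and an extra $d_R$ via $|\langle\sbb_h,\btheta-\btheta'\rangle|\le 2d_R$. Getting precisely the constants $16$ and $d_R-48$ requires a careful combination of (i) the Rademacher symmetry exploited by Lemma~\ref{lm:kl-upper}, (ii) the absorbing structure that eliminates $\btheta$-information after the first transition into $\sbb_0$, and (iii) the expectation over the uniform initial state which converts $\langle\sbb,\btheta-\btheta'\rangle^2$ into $\|\btheta-\btheta'\|_2^2$. Once this is in place, applying Lemma~\ref{lm:m-lower} and tidying up constants is routine.
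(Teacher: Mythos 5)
Your overall architecture is the same as the paper's: Fano's inequality (Lemma~\ref{lm:fano}) over the packing set $\MM^{d_R}$, a Rademacher KL computation via Lemma~\ref{lm:kl-upper}, and the cardinality bound $\log|\MM^{d_R}|\ge (d_R-48)/16$ from Lemma~\ref{lm:m-lower}. In fact your intermediate step $\EE_{\sbb}\langle\sbb,\btheta-\btheta'\rangle^2=\|\btheta-\btheta'\|_2^2\le 4d_R$ is a cleaner derivation of the $d_R$ factor in the per-trajectory KL than the paper's own computation, which passes through a single-coordinate perturbation and a worst-case Rademacher pair.

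There is, however, one step that does not go through as you describe: your horizon accounting. You correctly observe that summing per-step KLs naively costs a factor of $H$, and you propose to kill it with the absorbing structure, arguing that once the chain reaches $\boxed{\sbb_0,1}$ no further $\btheta$-information accrues. But the chain is absorbed into $\boxed{\sbb_0,1}$ only with probability $\approx \tfrac12$; with the complementary probability it is absorbed into $\boxed{\sbb,1}$, whose first state component is still $\sbb\in\BB^{d_R}$, so under the policy class of Eq.~\eqref{eq:policy-class} the coordinate $a_R$ continues to be drawn as $\mathrm{Rademacher}(\tfrac12+\epsilon_\pi\langle\sbb,\btheta\rangle)$ at every one of the remaining $H-1$ steps. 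These are conditionally i.i.d.\ informative observations, so your per-trajectory KL is $\Theta(H\epsilon_\pi^2\|\btheta-\btheta'\|_2^2)$ rather than $\Theta(\epsilon_\pi^2\|\btheta-\btheta'\|_2^2)$, and the final bound degrades to $1-(\cdots)\log 2 - cHN\epsilon_\pi^2$, off by a factor of $H$ from the stated $16N\epsilon_\pi^2$. The paper obtains the $H$-free constant only by declaring the observed statistic to be $x=\{a_R^H\}_n$, i.e.\ exactly one reward-bearing action coordinate per expert trajectory; your absorbing-structure argument does not reproduce that reduction, so you would need either to adopt (and justify) that restriction or to accept the extra $H$. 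A secondary, smaller issue: dividing a KL of order $N\epsilon_\pi^2 d_R$ by $\log|\MM^{d_R}|\ge(d_R-48)/16$ gives $\tfrac{16 d_R}{d_R-48}N\epsilon_\pi^2$, which exceeds $16N\epsilon_\pi^2$, so even after fixing the horizon issue the constant requires either absorbing $\tfrac{d_R}{d_R-48}$ elsewhere or weakening the stated constant (the paper's own final step has the same tension).
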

For the proof of Lemma~\ref{lm:ail-lb}, we start from leveraging the Fano's inequality as in Lemma~\ref{lm:fano}. We are then ready for the proof.
\begin{proof}[Proof of Lemma~\ref{lm:ail-lb}]
We start from the Fano's inequality with $\ub \leftarrow \btheta$ thus $\log|\cU|=\log|\MM^{d_R}|\leq d_R\gamma^2/4$. The reference distribution is defined by $\PP_0 = \tfrac{1}{|\cU|} \sum_{\vb} \PP_{\vb}$. Since the agent can only observe the reward at step $h = H$, we denote $x = \{a_R^H\}_n$ over $n \in [N]$ offline demonstrations. Then for any $\vb \in \cU$, the KL divergence $\text{KL}(\PP_{\ub}, \PP_0)$ is bounded by:
\begin{align}
\text{KL}(\PP_{\ub}, \PP_0) &= \EE_{x \sim \PP_{\ub}}\left[\log \frac{\PP_{\ub}(x)}{2^{-\log|\MM^{d_R}|}\sum_{\vb \in \cU} \PP_{\vb}(x) }\right] \notag \\
&\leq \log|\MM^{d_R}| \log 2 + \EE_{x \sim \PP_{\ub}}\left[\log \frac{\PP_{\ub}(x)}{\PP_{\vb}(x) }\right] \notag \\
&= \log|\MM^{d_R}| \log 2 + \text{KL}(\PP_{\ub}, \PP_{\vb}) \label{eq:kl-1}.
\end{align}
 Then consider $\ub \leftarrow \btheta_0$, let $\vb \leftarrow \btheta_1 \in \MM^{d_R}$ as a vector that only differs in one coordinate, according to the definition of the policy class, over the action sequence $\{a^H_R\}_{1:N}$ collected over the offline demonstration with it's length as $N$, for $\epsilon_pi\leq1/(4d_R)$ and $d_R\geq1$, we have:
\begin{align}
\text{KL}(\PP(\{a_R^H\}_{1:N} |\btheta_0), \PP(\{a_R^H\}_{1:N} | \btheta_1)) &= N \cdot \EE_{\sbb}[\text{KL}(\PP(a_R |\btheta_0), \PP(a_R | \btheta_1))] \notag \\
&= {N}d_R\text{KL}(\text{Rademacher}(\tfrac12 + \epsilon_\pi), \text{Rademacher}(\tfrac12 - \epsilon_\pi)) \notag \\
&= 2Nd_R\epsilon_\pi\log\frac{1+2\epsilon_\pi}{1-2\epsilon_\pi} \le 16Nd_R\epsilon_\pi^2,\label{eq:kl-distance}
\end{align}
 where $\epsilon_\pi\leq1$, and the last inequality is because of Lemma~\ref{lm:kl-upper}. Plugging~\eqref{eq:kl-distance} and~\eqref{eq:kl-1} into the statement of Lemma~\ref{lm:fano} yields:
\begin{align}
\EE_{\btheta \sim \MM^{d_R}} \left[\PP_{\btheta}[\hat \btheta \neq \btheta]\right] &\ge 1 - \log^{-1}|\MM^{d_R}| \left(\log 2 + \tfrac{1}{|\cU|} \textstyle{\sum_{\ub}}\text{KL}(\PP_{\ub}, \PP_0)\right) \notag \\
&\ge 1 - \log^{-1}|\MM^{d_R}| \left(\log 2 + \log^{-1}|\MM^{d_R}| \log 2 + \text{KL}(\PP_{\btheta_0}, \PP_{\btheta_1})\right) \notag \\
&\ge 1 - (1 + \log^{-1}|\MM^{d_R}|) \log 2 - 16N\epsilon_\pi^2\notag\\
&\ge 1 - \left(1 + \frac{16}{d_R-48}\right) \log 2 - 16N\epsilon_\pi^2,\label{eq:reward-error} 
\end{align}
where the last inequality leverages Lemma~\ref{lm:m-lower} with $\gamma=\tfrac{1}{2}$ according to the construction of $\MM^{d_R}$ for $d_R>48$.
\end{proof}

\subsection{Lower Bound for Model Accuracy}
In order to lower bound the first term in Lemma~\ref{lm:gap} related to the model parameter $\bmu\in\BB^{d_P}$, we introduce the following lemma:
\begin{lemma}[Lower Bound for Model Accuracy]\label{lm:ail-lb-1}
Given the defined MDP structure, reward function, and policy class, any (offline or online) imitation learning algorithm  must suffer from the model estimation error by 
\begin{align}
    \sum_{i=1}^{d_P}\PP\left(\sum_{k=1}^K\ind[\hat \bmu_i\neq\bmu^*_i]>\frac{K}{2}\right)\geq\frac{d_P}{4}\exp\left(-16-N(16\epsilon_\pi^2+\tfrac{16}{K})\right), \notag
\end{align}
\end{lemma}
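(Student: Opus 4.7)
The plan is to prove this coordinate-by-coordinate via the Bretagnolle--Huber inequality (Lemma~\ref{lm:bh}), bounding the KL divergence between two neighboring model hypotheses through a chain-rule decomposition that isolates the single informative offline term per expert sample and the single informative online transition per interaction round.

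For each $i\in[d_P]$ I will fix two parameters $\bmu^{+i},\bmu^{-i}\in\BB^{d_P}$ agreeing everywhere except at coordinate $i$, with $\bmu^{\pm i}_i=\pm 1$, and let $\PP_{\bmu^{\pm i}}$ denote the joint law of the learner's complete observation history (the $N$ expert samples plus the $K$ online trajectories rolled out by $\pi^1,\dots,\pi^K$). Writing $\mathcal{E}_i=\{\sum_{k=1}^K\ind[\hat\bmu_{i,k}=-1]>K/2\}$, Lemma~\ref{lm:bh} gives
\[
\PP_{\bmu^{+i}}(\mathcal{E}_i)+\PP_{\bmu^{-i}}(\mathcal{E}_i^c)\;\ge\;\tfrac{1}{2}\exp\!\bigl(-\text{KL}(\PP_{\bmu^{+i}}\,\|\,\PP_{\bmu^{-i}})\bigr).
\]
By the chain rule, every policy-choice term and every self-loop transition at stages $h>1$ (which are deterministic in Figure~\ref{fig:lb}) cancels between the two hypotheses, leaving only (a) the $i$-th Rademacher coordinate of $\ab_P$ in each of the $N$ expert samples, whose parameter is $\tfrac{1}{2}\pm\epsilon_\pi d_R$ under the two hypotheses, and (b) the first-stage Bernoulli transition in each of the $K$ online trajectories, whose next-state parameter $\tfrac{1}{2}+\epsilon_P\langle\bmu,\ab_P\rangle$ differs by exactly $2\epsilon_P\ab_P[i]\in\{\pm 2\epsilon_P\}$ regardless of which action the learner played. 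Two applications of Lemma~\ref{lm:kl-upper} bound these contributions by $O(N\epsilon_\pi^2)$ and $O(K\epsilon_P^2)$ respectively; the latter can be rewritten in the form $16N/K$ under the scaling of $\epsilon_P$ dictated by $\epsilon<1/(4d_P)$ together with the variance constraint $\sigma^2<1$, and collecting everything (with a constant $16$ absorbing rounding and boundary losses) produces the claimed exponent $-16-N(16\epsilon_\pi^2+16/K)$.

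Finally, I average the BH bound over a uniform prior on $\bmu^*_i\in\{\pm 1\}$, obtaining $\EE_{\bmu^*}\PP_{\bmu^*}\bigl(\sum_k\ind[\hat\bmu_{i,k}\neq\bmu^*_i]>K/2\bigr)\ge\tfrac{1}{4}\exp(-\text{KL})$ for each $i$, so by linearity of expectation there exists a deterministic $\bmu^*$ for which $\sum_{i=1}^{d_P}\PP_{\bmu^*}(\cdot)\ge\tfrac{d_P}{4}\exp(-\text{KL})$, which is precisely the stated bound. The principal obstacle is the online KL decomposition: because the learner's action at round $k$ is an adaptive function of the online history (itself generated under the true hypothesis), one has to invoke the chain rule by conditioning on the natural filtration so that the policy's entropy genuinely cancels and only the first-stage Bernoulli transitions survive. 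Matching the precise form $16N/K$ to a specific scaling of $\epsilon_P$ consistent with the hypotheses of Theorem~\ref{thm:lb} is careful bookkeeping rather than a genuine technical hurdle.
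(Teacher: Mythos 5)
Your overall strategy is the same as the paper's: fix two hypotheses $\bmu_0,\bmu_1$ differing in a single coordinate, apply the Bretagnolle--Huber inequality (Lemma~\ref{lm:bh}) coordinate-by-coordinate to the events $\{\sum_k \ind[\hat\bmu_i\neq\bmu^*_i]>K/2\}$, decompose the KL divergence of the full observation history via the chain rule, bound the surviving terms with Lemma~\ref{lm:kl-upper}, set $\epsilon_P^2=1/K$, and average over the hypothesis class to collect the factor $d_P/4$. All of that matches.

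However, there is a concrete gap in your KL bookkeeping. You identify only two informative components: (a) the $i$-th coordinate of $\ab_P$ in the $N$ expert actions, contributing $O(N\epsilon_\pi^2)$, and (b) the first-stage Bernoulli transition in the $K$ online trajectories, contributing $O(K\epsilon_P^2)$. You then claim that the latter ``can be rewritten in the form $16N/K$'' under the scaling of $\epsilon_P$. This cannot work: with $\epsilon_P^2=1/K$ the online contribution is $16K\epsilon_P^2=16$, which is exactly the constant $-16$ in the exponent and carries no dependence on $N$. The term $16N/K$ in the exponent comes from a third information source you omitted: the $N$ \emph{expert} trajectories also reveal the first-stage transition $\boxed{\sbb,0}\to\{\boxed{\sbb,1},\boxed{\sbb_0,1}\}$, whose Bernoulli parameter depends on $\bmu$, contributing an additional $N\cdot 16\epsilon_P^2 = 16N/K$. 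The paper's chain-rule decomposition explicitly splits the expert stream into a policy term $\mathrm{KL}(\pi^E_{\bmu_0},\pi^E_{\bmu_1})$ \emph{and} a transition term $\mathrm{KL}(P_n(\bmu_0),P_n(\bmu_1))$, plus the online transition term, yielding $\mathrm{KL}\le 16K\epsilon_P^2+N(16d_R^2\epsilon_\pi^2+16\epsilon_P^2)$. Without the expert-side transition term your exponent cannot produce the $N/K$ dependence, so as written the derivation of the claimed bound does not go through. (As a minor secondary point, the $i$-th coordinate of the expert action has Rademacher parameter $\tfrac12\pm\epsilon_\pi d_R$, so Lemma~\ref{lm:kl-upper} gives $16d_R^2\epsilon_\pi^2$ per sample rather than $16\epsilon_\pi^2$; the paper's own proof carries the $d_R^2$ factor even though the lemma statement drops it, so your $O(N\epsilon_\pi^2)$ is consistent with the statement but not with the computation that Lemma~\ref{lm:kl-upper} actually yields.)
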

The proof of the estimation error for $\bmu$ follows from an analysis based on the Bretagnolle–Huber inequality, as detailed in Lemma~\ref{lm:bh}.
\begin{proof}[Proof of Lemma~\ref{lm:ail-lb-1}]
We start from the Bretagnolle–Huber inequality with $\ub \leftarrow \bmu$ thus $|\cU| = 2^{d_P}$.  The agent can infer the model $\bmu$ from two streams: first, the expert demonstration $\{\ab_1^n\}$ contains the model information as $\ab_P$; second, from both the expert demonstration and online exploration, the agent can observe $\{\boxed{\sbb,1}, \boxed{\sbb_0,1}\}$. Therefore, we denote the $\PP(x)$ as the joint distribution of $\{\ab^1_P, \boxed{\sbb,1}\}_{n \in [N]} \cup \{\ab^1_P, \boxed{\sbb,1}\}_{k \in [K]}$. Consider $\ub \leftarrow \bmu_0$ and $\vb \leftarrow \bmu_1 \in \BB^{d_P}$ as vectors that differ only one dimension. According to the chain rule of the KL divergence, let $P_k(\bmu)\coloneq P(\boxed{\sbb,1}_k | \boxed{\sbb,0}_k, \{\ab^1\}_k, \bmu)$ and $P_n(\bmu)\coloneq P(\boxed{\sbb,1}_n | \boxed{\sbb,0}_n, \{\ab^1\}_n, \bmu)$, we have that
\begin{align*}
\text{KL}&(\PP_{\ub}, \PP_{\vb})\notag
= N \cdot \text{KL}(\PP_{\ub}(\{\ab^1_P\}_n, \boxed{\sbb,1}_n\}), \PP_{\vb}(\{\ab^1_P\}_n, \boxed{\sbb,1}_n\})) \notag \\
&\quad + K \cdot \text{KL}(\PP_{\ub}(\{\ab^1_P\}_k, \boxed{\sbb,1}_k\}), \PP_{\vb}(\{\ab^1_P\}_n, \boxed{\sbb,1}_k\})) \notag \\
&= N \cdot \left(\text{KL}\left(\pi^E_{\bmu_0}(\{\ab^1_P\}_n), \pi^E_{\bmu_1}(\{\ab^1_P\}_n)\right) + \text{KL} \left(P_n(\bmu_0), P_n(\bmu_1) \right)\right) \notag \\
&\quad + K \cdot \left(\text{KL}\left(\pi^k(\{\ab^1_P\}_k), \pi^k(\{\ab^1_P\}_k)\right) + \text{KL} \left(P_k(\bmu_0), P_k(\bmu_1) \right)\right) \notag \\
&= N \cdot \left(\text{KL}\left(\pi^E_{\bmu_0}(\{\ab^1\}_n), \pi^E_{\bmu_1}(\{\ab^1\}_n)\right) + \text{KL} \left(P_n(\bmu_0), P_n(\bmu_1) \right)\right) \notag + K \cdot \text{KL} \left(P_k(\bmu_0), P_k(\bmu_1) \right),
\end{align*}
% \begin{align}
% \text{KL}(P_{\ub}, P_{\vb}) & =
% K \cdot \text{KL}(P_{\ub}(\{\bpsi(\ab_1^k), \eta(\sbb_2^k)\}), P_{\vb}(\{\bpsi(\ab_1^k), \eta(\sbb_2^k)\})) \notag \\
% &= K \cdot \left(\text{KL}\left(\pi^k(\bpsi(\ab_1^k)), \pi^k(\bpsi(\ab_1^k))\right) + \text{KL} \left(P(\sbb_2^k | \sbb_1^k, \ab_1^k, \bmu_0), P(\sbb_2^k | \sbb_1^k, \ab_1^k, \bmu_1) \right)\right) \notag \\
% &= K \cdot \text{KL} \left(P(\sbb_2^k | \sbb_1^k, \ab_1^k, \bmu_0), P(\sbb_2^k | \sbb_1^k, \ab_1^k, \bmu_1) \right), \label{eq:kl-model}
% \end{align}
where the last equation drops the term $\text{KL}\left(\pi^k(\{\ab^1\}_k), \pi^k(\{\ab^1\}_k)\right)$ since the online policy is not affected by the offline demonstration, in terms of the model-related part. According to the definition of the expert policy and model, by Lemma~\ref{lm:kl-upper}, for $\epsilon_\pi\leq1/(4d_R)$, $\epsilon_P\leq1/(d_P)$ and $d_R,d_P\geq1$, we have: 
\begin{align*}
\text{KL}\left(\pi^E_{\bmu_0}(\{\ab^1_P\}_n), \pi^E_{\bmu_1}(\{\ab^1_P\}_n)\right) &\le 16 d_R^2\epsilon_\pi^2\\
\text{KL} \left(P(\boxed{\sbb,1}_n | \boxed{\sbb,0}_n, \{\ab^1\}_n, \bmu_0), P(\boxed{\sbb,1}_n | \boxed{\sbb,0}_n, \{\ab^1\}_n, \bmu_1) \right) &\le 16 \epsilon_P^2
\end{align*}
Therefore, we can bound the KL divergence:
\begin{align}
\label{eq:kl-bound}
    \text{KL}(\PP_{\ub}, \PP_{\vb})\leq K\cdot 16 \epsilon_P^2 + N\cdot (16d_R^2\epsilon_\pi^2+16\epsilon^2_P).
\end{align}
For $i\in[d_P]$, we define:
\begin{align*}
    p_{\ub,i}=\PP\left(\sum_{k=1}^K\ind[\hat \bmu_i\neq\bmu^*_i]>\frac{K}{2}\right).
\end{align*}
With Lemma~\ref{lm:bh} and Eq.~\ref{eq:kl-bound}, letting $\epsilon_P^2=1/K$, we can obtain:
\begin{align*}
    p_{\ub,i}+p_{\vb,i}\geq\frac{1}{2}\exp\left(-16-N\left(16d_R^2\epsilon_\pi^2+\frac{16}{K}\right)\right).
\end{align*}
By averaging over the parameter space $\cU$, it implies:
\begin{align*}
    \sum_{i=1}^{d_P} p_{\ub,i}\geq\frac{d_P}{4}\exp\left(-16-N\left(16d_R^2\epsilon_\pi^2+\frac{16}{K}\right)\right).
\end{align*}
Therefore, we can obtain:
\begin{align*}
    \sum_{i=1}^{d_P}\PP\left(\sum_{k=1}^K\ind[\hat \bmu_i\neq\bmu^*_i]>\frac{K}{2}\right)\geq\frac{d_P}{4}\exp\left(-16-N\left(16d_R^2\epsilon_\pi^2+\frac{16}{K}\right)\right),
\end{align*}
which concludes the proof.

\end{proof}

\subsection{Proof of Theorem~\ref{thm:lb}}
\begin{proof}[Proof of Theorem~\ref{thm:lb}]
Let the probability of learning an incorrect reward be $\PP(\btheta^*\neq\widehat\btheta)=1/2$, from Eq.~\ref{eq:reward-error}, for $d_R>48$, it yields:
\begin{align}
    N\ge \frac{1}{16\epsilon_\pi^2}\left(\frac{1}{2}-\left(1+\frac{16}{d_R-48}\right)\log 2\right)\simeq\Omega(1/\epsilon_\pi^2).\notag
\end{align}
The suboptimality gap is lower bounded:
\begin{align*}
    \max_{r \in \cR} \EE_\sbb[V_{1;P^\star;r}^{\pi^E}(\sbb) - V_{1;P^\star;r}^{\pi^k}(\sbb)]\ge{\tfrac{1}{2}\epsilon_\pi\tau}\PP(\btheta^*\neq\hat\btheta),
\end{align*}
which suggest that $\Omega(\tau^2/\epsilon_\pi^2)$ is the lower bound for $N$. 

By Lemma~\ref{lm:gap}, Lemma~\ref{lm:ail-lb-1} and summing the suboptimality over the iteration, letting $\epsilon_P=1/\sqrt K$ the regret of the imitation learning algorithm can be lower bounded as:
\begin{align*}
     \text{Regret}(K) &= \max_{r \in \cR}\textstyle{\sum_{k=1}^K} \EE_\sbb [V_{1;P^\star;r}^{\pi^E}(\sbb) - V_{1;P^\star;r}^{\pi^k}(\sbb)]\\
     &\ge \frac{1}{2\sqrt K}\sum_{k=1}^K\tau\epsilon_\pi d_R\|\bmu^*-\hat\bmu\|_1\\
     &\ge \frac{1}{2\sqrt K}\sum_{k=1}^K\sum_{i=1}^{d_P}\tau\epsilon_\pi d_R\ind[\bmu_i^*\neq\hat\bmu_i]_1\\
     &\ge \frac{\sqrt K}{4}\sum_{i=1}^{d_P}\tau\epsilon_\pi d_R\PP\left(\sum_{k=1}^K\ind[\hat \bmu_i\neq\bmu^*_i]>\frac{K}{2}\right)\\
     &\ge \frac{d_P d_R\tau\epsilon_\pi\sqrt K}{16}\exp\left(-16-N\left(16d_R^2\epsilon_\pi^2+\frac{16}{K}\right)\right)
\end{align*}

Let $(\epsilon_P,\epsilon_\pi)$ be a problem instance parameterized by two perturbations. For $\epsilon>0$ and $\max_{r \in \cR} \EE_\sbb [V_{1;P^\star;r}^{\pi^E}(\sbb) - V_{1;P^\star;r}^{\pi}(\sbb)]\leq\epsilon$, for two hard instances parameterized by $\epsilon$: $(1/(4d_P),\epsilon)$ and $(\epsilon,1/(4d_R))$, the lower bound should be able to apply to both instances. For instance $(\epsilon,1/(4d_R))$, the lower bound for $N$ and $K$ is as follows:
\begin{align*}
    N\gtrsim\Omega\left(\tau^2d_R^2\right),\quad K\gtrsim\Omega(\tau^2d_P^2\exp(-N)/\epsilon^2).
\end{align*}
For instance $(1/(4d_P),\epsilon)$ the lower bound for $N$ and $K$ is as follows:
\begin{align*}
    N\gtrsim\Omega\left(\tau^2/\epsilon^2\right),\quad K\gtrsim\Omega(\tau^2\epsilon^2 d_R^2d_P^4\exp(-N/d_P^2)).
\end{align*}
Combining two lower bounds above, the lower bound must hold for both hard instances:
\begin{align*}
    N\gtrsim\Omega\left(\max\left\{\tau^2/\epsilon^2,\tau^2d_R^2\right\}\right),
     K\gtrsim\Omega\left(\max\left\{\tau^2d_P^2\exp(-N)/\epsilon^2,\tau^2\epsilon^2 d_R^2d_P^4\exp(-N/d_P^2)\right\}\right).
\end{align*}

Consider the variance of the cumulative rewards:
\begin{align*}
    \text{VaR}_{\pi,r}&=\EE\!\left[\Big(\sum_{h=1}^H r(s_h, a_h)\Big)^2\right]  
       - \Big(\EE[\,V_{1;P^\star;r}^\pi(s)\,]\Big)^2\\
       &\leq\mathbb{E}\left[P(\boxed{\sbb,1}|\boxed{\sbb,0},\ab)\tau^2\EE\left(\textstyle{\sum_{h=1}^{H}}~\tfrac{1}{2H}+\tfrac{a_R}{2d_RH}\langle\tilde\btheta,\sbb\rangle\right)^2\right]\\
       &=\left(\tfrac{1}{2}+\epsilon_\pi\epsilon_P d_R\langle\hat\bmu,\bmu^*\rangle\right)\tau^2\EE\left(\tfrac{1}{4}+\tfrac{a_R}{2d_R}\langle\tilde\btheta,\sbb\rangle+\tfrac{1}{4d^2_R}\langle\tilde\btheta,\sbb\rangle^2\right),
\end{align*}
where we use $a_R^2=1$ in the last line. Let $\epsilon_P\leq1/(2d_P)$ and $\epsilon_\pi\leq1/d_R$, we have:
\begin{align*}
    \text{VaR}_{\pi,r}\leq\tau^2\EE\left(\tfrac{1}{4}+\tfrac{a_R}{2d_R}\langle\tilde\btheta,\sbb\rangle+\tfrac{1}{4d^2_R}\langle\tilde\btheta,\sbb\rangle^2\right)\leq\tau^2.
\end{align*}
Therefore, the lower bound for $N$ and $K$ can be reformulated using the variance of the accumulated rewards:
\begin{align*}
    &N\gtrsim\Omega\left(\max\left\{\text{VaR}_{\pi,r}/\epsilon^2,\text{VaR}_{\pi,r}d_R^2\right\}\right),\\
     &K\gtrsim\Omega\left(\max\left\{\text{VaR}_{\pi,r}d_P^2\exp(-N)/\epsilon^2,\text{VaR}_{\pi,r}\epsilon^2 d_R^2d_P^4\exp(-N/d_P^2)\right\}\right).
\end{align*}
Substituting $\text{VaR}_{\pi,r}$ with $\sigma^2$, together with $d_R=\Theta(\log|\cR|)$ and $d_P=\Theta(\log|\cP|)$, completes the proof.
\end{proof}

\section{Practical Implementation}
\label{sec:practical}
In this section, we present a practical implementation of MB-AIL, an optimization-based approach with neural network parameterizations for the policy $\pi_\theta$, reward $r_\psi$, and transition model ensemble $\{P_{\xi_i}\}_{i=0:D-1}$. For reward learning, the reward model is optimized using the following training objective:
\begin{equation}
\label{eq:reward-obj}
    \min_\psi\mathcal{L}_R(\psi)\coloneq\EE_{(s,a)\sim \cB^\pi} ~r_\psi(s,a)-\EE_{(s,a)\sim \cB^E} ~r_\psi(s,a)+\alpha~\phi(r),
\end{equation}
where $\cB^\pi$ and $\cB^E$ denote the replay buffers for the policy and the expert, respectively. The term $\phi(r)$ is a regularization function, for which we adopt the gradient penalty~\citep{arjovsky2017wasserstein}. Then for each model $P_{\xi_i}$ in the ensemble $\{P_{\xi_i}\}_{i=0}^{D-1}$, we update it using the maximum likelihood objective conducted on data sampled jointly from the expert and behavioral replay buffer:
\begin{equation}
\label{eq:model-obj}
    \min_{\xi_i}\mathcal{L}_P(\xi_i)=-\EE_{(s,a,s')\sim\mathcal{B}^E\cup\mathcal{B}^\pi}~\log P_{\xi_i}(s'|s,a).
\end{equation}
The practical implementation of the algorithm is summarized in Algorithm~\ref{alg:mbail-practical}. In Line 9, we employ Soft Actor-Critic (SAC)~\cite{haarnoja2018soft} as the RL algorithm.
\begin{algorithm}[t]
    \caption{Model-based AIL (Practical)}
    \label{alg:mbail-practical}
    \begin{algorithmic}[1]
    \REQUIRE Number of iterations $K$, discount factor $\gamma$, expert dataset $\mathcal{D}_E$, policy $\pi_\theta$, model ensemble $\{P_{\xi_i}\}_{i=0:D-1}$ with ensemble size $D$ and reward model $r_\psi$.
    \FOR{$k = 1, 2, \dots, K$}
        % \IF{$k \bmod m = 1$}
            % \STATE Sample $k' \sim \text{Uniform}(\max(1, k - m + 1), \dots, k)$
        \STATE Collect trajectories using current policy $\pi_\theta$ and populate behavioral buffer $\mathcal{B}^\pi$. 
        \STATE Sample behavioral and expert batches $B^\pi,B^E\sim\mathcal{B}^\pi,\mathcal{B}^E$.
        \STATE Update $r_\psi$ using Eq.~\ref{eq:reward-obj}.
        \STATE Update model ensemble $\{P_{\xi_i}\}_{i=0:D-1}$ using $B^\pi\cup B^E$ using Eq.~\ref{eq:model-obj}.
        \STATE Collect $D$ model rollouts $\{\tau_d\}_{d=0:D-1}$ using $\{P_{\xi_i}\}_{i=0:D-1}$ and $\pi_\theta$.
        \STATE Estimate the value of each rollout $V(\tau_d)=\sum_t\gamma^t r_\psi(s_t)$. 
        \STATE Select the rollout with $\tau^\star=\text{argmax}~V(\tau_d)$ and store it into the model buffer $\mathcal{B}^M$.
        \STATE Train the RL algorithm on the model buffer $\mathcal{B}^M$ using the reward model $r_\psi$ to update the policy $\pi_\theta$.
    \ENDFOR
    \end{algorithmic}
\end{algorithm}
\label{sec:practical-alg}
% \section{Empirical Results}
% \label{sec:emp-results}

\section{GridWorld Experiments}
\label{sec:grid-world}

\paragraph{Environment Setup} We conduct a toy experiment on a $9 \times 9$ GridWorld environment, illustrated in Figure~\ref{fig:gridworld}. Rewards of value 1 are available only in the top-right corner of the grid. The state space corresponds to the $9 \times 9$ grid, and the action space is $\mathcal{A}=\{\text{up},\text{down},\text{left},\text{right}\}$. The environment dynamics are stochastic: when an action is taken (e.g., \emph{up}), the agent transitions to the intended neighboring cell with probability $p$; each of the other three neighboring cells is selected with probability $\tfrac{1}{3}(1-p)$. Each episode lasts 20 steps, and performance is measured by the accumulated reward. We use a single expert trajectory with a total reward of $8.0$ across all experiments. We initialize the agent in the area $\{(0,0),(0,1),(1,0),(1,1)\}$.

We evaluate behavioral cloning (offline) and online imitation learning in this environment in order to analyze the benefits of online exploration in the context of imitation learning. Both BC and online IL agents do not observe rewards directly but have access to the expert trajectory. For BC, the agent mimics expert actions in states encountered in the trajectory, and acts randomly elsewhere. For online IL, we maintain both a Q-table and a reward table, initialized with zeros. The Q-learning agent explores using $\epsilon$-greedy with actions chosen as $a=\arg\max Q(s,a)$. Whenever the agent encounters a state-action pair from the expert trajectory, it assigns $r(s,a)=1$ in its reward table; otherwise, the reward is set to 0. The Q-table is updated from visited experiences using this reward table.

\begin{figure}[t]
  \centering
  % First figure
  % \begin{minipage}{0.32\textwidth}
  %   \centering
  %   \includegraphics[width=\linewidth]{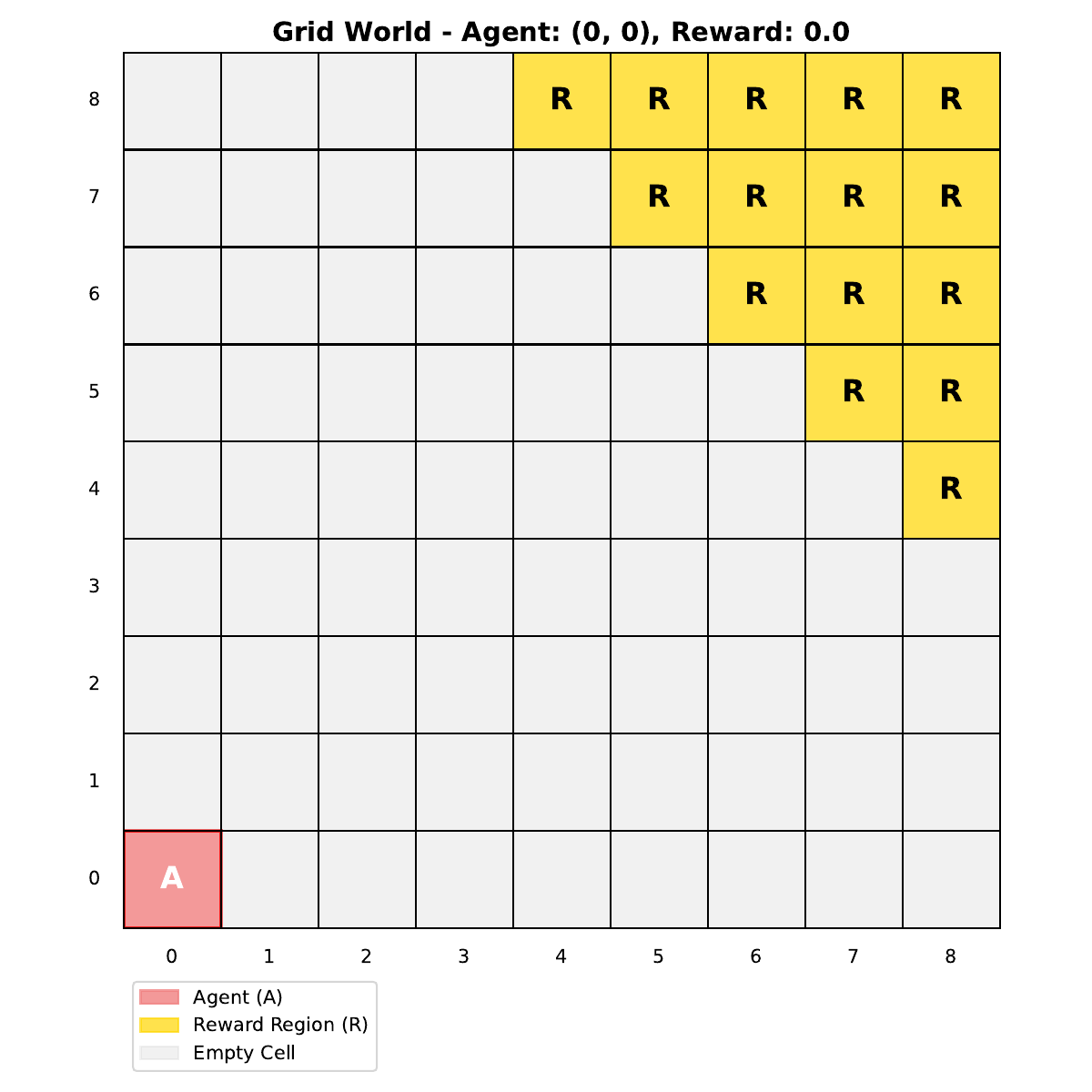}
  %   % \caption*{(a) GridWorld}
  % \end{minipage}
  % \hfill
  % Second figure
  \begin{minipage}{0.49\textwidth}
    \centering
    \includegraphics[width=\linewidth]{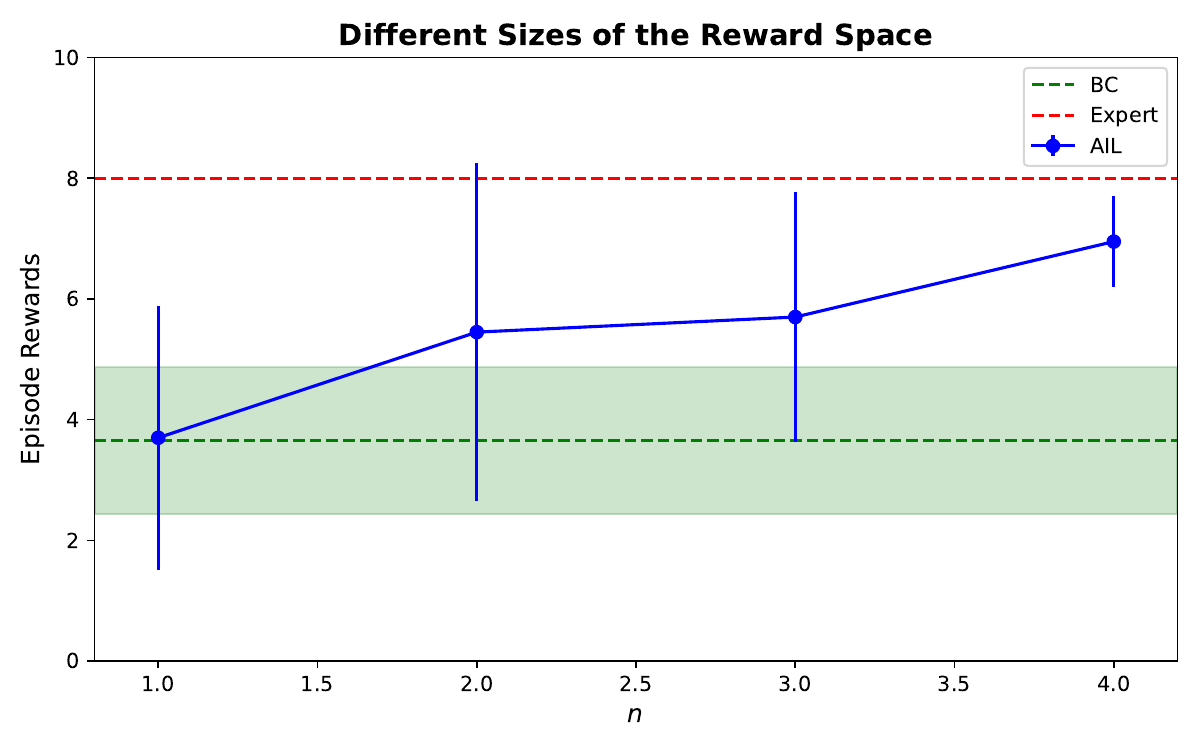}
    % \caption*{(b) Different Reward Spaces}
  \end{minipage}
  \hfill
  % Third figure
  \begin{minipage}{0.49\textwidth}
    \centering
    \includegraphics[width=\linewidth]{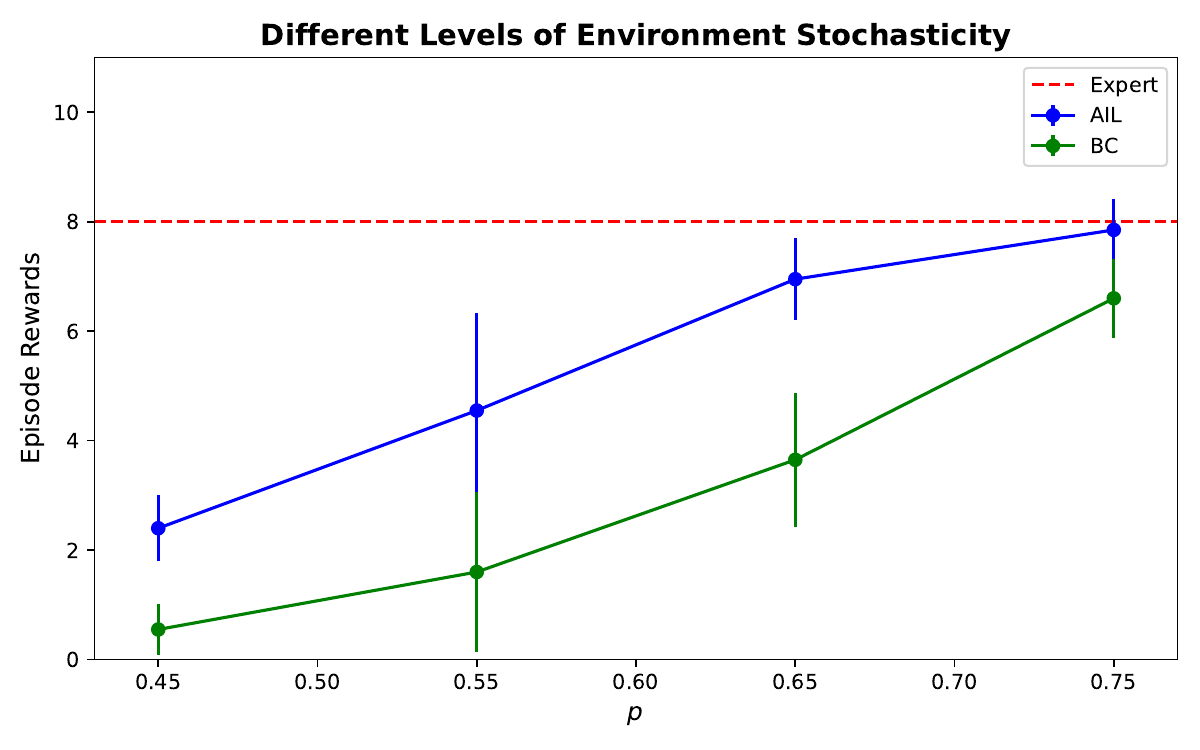}
    % \caption*{(c) Different Environment Stochasticity}
  \end{minipage}
  
  \caption{\textbf{Results for GridWorld Experiments.} We present results analyzing the impact of varying reward space sizes and different levels of environment stochasticity on adversarial imitation learning and behavioral cloning, reported over 5 random seeds.}
  \label{fig:gridworld-results}
\end{figure}

\paragraph{Experiment Settings} We evaluate two settings in the GridWorld experiment. The first examines the effect of reward space size. Specifically, we partition the reward table into $n \times n$ regions, assigning the same reward value within each region. Thus, the reward space size scales as $|\mathcal{R}| =(\lceil N/n\rceil)^2$, where $N=9$. We fix $p=0.65$ and test $n = 1,2,3,4$ to study the impact of varying reward space sizes.
\begin{wrapfigure}{r}{0.5\linewidth} 
    \centering
    \vspace{-0.5cm}
    \includegraphics[width=0.85\linewidth]{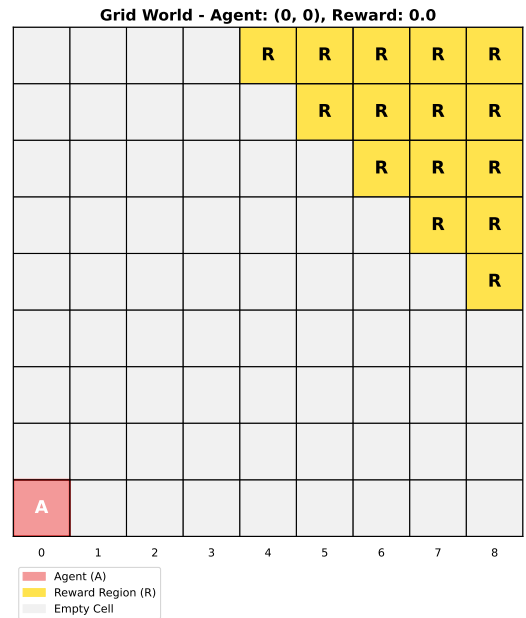}
    \caption{\textbf{GridWorld Illustration.} An illustration of the $9 \times 9$ GridWorld with the agent initialized at $(0,0)$ is shown.}
    \vspace{-0.3cm}
    \label{fig:gridworld}
\end{wrapfigure}
The second setting investigates the effect of environment stochasticity on imitation learning performance. We fix $n=4$ and control stochasticity by adjusting the transition probability $p$: a larger $p$ corresponds to a more deterministic environment. We experiment with $p = 0.45,0.55,0.65,0.75$ to analyze this effect.

\paragraph{Main Results} We present the empirical results of the two experiment settings in Figure~\ref{fig:gridworld-results}. For each configuration, we evaluate 10 episodes for each of the 5 random seeds and report the mean and standard deviation. In the reward space experiment, when the reward space is large, online IL performs similarly to BC. However, as the reward space becomes smaller (larger $n$), online IL achieves significantly better performance, approaching the optimal value, which is consistent with our theoretical upper bound in Theorem~\ref{thm:ub}. In the stochasticity experiment, both online IL and BC improve as the environment becomes more deterministic. Across all stochasticity levels, online IL consistently outperforms BC.

{
\paragraph{Results on the Model Misspecification} In order to evaluate the behavior of online IL when Assumption~\ref{assumption:realizability} is mildly violated, we conduct an experiments to evaluate the impact of model misspecification. Specifically, we randomly sample $m$ states in GridWorld as misspecified states. During Bellman updates, whenever a transition leads to one of these misspecified states, we manually overwrite the next state as $(0,0)$, thereby introducing controlled transition model misspecification. The results show that the algorithm maintains strong performance when the proportion of misspecified states is small, but its performance degrades as the degree of misspecification increases. The detailed results are shown in Figure~\ref{fig:misspecified}.}

\begin{figure}[h]
\centering
\includegraphics[width=0.6\linewidth]{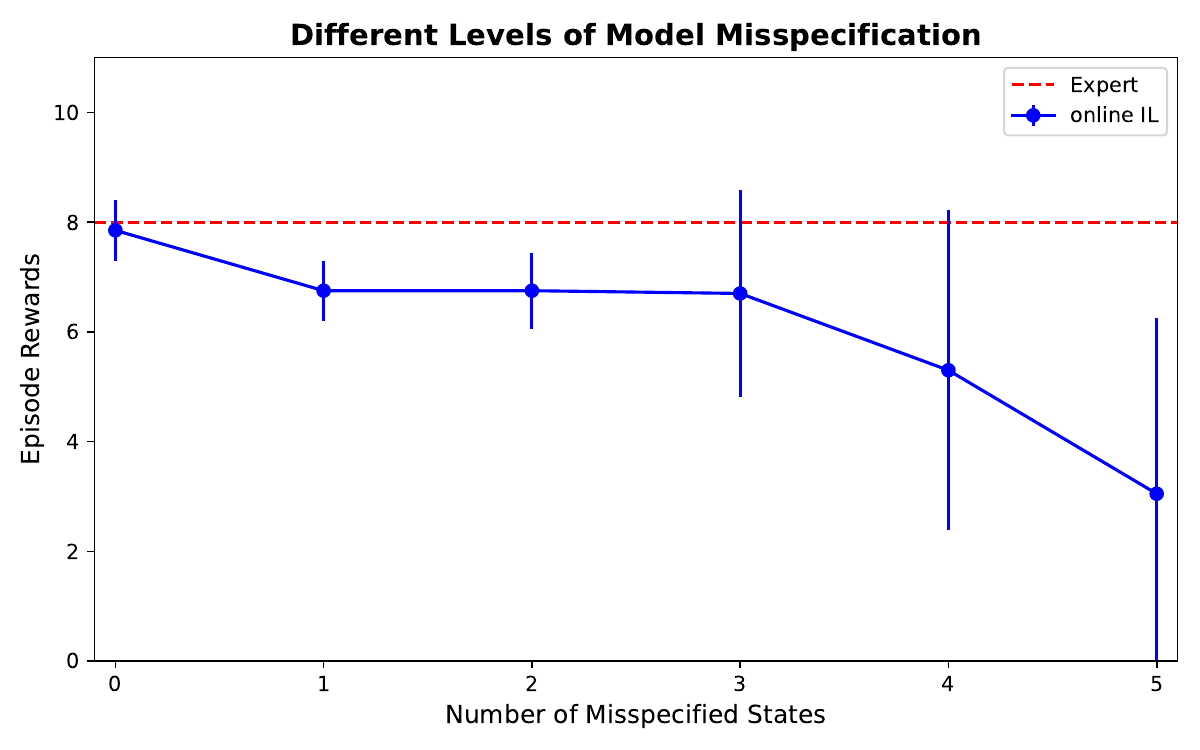}
\caption{\textbf{Results on the Model Misspecification} We show the performance of online imitation learning when some of the states in the GridWorld will lead to misspecification. Results show that the algorithm maintains strong performance when the proportion of misspecified states is small, but its performance degrades as the degree of misspecification increases. The results are reported using 5 random seeds.}
\label{fig:misspecified}
\end{figure}

\section{Details of the MuJoCo Experiments}
\label{sec:details-mujoco}
\paragraph{Hyperparameter Setting} We summarize the hyperparameter settings used in our experiments below:
\begin{itemize}
    \item Model ensemble size is 7 for all tasks.
    \item Model optimization: learning rate $3\times10^{-4}$, weight decay $5\times10^{-5}$, batch size 256.
    \item Policy parameterization: stochastic Gaussian policy.
    \item Discriminator: ensemble size 7, trained with batch size 4096 and learning rate $8\times10^{-4}$.
    \item SAC training: learning rate $3\times10^{-4}$, batch size 256, entropy coefficient $\alpha=0.2$ for all tasks.
    \item Network architecture: hidden size 400 for the model network, and 512 for both the discriminator and policy networks.
\end{itemize}
\paragraph{Environment Details} The specifications of the environments are summarized in Table~\ref{tab:env-spec}.
\begin{table}[h]
    \centering
    \begin{tabular}{c|cc}
    \toprule
        Environment & Observation Dimension & Action Dimension \\
        \midrule
        Hopper & 11 & 3 \\
        Walker2d & 17 & 6 \\
        Humanoid & 45 & 17 \\
        \bottomrule
    \end{tabular}
    \caption{\textbf{Environment Details.} Observation and action dimensions for each environment. The Humanoid task features a higher-dimensional state and action space, making it significantly more challenging for imitation learning compared to the lower-dimensional Hopper and Walker2d tasks.}
    \label{tab:env-spec}
\end{table}

\paragraph{Baseline Methods} 
For the OPT-AIL baseline~\citep{xu2024provably}, we adopt the official implementation provided in \href{https://github.com/LAMDA-RL/OPT-AIL}{their repository}.  
For the GAIL baseline~\citep{ho2016generative}, we use the open-source implementation available at \href{https://github.com/toshikwa/gail-airl-ppo.pytorch}{this repository}.  
For the BC baseline, we train a two-layer MLP with a hidden dimension of 256 on the expert dataset for direct behavioral cloning.

\stopcontents[appendices]

\section*{Use of Large Language Models}
We used LLMs solely as a writing assistant for minor grammar and phrasing corrections during manuscript preparation. LLMs were not involved in research ideation, experiment design, data analysis, or result interpretation.

\end{document}